\newcommand{\best}[1]{#1^{*}}
\newcommand{\sset}{\m{S}}
\newcommand{\bestx}{\v{s}}
\newcommand{\bestf}{\best{f}}
\newcommand{\psuff}{\Psi}
\newcommand{\smax}{\sigma}
\newcommand{\kmax}{\smax^2}
\newcommand{\noise}{\gamma}
\newcommand{\lip}{L_k}
\newcommand{\rtol}{\epsilon}
\newcommand{\ptol}{\delta}
\newcommand{\ptolM}{\ptol_{\mathrm{mod}}}
\newcommand{\ptolE}{\ptol_{\mathrm{est}}}
\newcommand{\crit}{\text{PRB}}
\newcommand{\RV}{Z}
\newcommand{\rv}{z}
\newcommand{\sampleMean}{\overline{\RV}}
\newcommand{\sampleStd}{S}
\title{Stopping Bayesian Optimization with \\ Probabilistic Regret Bounds}
\author{
    James T. Wilson\\
    Morgan Stanley, New York, USA\\
    \texttt{james.t.wilson@morganstanley.com}
}
\begin{document}
\maketitle

\begin{abstract}
Bayesian optimization is a popular framework for efficiently tackling black-box search problems. As a rule, these algorithms operate by iteratively choosing what to evaluate next until some predefined budget has been exhausted. We investigate replacing this de facto stopping rule with criteria based on the probability that a point satisfies a given set of conditions.
We focus on the prototypical example of an $(\epsilon, \delta)$-criterion: \emph{stop when a solution has been found whose value is within $\epsilon > 0$ of the optimum with probability at least $1 - \delta$ under the model.} 
For Gaussian process priors, we show that Bayesian optimization satisfies this criterion under mild technical assumptions. 
Further, we give a practical algorithm for evaluating Monte Carlo stopping rules in a manner that is both sample efficient and robust to estimation error. 
These findings are accompanied by empirical results which demonstrate the strengths and weaknesses of the proposed approach.
\end{abstract}

\section{Introduction}
\label{sec:introduction}

In the real world, we are often interested in finding high-quality solutions to black-box problems. Many of these problems are not only expensive to solve but difficult to reason about without extensive background knowledge---such as discovering new chemicals \cite{griffiths2020constrained}, designing better experiments \cite{von2019optimal}, or configuring machine learning algorithms \cite{snoek2012practical}.

A common approach is therefore to construct models for these problems and use them to predict real-world outcomes. In recent years, Bayesian optimization (BO) has emerged as a leading approach for accomplishing these tasks. Precise definitions vary, but BO methods are frequently characterized by their use of probabilistic models to guide the search for good solutions. The idea is for these models to provide distributions over the performance of competing alternatives, which can then be used to simulate the usefulness of evaluating different things. For a recent review, see \citet{garnett2023bayesian}.

Despite the success of these algorithms, an ongoing issue for practitioners has been the continued lack of interpretable stopping rules. The vast majority of BO runs proceed until a predetermined budget (e.g., a number of evaluations or amount of resources) is exhausted. We highlight two likely reasons for this trend and then give a brief prospective for model-based alternatives. The first reason is that stopping rules often revolve around quantities like optimums that are difficult to work with, even when defined under a model. The second is that even the best models sometimes go astray; and, if the model is bad, then model-based stopping is liable to stop much too soon or far too late. To avoid potential disappointment, let us say upfront that this work addresses the former challenge and only provides mild commentary on the latter. We will revisit this topic in the closing sections.

At the same time, we argue there is much to be gained by using models to help us decide whether a given solution is "good-enough" for its intended purpose \cite{simon1956rational}. One benefit of model-based stopping is its ability to adapt to the data. Sometimes, we will get lucky and stumble upon good solutions early on. Other times, our progress will be slow. If the model captures these events, then stopping can be tailored to each run. Another benefit of model-based stopping is its ability to simplify the user experience by asking us to specify what we wish to find instead of how much we wish to spend.

The basic idea we pursue is that, if we can simulate whether a solution is good-enough, then we can stop once we find one that probably is.
We focus on a prominent example of this framework, but stress that much of what follows holds for different choices of models and conditions. In particular, we investigate the setting where the user deems a solution sufficient if its performance is within $\rtol > 0$ of the optimum with probability at least $1 - \ptol$ under the model. 

Our primary contributions are to: i) combine recent work on scalable sampling techniques with algorithms for cost-efficient statistical testing; ii) show how the resulting estimators can be used as the basis for robust stopping rules; and, iii) introduce the first model-based stopping rule for BO with convergence and performance guarantees (up to model error).

The remaining text is organized as follows. \cref{sec:background} presents notation background material. \cref{sec:method} introduces the proposed stopping rule and evaluation strategy. \cref{sec:analysis} analyzes this algorithm's convergence and correctness. Finally, \cref{sec:experiments} investigates its empirical performance under idealized and realistic circumstances.

\section{Background}
\label{sec:background}

We use boldface symbols to indicate vectors (lowercase) and matrices (uppercase). Given a sequence $(\v{a}_i)$, we denote $\smash{\v{a}_n = [a_1, \ldots, a_n]^\top}$. Likewise, for a function $f: \c{X} \to \R$, we use the shorthand  
$f(\m{X}_n) = [f(\v{x}_1), \ldots f(\v{x}_n)]^\top$. By minor abuse of notation, we sometimes treat, e.g., $\m{X}_n$ as a set.

We focus on the task of sequentially querying a function $f :\c{X} \to \R$ in order to find a point $\v{x} \in \c{X}$ whose value $f(\v{x})$ is within $\rtol > 0$ of the supremum. Such a point is said to be \emph{$\rtol$-optimal} if this condition holds and \emph{$(\rtol, \ptol)$-optimal} if it holds with probability at least $1 - \ptol$. Throughout, we write $(\v{x}_t)$ for the sequence of query locations.

At any given time $t \in \N_0$, our understanding of the target function's behavior is driven by domain knowledge and any data that we have already collected. We combine this information with the help of a Bayesian model by placing a prior on $f$ and defining an observation model. 
Different types of models are eligible and techniques introduced in the sequel simply require that we are able to simulate the chosen stopping conditions (e.g., $\rtol$-optimality). We focus on the most popular family of models in this setting: Gaussian processes.

A Gaussian process (GP) is a random function $f\!: \!\c{X} \to \R$ such that, for any finite set $\m{X} \subseteq \c{X}$, the random variable $f(\m{X}) \in \R^{\abs{\m{X}}}$ is Gaussian in distribution. 
We write $f \sim \c{GP}(0, k)$ for a centered GP with covariance $k: \c{X} \times \c{X} \to \R$ and model 
observations as function values corrupted by independent Gaussian noise, i.e. $y(\m{X}_t) \given f(\m{X}_t) \sim \c{N}\del{f(\m{X}_t), \noise^2 \m{I}}$. Conditional on $y(\m{X}_t)$, we therefore believe that $f$ is distributed as
$f_t \sim \c{GP}(\mu_t, k_t)$, where $\m{\Lambda} = k(\m{X}_t, \m{X}_t) + \noise^2\m{I}$ is used to define
\begin{align}
\label{eqn:posterior_moments}
&\mu_{t}(\v{x}) = 
    k(\v{x}, \m{X}_t)\m{\Lambda}^{-1}y(\m{X}_t)
&
&k_{t}(\v{x}, \v{x}') = 
    k(\v{x}, \v{x}')- k(\v{x}, \m{X}_t)\m{\Lambda}^{-1}k(\m{X}_t, \v{x}')
.
\end{align}
Finally, we assume that $\c{X}$ is compact and that $\mu_t$ and $k_t$ are both continuous so their limits are attained on $\c{X}$. Among other things, this assumption allows us to write $\bestx_t \in \argmax_{\v{x} \in \sset_t} \mu_{t}(\v{x})$ for a preferred solution at time $t$, where $\sset_t$ is either the set of evaluated points $\m{X}_t$ or the search space $\c{X}$.

\section{Method}
\label{sec:method}

\begin{figure}
\includegraphics[width=\textwidth]{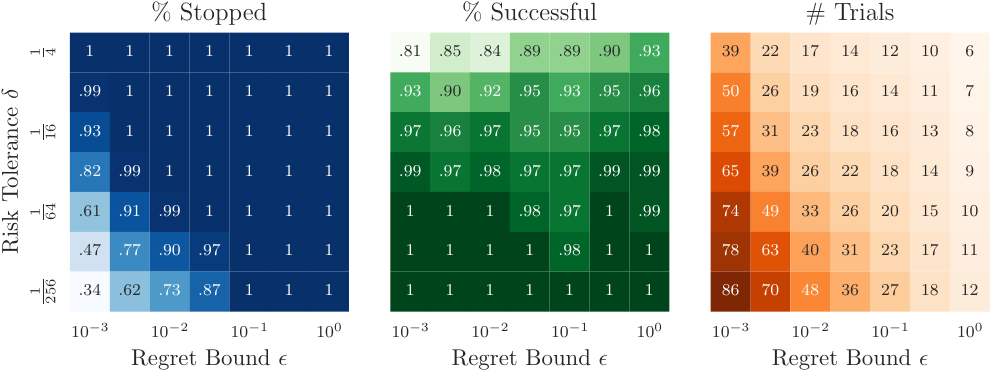}
\vspace{-2.5ex}
\caption{
Overview of \crit{} stopping behavior when $f: [0, 1]^2 \to \R$ is drawn from a model with noise variance $\noise^2 = 10^{-4}$. 
Regret bounds $\rtol > 0$ dictate how close $f(\v{x})$ must be to the optimum $\bestf$ for $\v{x} \in \c{X}$ to be satisfactory. Tolerances $\ptol > 0$ upper bound the chance of returning an unsatisfactory point.
\emph{Left:} 
    Percent of runs that stopped before time $T=128$.
\emph{Middle:} 
    Percent of stopped runs that returned $\rtol$-optimal points.
\emph{Right:} 
    Median number of trials performed by stopped runs.
}
\label{fig:simulations_2d}
\end{figure}

Suppose Bayesian optimization terminates at time $t \in \N_0$ and returns a point $\v{x} \in \c{X}$ as the solution. Our \emph{regret} for having returned this point is defined as the distance between $f(\v{x})$ and the optimum.
Under the model $f_t$, this (simple) regret manifests as a random variable
\begin{align}
\label{eqn:model_based_regret}
&r_t(\v{x}) = f_t^* - f_t(\v{x})    
&
& f_t^* = \sup_{\v{x} \in \c{X}} f_t(\v{x}).
\end{align}
Given a regret bound $\rtol > 0$ and a risk tolerance $\ptol > 0$, we would like to stop searching once we have found a point so that $r_t(\v{x}) \le \rtol$ with probability at least $1 - \ptol$ 
and refer to this stopping rule as a \emph{probabilistic regret bound} (\crit{}). Probabilities of this sort are usually intractable and we will therefore estimate them via sampling. To this end, we denote the probability that a point $\v{x}$ is $\rtol$-optimal and an associated Monte Carlo estimator by
\begin{align}
\label{eqn:psuff}
&\psuff_t(\v{x}) = \P(r_t(\v{x}) \le \rtol) 
&
&\psuff_t^n(\v{x}; \rtol) 
= 
    \frac{1}{n} \sum_{i=1}^{n} 
    \1\del{r_{t}^i(\v{x}) \le \rtol},
\end{align}
where $r_{t}^i(\v{x})$ is the $i$-th independent draw of the model-based regret \eqref{eqn:model_based_regret}. We will shortly explore how to construct estimators $\psuff_t^n(\v{x})$ and use them to decide whether $\psuff_t(\v{x})$ is above or below a level $\lambda \in \R$ in a manner that is both cost efficient and robust to estimation errors. First, however, let us introduce some basic terminology that will help us reason about potential failure modes.

We will say that the estimator produces a \emph{false positive} if $\psuff_t^n(\v{x}) \ge \lambda > \psuff_t(\v{x})$ and a \emph{true positive} if $\psuff_t^n(\v{x}) \ge \lambda \land \psuff_t(\v{x}) \ge \lambda$. Since either scenario may lead to an unsatisfactory solution, the level $\lambda$ we compare against must exceed $1 - \ptol$. Accordingly, let $\ptolM$ and $\ptolE$ be nonzero probabilities such that $\ptolM + \ptolE \le \ptol$. By defining $\lambda = 1 - \ptolM$, we will use $\ptolM$ to limit the chance that a point $\v{x}$ is not $\rtol$-optimal even though $\psuff_t^n(\v{x})$ produced a true positive. In contrast, we will use $\ptolE$ to control the probability of encountering a false positive (see \cref{sec:method_testing}). This pattern guarantees that if $\psuff_t^n(\v{x}) \ge \lambda$, then $\v{x}$ is $\rtol$-optimal with probability at least $1 - \ptol$ under the model.

\begin{wrapfigure}{R}{0.45\textwidth}
\vspace{-6.5mm} 
\begin{minipage}{0.45\textwidth}
\begin{algorithm}[H]
\caption{BO with Monte Carlo PRB}
\label{alg:bayesopt}
\input{assets/alg_bayesopt}
\end{algorithm}
\end{minipage}
\vspace{-6mm}
\end{wrapfigure}

\cref{alg:bayesopt} sketches a typical BO loop with the proposed stopping rule. At each iteration, we obtain a model for the data. We then select candidate solutions $\m{C} \subseteq \c{X}$ and estimate their probabilities of being $\epsilon$-optimal under the model. If an estimate is greater than $1 - \ptolM$, then the corresponding point satisfies the stopping conditions with probability at least $1 - \ptol$ and we terminate; otherwise, we press on. 

The rest of this section examines two key questions: how to simulate model-based regrets $r_t(\v{x})$ when $\abs{\c{X}}$ is large (or infinite) and how to avoid false positives due to estimation error. \cref{sec:practical} explores related topics such as how to choose $\m{C}$ and schedule $\ptolE^t$.

\cref{fig:simulations_2d} shows how the proposed algorithm behaves for different choices of $\rtol$ and $\ptol$. Data was generated by running BO a hundred times and sampling $r_t(\bestx_t)$ a thousand times per step using the strategy from \cref{sec:method_sampling}. Stopping decisions were then made by comparing estimators $\psuff_t^{n}(\bestx_t)$ with $\lambda = 1 - \ptolM$, where $\ptolM = \nicefrac{\ptol}{2}$. These results do not take advantage of the testing paradigm introduced in \cref{sec:method_testing}, but accurately reflects the algorithm's behavior. In particular, we see that the number of function evaluations performed by each run automatically adapts to the definition of $(\rtol,\ptol)$-optimality.

\subsection{How to simulate stopping conditions}
\label{sec:method_sampling}

\begin{figure}[t]
\includegraphics[width=\textwidth]{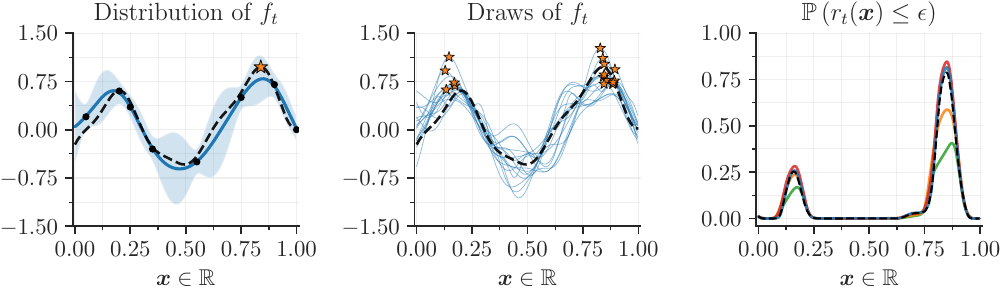}
\vspace{-2.5ex}
\caption{
\emph{Left:} 
    Posterior mean and two standard deviations of $f$ (blue) given eight noisy observations (black dots). The goal is to find a point $\v{x} \in \c{X}$ whose true function (black) value is within $\rtol > 0$ of the optimum $f^*$ (orange star).
\emph{Middle:} 
    Draws of $f_t \sim \c{GP}(\mu_t, k_t)$ and $\bestf_t$ (orange stars).
\emph{Right:} 
    Estimators for $\psuff_t$. 
    Ground truth (dashed black) was established using location-scale sampling on a dense grid.
    The \textcolor{Blue}{joint-sampling strategy} from \cref{sec:method_sampling} is shown in blue. Competing methods analytically integrated out $f_t(\v{x}) \given f_t^{*}$ by approximating it with: \textcolor{Orange}{$f_t(\v{x})$},  \textcolor{Green}{$f_t(\v{x}) \given f_t(\v{x}) \le f_t^*$}, or \textcolor{Red}{$f_t(\v{x}) \given f_t(\v{x}) \le f_t^* \land f_t(\v{x}_t^*) = f_t^*$} where $f_t^*$ and $\v{x}_t^* \in \argmax_{\v{x} \in \c{X}} f_t(\v{x})$ were jointly sampled.
}
\label{fig:estimator_overview}
\end{figure}
This section describes how to simulate whether a point $\v{x} \in \c{X}$ satisfies the chosen stopping conditions. For \crit{}, this amounts to sampling Bernoulli random variables $\1\del{r_{t}(\v{x}) \le \rtol}$. We propose to generate this term by maximizing draws of $f_t$. When dealing with parametric models, function draws are obtained by sampling parameter vectors. For GPs, analogous logic may be enacted by using a parametric approximation to the prior \cite{wilson2021pathwise}, as outlined below. This approximate sampling step is necessary because the time complexity for exactly simulating $f_t(\m{X})$ scales cubically in $\abs{\m{X}}$.

Let $\v{\phi}: \c{X} \to \R^{m}$ be a finite-dimensional feature map so that, $\forall \v{x}, \v{x}' \in \c{X}$,  $\v{\phi}(\v{x})^\top\v{\phi}(\v{x}') \approx k(\v{x}, \v{x}')$. Note that feature maps of this sort are readily available for many popular covariance functions \cite{rahimi2007random, wilson2021pathwise}. Equipped with such a map, we may approximate a prior $f \sim \c{GP}(0, k)$ with a Bayesian linear model 
\begin{align}
&\hat{f}(\.) = \v{\phi}(\.)^\top\v{w}
&
&\v{w} \sim \c{N}(\v{0}, \m{I})
.
\end{align}
Letting $\m{\Lambda} = k(\m{X}_t, \m{X}_t) + \noise^2\m{I}$ and $\v{\varepsilon} \sim \c{N}(\v{0}, \noise^2 \m{I})$, this linear model may be used to generate draws from an approximate posterior by sampling $\v{w}$ from the prior and using Matheron's rule to write \cite{wilson2020efficiently} 
\[
f_t(\.) \stackrel{d}{\approx}
    \hat{f}(\.) + k(\., \m{X}_t)\m{\Lambda}^{-1}\sbr[1]{\v{y}_t - \hat{f}(\m{X}_t) - \v{\varepsilon}}.
\]
For each draw of $f_t$, the remaining problem is now to evaluate $\1(r_t(\v{x}) \le \rtol)$. We suggest using multi-start gradient ascent. In our case, we performed an initial random search to identify promising starting locations and then used a quasi-Newton method \cite{liu1989limited} to optimize. A helpful insight is that we do not need to find $\bestf_t$ per se. Rather, it suffices to determine whether there exists a point $\v{x}' \in \c{X}$ such that $f_t(\v{x}') -  f_t(\v{x}) > \rtol$. 
This property can be exploited to accelerate simulating $\1(r_t(\v{x}) \le \rtol)$; however, its benefits wane as $\psuff_t(\v{x})$ increases because $r_t(\v{x}) \le \rtol$ implies that no such point $\v{x}'$ exists.

The right panel of \cref{fig:estimator_overview} compares different estimators for $\psuff_t$. For simplicity, assume that $f_t$ is sample continuous so that it almost surely attains its supremum on $\c{X}$. The goal of this plot is to highlight challenges inherent to conditioning on the maximum. We not only need to upper bound $f_t$, but also account for the point(s) at which the maximum is achieved. This explains why the red estimator $\E_{\v{x}_t^*, f_t^*}\sbr{\P(f_t^* - f_t(\v{x}) \le \epsilon \given f_t(\v{x}_t^*) = f_t^*, f_t(\v{x}) \le f_t^*)}$ outperforms the orange one $\E_{f_t^*}\sbr{\P(f_t^* - f_t(\v{x}) \le \epsilon)}$, while the green one $\E_{f_t^*}\sbr{\P(f_t^* - f_t(\v{x}) \le \epsilon \given f_t(\v{x}) \le f_t^*)}$ fails to do so. We opted to avoid these issues by sampling $f_t(\v{x})$ jointly with $f_t^*$ rather than marginalizing it out. The resulting blue estimator is seen to more accurately follow the gold standard shown in black.

Lastly, it should be said that the suggested sampling procedure introduces a yet-to-be-determined amount of error in practice, since draws of $f_t$ are not only approximate but non-convex. Initial results suggest these errors are small (see \cref{fig:estimator_overview}), however we leave this as a topic for future investigation.

\subsection{How to efficiently make robust decisions with Monte Carlo estimates}
\label{sec:method_testing}

This section discusses the general problem of using samples to decide whether the expectation of a random variable $\RV \in \cbr{0, 1}$ exceeds a level $\lambda \in [0, 1]$. For PRB, $\lambda = 1 - \ptolM$ and each evaluation of the stopping rule corresponds to a unique $Z = \1(r_t(\v{x}) \le \epsilon)$. We will show how to make probably-correct decisions using a minimal number of samples $n \in \N$. In doing so, we first discuss confidence intervals for $\E[Z]$ based on a collection of i.i.d. draws $\m{\RV}_n = \cbr{\rv_i: i = 1 \ldots, n}$.

There are many techniques for generating intervals that contain $\E[\RV]$ with (coverage) probability at least $1 - \ptolE$. 
\citet{clopper1934use} gave an exact recipe for constructing confidence intervals for Bernoulli random variables $Z$ as 
\begin{align}
\label{eqn:clopper_pearson}
\operatorname{CI}(\m{\RV}_n; \ptol) = \sbr{B\del{\tfrac{\ptol}{2}; k, n - k + 1}, B\del{1 - \tfrac{\ptol}{2}; k + 1, n - k}},
\end{align}
where $B$ denotes the beta quantile function and $k = \sum_{i=1}^{n} \rv_i$ is the number of successes in $n$ draws. It is also possible to take a Bayesian by placing a prior on $\E[Z]$. Differences between this Bayesian approach and \eqref{eqn:clopper_pearson} were observed to be minimal however, so we opted to avoid modeling $\E[Z]$. For further discussion, see \cref{sec:practical_interval}.

Given an estimate $\overline{Z}_n = \frac{1}{n}\sum_{i=1}^{n} z_i$ and a confidence interval $\c{I}_n = \operatorname{CI}(\m{\RV}_n; \ptolE)$, it follows that
\[
\label{eqn:ci_inference}
\E[\RV] \in \c{I}_n \t{ and } \lambda \not\in \c{I}_n
\implies 
\underbracket[0.5pt]{\1\del{\vphantom{\sampleMean}\E[\RV] \ge \lambda}}_{\t{ground truth}} = \underbracket[0.5pt]{\1\del{\sampleMean_n \ge \lambda}}_{\t{decision}}
.
\]
If $\c{I}_n$ collapses to a point as $n \to \infty$, then there exist sample sizes such that $\lambda \not \in \c{I}_n$, whereupon the conclusion from \eqref{eqn:ci_inference} holds with probability at least $1 - \ptolE$.\footnote{We may that assume $\P\del{\lambda = \E[Z]} = 0$. Here, $\E[Z] = \psuff_t(\v{x})$ is only random prior to observing $y(\m{X}_t)$.} Said differently, we can lower bound the probability that we correctly decide whether $\E[\RV] \ge \lambda$ by generating enough draws of $\RV$. With these details in mind, we now review an algorithm for adaptively choosing $n \in \N$ in order to make probably-correct decisions using as few samples as possible---which is crucial when simulating $Z$ is computationally intensive as in \cref{sec:method_sampling}.

\begin{figure}[t]
\centering
\includegraphics[width=\linewidth]{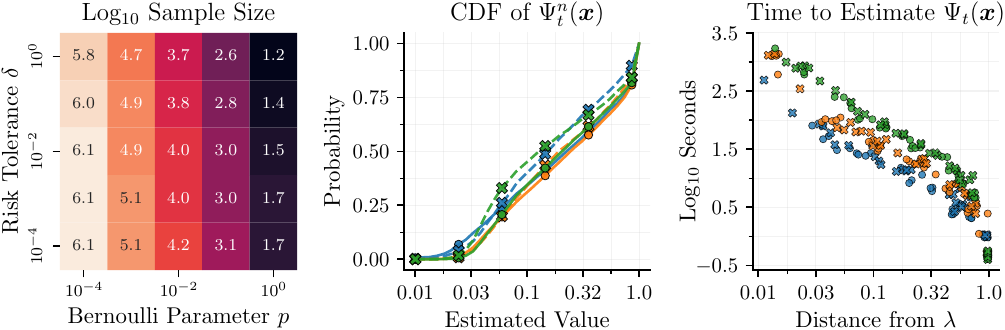}
\vspace{-2.5ex}
\caption{
    \emph{Left:} Median number of draws used by \cref{alg:prb_mc} to decide if the expectation of a Bernoulli random variable $\RV \sim \operatorname{Bern}(p)$ exceeds $\lambda = 10^{-5}$ (chosen arbitrarily).
    \emph{Middle:} Empirical CDFs of $\psuff_t^n$ when 
    optimizing draws from known priors $\c{GP}(0, k)$ in \textcolor{Blue}{two}, \textcolor{Orange}{four}, and \textcolor{Green}{six} dimensions with noise variance $\noise^2 = 10^{-6}$ (solid, $\circ$) or $\noise^2 = 10^{-2}$ (dashed, $\times$). \crit{} parameters were set to $\rtol=0.1$ and  $\ptolM=\ptolE=2.5\%$.
    \emph{Right:} Runtimes for \cref{alg:prb_mc} using the generative strategy from \cref{sec:method_sampling} and a (wall) time limit of roughly one thousand seconds. 
}
\label{fig:empirical_bernstein_overview}
\end{figure}

\begin{wrapfigure}{R}{0.45\textwidth}
\vspace{-5mm} 
\begin{minipage}{0.45\textwidth}
\begin{algorithm}[H]
\caption{Monte Carlo PRB}
\label{alg:prb_mc}
\input{assets/alg_prb}
\end{algorithm}
\end{minipage}
\vspace{-5mm}
\end{wrapfigure}

The general idea of \cref{alg:prb_mc} is to perform a series of tests (each using more samples than the last), until a confidence interval for $\E[Z]$ is narrow enough for a decision to be made. To better understand this, start by defining two sequences: sample sizes $(n_j)$ and risk tolerances $(d_j)$. The sizes should be increasing, while the tolerances should be positive and satisfy $\sum_{j=1}^{\infty} d_j \le \ptolE$. 

Next, imagine that we generate draws of $\RV$ in batches of size $n_{j} - n_{j - 1}$, where $n_{0} = 0$. At each round of sampling $j$, we construct an interval $\c{I}_{n_j}$ that contains $\E[\RV]$ with probability at least $1 - d_j$. If $\lambda \not\in \smash{\c{I}_{n_j}}$, we use $\smash{\sampleMean_{n_j}}$ to decide whether $\E[\RV] \ge \lambda$. Otherwise, we proceed to the next iteration.

Per \eqref{eqn:ci_inference}, this algorithm only makes an incorrect decision if the final interval fails to contain $\E[\RV]$. By definition of $(d_j)$ and the union bound however, the chance of any interval not containing $\E[\RV]$ is at most $\ptolE$. Hence, the algorithm makes the correct decision with probability at least $1 - \ptolE$.

\cref{alg:prb_mc} was inspired by bandit methods, such as \citet{mnih2008empirical} and references contained therein, who previously studied how concentration inequalities can be used to iteratively test whether
$
\P\del{\abs{\sampleMean_n - \E[\RV]} \le \rtol \E[\RV]} \ge 1 - \ptolE.
$
\cref{alg:prb_mc} is closer to \citet{bardenet2014towards} however, who used a similar strategy to decide whether to accept Metropolis-Hastings proposals based on subsampled estimates of the data log-likelihood.

Extending our earlier argument, let $(\ptolE^t)$ be a sequence of risk tolerances such that $\sum_{t=1}^{\infty} \ptolE^t \le \ptolE$. If \cref{alg:prb_mc} is run at each BO step $t$ with schedule $(d_j^t)$ such that $\sum_{j=1}^{\infty} d_j^t \le \ptolE^t$, then the chance of encountering a false positive at any step is bounded from above by $\ptolE$. Consequently, the decision to stop will be correct with probability at least $1 - \ptolE$.

We followed \citet{mnih2008empirical} by defining $d^t_j = j^{-\alpha} \frac{(\alpha - 1)}{\alpha} \ptolE^t$ and $n_j = \lceil \beta^{j - 1} N\rceil$. We set $\alpha=1.1$ so that $(d_j^t)$ decayed slowly, $\beta=1.5$ such that $(n_j)$ grew reasonably quickly, and $N=64$ because smaller starting values took longer to run. These choices impact the algorithm's runtime, not its validity. Using a geometric schedule for $(n_j)$ prevents $(d_j^t)$ from rapidly shrinking due to a large number of tests being performed with very few samples. In exchange, this schedule can lead to nearly $\beta$ times too many samples being requested.

The left panel of \cref{fig:empirical_bernstein_overview} shows how many samples \cref{alg:prb_mc} used to decide whether $\E[\RV] \ge \lambda$ for $\RV \sim \operatorname{Bern}(p)$. As $p \to 1$, the distance between $\sampleMean_n$ and $\lambda$ tends to increase and decisions can be made with wider confidence intervals constructed using fewer draws. As $\ptolE \to 1$, these intervals shrink and decisions can similarly be made using fewer samples. The middle panel visualizes the empirical CDF of estimates $\psuff_t^n$ from BO experiments described in \cref{sec:experiments}. For most of a typical BO run's life cycle, these estimates are far from $\lambda = 1 - \ptolM$ so decisions can be made efficiently. This pattern is reflected in the rightmost panel, which illustrates the savings provided by \cref{alg:prb_mc}.

\section{Analysis}
\label{sec:analysis}
We show that Bayesian optimization with the \crit{} stopping rule terminates under mild assumptions. Further, we prove that the given algorithm is correct in the sense that it returns an $(\rtol, \ptol)$-optimal point under the model.
We begin by discussing the assumptions made throughout this section, which are:
\begin{enumerate}[label=\textbf{A\arabic*.},ref=A\arabic*, itemsep=2pt,topsep=2pt]
\item 
    \label{ass:domain_compact}
    The search space $\c{X} = [0, 1]^D$ is a unit hypercube.
\item 
    \label{ass:kernel_lipschitz}
    There exists a constant 
    $\lip > 0$ so that, $\forall \v{x}, \v{x}' \in \c{X}$, $\abs{k(\v{x}, \v{x}) - k(\v{x},\v{x}')} \le \lip \norm{\v{x} - \v{x}'}_{\infty}$.
\item 
    \label{ass:queries_dense}
    The sequence of query locations $(\v{x}_{t})$ is almost surely dense in $\c{X}$.
\end{enumerate}

\BlackRef{ass:domain_compact} and \BlackRef{ass:kernel_lipschitz} guarantee it is possible for the maximum posterior variance to become arbitrarily small given a finite number of observations.
Note that if hyperparameters change over time, we only require that the (best) Lipschitz constant $L_k$ and noise variance $\noise^2$ do not grow without bounds as $t \to \infty$.
Combined with these assumptions, \BlackRef{ass:queries_dense} implies that, for any $C > 0$, there exists a time $T \in \N_0$ such that, $\forall t \ge T$, $\max_{\v{x} \in \c{X}} k_t(\v{x}, \v{x}) \le C$ with probability one. More generally, \BlackRef{ass:queries_dense} is necessary to ensure convergence when all we known is that $\c{X}$ is compact and $f$ is continuous \cite{torn1989global}.  

When \BlackRef{ass:domain_compact} and \BlackRef{ass:kernel_lipschitz} hold, popular strategies often produce almost surely dense sequences $(\v{x}_t)$. For instance, \citet{vazquez2010convergence} proved Probability of Improvement \cite{kushner1962versatile} and Expected Improvement \cite{saltenis1971method} exhibit this behavior for many covariance functions $k$ when $f$ is directly observed.
In \cref{sec:proofs}, we show that this result holds for continuous acquisition functions that value informative queries over unambiguous ones. This family includes well-known acquisition functions such as Knowledge Gradient \cite{frazier2009knowledge}, Entropy Search \cite{hennig2012entropy}, and variants thereof \cite{hernandez2014predictive, wang2017max}.
Finally, dense sequences can be guaranteed by introducing a small chance for queries to be selected at random from an appropriately chosen distribution \cite{sutton2018reinforcement}. 

We first prove that points which maximize the posterior mean eventually satisfy the \crit{} criterion and then use this result to demonstrate convergence and correctness.

\begin{restatable}{proposition}{thmConvergence}
\label{thm:convergence}
Under assumptions \BlackRef{ass:domain_compact}--\BlackRef{ass:queries_dense} and for all regret bounds $\rtol > 0$ and risk tolerances $\ptol > 0$, there almost surely exists $T \in \N_0$ so that, at each time $t \ge T$, every $\bestx_t \in \argmax_{\v{x} \in \c{X}} \mu_{t}(\v{x})$ satisfies
\begin{align}
\psuff_t(\v{x}; \rtol) = 
\P\del{r_t(\bestx_t) \le \rtol} \ge 1 - \ptol
.
\end{align}
\end{restatable}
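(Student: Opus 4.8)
The plan is to show that once the posterior variance has become uniformly small, every maximizer $\bestx_t$ of $\mu_t$ is already $(\rtol,\ptol)$-optimal, and then invoke the consequence of \BlackRef{ass:domain_compact}--\BlackRef{ass:queries_dense} noted above: for any $C > 0$ there almost surely exists $T$ with $\max_{\v{x} \in \c{X}} k_t(\v{x},\v{x}) \le C$ for all $t \ge T$. Condition on the data $y(\m{X}_t)$---so that $\mu_t$, $k_t$, $\bestx_t$ are deterministic and every probability below is conditional---write $f_t = \mu_t + g_t$ with $g_t \sim \c{GP}(0, k_t)$ centered, and use $\sup_{\v{x}}\mu_t(\v{x}) = \mu_t(\bestx_t)$ to bound the model-based regret pathwise,
\begin{align*}
r_t(\bestx_t) = f_t^* - f_t(\bestx_t)
&\le \sup_{\v{x} \in \c{X}}\mu_t(\v{x}) + \sup_{\v{x} \in \c{X}}g_t(\v{x}) - f_t(\bestx_t) \\
&= \sup_{\v{x} \in \c{X}}g_t(\v{x}) - g_t(\bestx_t).
\end{align*}
Since $r_t(\bestx_t) \ge 0$ and $\E[g_t(\bestx_t)] = 0$, taking expectations and applying Markov's inequality gives $\P(r_t(\bestx_t) \ge \rtol) \le \rtol^{-1}\,\E[\sup_{\v{x}} g_t(\v{x})]$. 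As this bound depends on $\bestx_t$ only through $\mu_t(\bestx_t) = \sup_{\v{x}}\mu_t(\v{x})$, it is uniform over the choice of maximizer, and it suffices to pick $C$ so small that $\max_{\v{x}}k_t(\v{x},\v{x}) \le C$ forces $\E[\sup_{\v{x}}g_t(\v{x})] \le \rtol\ptol$.

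The crux is this last estimate, which I would obtain through canonical (Dudley) metrics. Write $d_\tau(\v{x},\v{x}')^2 = k_\tau(\v{x},\v{x}) + k_\tau(\v{x}',\v{x}') - 2k_\tau(\v{x},\v{x}')$. Cauchy--Schwarz together with $\max_{\v{x}}k_t(\v{x},\v{x}) \le C$ gives $d_t(\v{x},\v{x}') \le 2\sqrt{C}$ for all $\v{x},\v{x}'$; positive semidefiniteness of $k - k_t = k(\.,\m{X}_t)\m{\Lambda}^{-1}k(\m{X}_t,\.)$ (conditioning only removes variance), applied to the pair $\v{x},\v{x}'$ with weights $\pm 1$, gives $d_t \le d_0$; and \BlackRef{ass:kernel_lipschitz} gives $d_0(\v{x},\v{x}')^2 = [k(\v{x},\v{x}) - k(\v{x},\v{x}')] + [k(\v{x}',\v{x}') - k(\v{x}',\v{x})] \le 2\lip\,\norm{\v{x}-\v{x}'}_\infty$. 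Hence, by Dudley's entropy bound, monotonicity of covering numbers in the metric, and the polynomial $\ell_\infty$-covering numbers of $\c{X} = [0,1]^D$,
\begin{align}
\label{eqn:plan_dudley}
\E\bigl[\sup_{\v{x} \in \c{X}}g_t(\v{x})\bigr] \le K\int_0^{2\sqrt{C}}\sqrt{\log N(\c{X}, d_t, u)}\, du \le K\int_0^{2\sqrt{C}}\sqrt{\log N(\c{X}, d_0, u)}\, du
\end{align}
for a universal constant $K$, where the upper limit is $2\sqrt{C}$ because $\c{X}$ has $d_t$-diameter at most $2\sqrt{C}$, and the second integral is finite (its integrand grows only like $\sqrt{D\log(1/u)}$); so the right-hand side tends to $0$ as $C \to 0$. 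Fix $C$ accordingly.

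Assembling the pieces: for this $C$ there almost surely exists $T$ with $\max_{\v{x}}k_t(\v{x},\v{x}) \le C$ for all $t \ge T$, and on that event, for every such $t$ and every $\bestx_t \in \argmax_{\v{x} \in \c{X}}\mu_t(\v{x})$, the chain above gives $\P(r_t(\bestx_t) \le \rtol) = 1 - \P(r_t(\bestx_t) > \rtol) \ge 1 - \ptol$, which is the stated inequality. I expect the Gaussian-process estimate \eqref{eqn:plan_dudley} to be the main obstacle: it is the single place where all of the structure has to come together---the uniform variance bound from \BlackRef{ass:domain_compact}--\BlackRef{ass:queries_dense}, the conditioning-shrinks-variance comparison $d_t \le d_0$, and the H\"older control of $d_0$ supplied by \BlackRef{ass:kernel_lipschitz}. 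The rest is bookkeeping, the main point to keep straight being that $\mu_t$, $k_t$ and $\bestx_t$ are random through the data, so the claim is about the almost-sure event on which the variance bound eventually holds, with every probability above read conditionally on $y(\m{X}_t)$.
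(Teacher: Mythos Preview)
Your argument is correct and follows the same overall skeleton as the paper's proof: bound $r_t(\bestx_t)$ by the supremum of a centered Gaussian process, control that supremum via Dudley's entropy integral, compare the posterior canonical metric $d_t$ to the prior one $d_0$ (conditioning only shrinks), and use \BlackRef{ass:kernel_lipschitz} to bound $d_0$ in terms of $\norm{\cdot}_\infty$ so that the entropy integral is finite. The uniform variance bound coming from \BlackRef{ass:domain_compact}--\BlackRef{ass:queries_dense} then drives the upper limit of the integral to zero.

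The one genuine difference is the concentration step: you apply Markov's inequality to $r_t(\bestx_t)$ directly, obtaining $\P(r_t(\bestx_t) \ge \rtol) \le \rtol^{-1}\E[\sup_{\v{x}} g_t(\v{x})]$, whereas the paper applies the Borell--TIS inequality to the shifted process $g_t(\cdot) - g_t(\bestx_t)$, yielding a Gaussian tail bound $\exp\!\bigl(-\tfrac12[(\rtol - \E g_t^*)/(2\smax_t)]^2\bigr)$ once $\rtol > \E g_t^*$. Your route is more elementary and entirely sufficient for the proposition as stated; the paper's route gives exponentially sharper control in $\rtol$ and $\smax_t$, which would matter for quantitative rates but is not needed here. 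A small cosmetic point: the paper places the upper limit of the Dudley integral at $\smax_t = \max_{\v{x}} k_t(\v{x},\v{x})^{1/2}$ rather than at the $d_t$-diameter $2\sqrt{C}$ you use, but both vanish with $C$ and the difference is immaterial.
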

\begin{sketch} 
We sketch the proof below and provide full details in \cref{sec:proofs} for details.
Consider the centered process
$
g_t(\.) = \sbr{f_{t}(\.) - f_{t}(\bestx_t)} + \sbr{\mu_{t}(\bestx_t) - \mu_{t}(\.)}.
$
Since the second term is nonnegative, $\best{g_t} = \sup_{\v{x} \in \c{X}} g_t(\v{x}) \ge r_t(\bestx_t) = f_t^* - f(\bestx_t)$ and it suffices to upper bound the probability that $\best{g_t} \ge \rtol$. For $\rtol > \E\del{g^*_t}$, such a bound may be constructed by using the Borell-TIS inequality \cite{borell1975brunn,tsirelson1976norms} to write
\begin{align} 
\label{eqn:borell_tis}
&\P\del{g^*_t \ge \rtol} 
\le 
    \exp\del{-\frac{1}{2} \sbr{\frac{\rtol - \E\del{g^*_t}}{\smax_t}}^2}
&
&\kmax_t = \max_{\v{x} \in \c{X}} \Var[g_t(\v{x})]
.
\end{align}
Since $\E(\best{g_t})$ and \eqref{eqn:borell_tis} both vanish as $\smax_t$ decreases, the claim holds so long as $\lim_{t\to\infty} \smax_t = 0$.
\end{sketch}

Similar ideas can be found in \citet{grunewalder2010regret}, who proved that the expected supremums of centered process like $g_t$ go to zero as $(\v{x}_t)$ becomes increasingly dense in $\c{X}$. In \cref{sec:proofs}, we extend this result to the setting where observations are corrupted by i.i.d. Gaussian noise and combine it with the Borell-TIS inequality to show the probability that $r_t(\bestx_t) \ge \rtol$ vanishes. We also give a simple corollary for the case where solutions $\bestx_t$ belong to $\m{X}_t$. Next, we show that BO not only stops when \cref{alg:prb_mc} is used to evaluate the proposed rule but does so correctly.

\begin{restatable}{proposition}{thmCorrectness}
\label{thm:stopping_correctness}
Suppose assumptions \BlackRef{ass:domain_compact}--\BlackRef{ass:queries_dense} hold. Given a risk tolerance $\ptol > 0$, define nonzero probabilities $\ptolM$ and $\ptolE$ such that $\ptolM + \ptolE \le \ptol$ and let $(\ptolE^t)$ be a positive sequence so that $\sum_{t=0}^{\infty} \ptolE^{t} \le \ptolE$. For any regret bound $\rtol > 0$, if \cref{alg:prb_mc} is run at each step $t \in \N_0$ with tolerance $\ptolE^t$ to decide whether a point $\bestx_{t} \in \argmax_{\v{x} \in \c{X}} \mu_{t}(\v{x})$ satisfies the stopping criterion
\begin{align}
\psuff_t(\v{x}; \rtol) = \P(r_{t}(\bestx_t) \le \rtol) \ge 1 - \ptolM,
\end{align}
then BO almost surely terminates and returns an $(\rtol, \ptol)$-optimal solution under the model.
\end{restatable}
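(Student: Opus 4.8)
\emph{Proof sketch.}
The plan is to establish separately (i) that the stopping rule fires after finitely many steps almost surely, and (ii) that the point returned when it fires is $(\rtol,\ptol)$-optimal under the model; both parts rest on \cref{thm:convergence} together with the union-bound accounting behind \cref{alg:prb_mc}. Write $\lambda = 1 - \ptolM$, let $\tau \in \N_0 \cup \{\infty\}$ be the first step $t$ at which \cref{alg:prb_mc} returns a value $\overline{Z}^{(t)} \ge \lambda$, and let $A_t$ be the event that the terminal Clopper--Pearson interval constructed at step $t$ fails to contain $\psuff_t(\bestx_t)$. Conditional on $\c{D}_t$, the samples fed to \cref{alg:prb_mc} at step $t$ are i.i.d.\ $\operatorname{Bern}(\psuff_t(\bestx_t))$, so the exactness of \eqref{eqn:clopper_pearson} and a union bound over the algorithm's rounds --- precisely the argument sketched in \cref{sec:method_testing} --- yield $\P(A_t) \le \sum_j d^t_j \le \ptolE^t$; hence $\sum_t \P(A_t) \le \ptolE < \infty$ and, by Borel--Cantelli, $A_t$ occurs for only finitely many $t$ almost surely. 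I would also record that the intervals \eqref{eqn:clopper_pearson} shrink to a point as the sample size grows and that $\P(\psuff_t(\bestx_t) = \lambda) = 0$ for each fixed $t$, so \cref{alg:prb_mc} almost surely halts at every step; thus ``BO terminates'' means precisely that $\tau < \infty$.

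\emph{Termination.} Apply \cref{thm:convergence} with risk tolerance $\ptolM/2$: almost surely there is a finite (random) index $T$ with $\psuff_t(\bestx_t;\rtol) \ge 1 - \ptolM/2 > \lambda$ for all $t \ge T$. On the probability-one event where this holds and, in addition, only finitely many $A_t$ occur, pick a step $t \ge T$ at which $A_t$ does not occur (which exists since only finitely many $A_t$ occur). Its terminal interval contains $\psuff_t(\bestx_t) > \lambda$ and excludes $\lambda$, hence lies entirely above $\lambda$; since a Clopper--Pearson interval also contains the empirical mean, $\overline{Z}^{(t)} > \lambda$, so the rule fires no later than $t$ and $\tau \le t < \infty$.

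\emph{Correctness.} First bound false positives: if at some step $t$ we have $\overline{Z}^{(t)} \ge \lambda$ while $\psuff_t(\bestx_t) < \lambda$, then the terminal interval (which contains $\overline{Z}^{(t)}$ and excludes $\lambda$) lies entirely above $\lambda$, hence above $\psuff_t(\bestx_t)$, so $A_t$ occurs; therefore $\P(\overline{Z}^{(t)} \ge \lambda > \psuff_t(\bestx_t)\text{ for some }t) \le \sum_t \P(A_t) \le \ptolE$. Since $\tau < \infty$ a.s.\ and $\overline{Z}^{(\tau)} \ge \lambda$ by definition of $\tau$, this gives $\P(\psuff_\tau(\bestx_\tau) < \lambda) \le \ptolE$. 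Next, condition on the data $\c{D}_\tau$, the returned index $\bestx_\tau$, the value of $\tau$, and all Monte Carlo draws: a posterior sample is then $f_\tau \sim \c{GP}(\mu_\tau, k_\tau)$, independent of the testing randomness, so the conditional probability that $r_\tau(\bestx_\tau) > \rtol$ equals $1 - \psuff_\tau(\bestx_\tau)$. Splitting the resulting expectation according to whether $\psuff_\tau(\bestx_\tau) \ge \lambda$ and using $1 - \lambda = \ptolM$ yields $\P(r_\tau(\bestx_\tau) > \rtol) = \E[1 - \psuff_\tau(\bestx_\tau)] \le \ptolM + \P(\psuff_\tau(\bestx_\tau) < \lambda) \le \ptolM + \ptolE \le \ptol$, i.e.\ the returned point is $(\rtol,\ptol)$-optimal under the model; equivalently, with probability at least $1 - \ptolE \ge 1 - \ptol$ it already satisfies $\psuff_\tau(\bestx_\tau) \ge 1 - \ptolM$.

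The routine pieces --- exactness and vanishing width of Clopper--Pearson intervals --- can simply be quoted. The step I expect to be the main obstacle is termination. A naive union bound only shows that a coverage failure occurs at \emph{some} step with probability at most $\ptolE < 1$, which does not force $\tau < \infty$; the fix is to exploit the summability $\sum_t \ptolE^t < \infty$ of the per-step budget via Borel--Cantelli to conclude that coverage failures are only \emph{finitely many}, leaving infinitely many steps past the (random, path-dependent) threshold $T$ of \cref{thm:convergence} at which the rule provably fires. The other delicate point is the measure-theoretic bookkeeping at the random time $\tau$: one must verify that, conditioned on everything generated up to and including the stop, a fresh posterior draw $f_\tau$ still has law $\c{GP}(\mu_\tau, k_\tau)$ and is independent of the Monte Carlo samples --- this is what justifies the concluding computation.
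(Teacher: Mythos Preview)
Your argument is correct and follows the same two-part structure as the paper's proof: termination via \cref{thm:convergence} plus the reliability of \cref{alg:prb_mc}, and correctness via the false-positive bound combined with $\ptolM + \ptolE \le \ptol$. The paper's version is considerably terser---for termination it simply asserts that ``because $\psuff_t^n(\bestx_t)$ is unbiased, there exist times $t \ge T$ at which \cref{alg:prb_mc} produces true positives,'' whereas you supply the missing rigor by invoking Borel--Cantelli on $\sum_t \ptolE^t < \infty$ and applying \cref{thm:convergence} with tolerance $\ptolM/2$ to force the strict inequality $\psuff_t(\bestx_t) > \lambda$; your explicit conditioning argument for correctness likewise spells out what the paper leaves implicit.
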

\begin{proof}
By \cref{thm:convergence}, there almost surely exists an $S \in \N_0$ so that $t \ge S \implies \psuff_t(\bestx_t) \ge 1 - \ptolM$. Further, because $\psuff_t^n(\bestx_t)$ is unbiased, there exist times $t \ge T$ at which \cref{alg:prb_mc} produces true positives $\psuff_t^n(\bestx_t) \ge 1 - \ptolM \land \psuff_t(\bestx_t) \ge 1 - \ptolM$. Hence, BO stops with probability one. 
If BO terminates at time $T \in \N_0$, then the probability that $\bestx_{T}$ is not $\rtol$-optimal is less than or equal to $\ptolM$ in the event of a true positive and one otherwise. Since false positives $\psuff_t^n(\bestx_t) \ge 1 - \ptolM > \psuff_t(\bestx_t)$ occur with probability at most $\ptolE$, it follows that $\bestx_T$ is $\rtol$-optimal with probability at least $1 - \ptol$.
\end{proof}

In summary, we can design statistical tests to mitigate the risk of premature stopping due to random fluctuations in Monte Carlo estimators like $\psuff_t^n$. Moreover, we can schedule these tests to ensure that points which pass them are sufficiently likely (under the model) to satisfy our stopping conditions. If the model is correct, we can therefore guarantee that a satisfactory solution is returned with high probability. Provided that one or more points almost surely satisfy the rule as $t \to \infty$, this result holds if we can simulate whether solutions are satisfactory and bound the error in the resulting estimator.

\section{Experiments}
\label{sec:experiments}

To shed light on how our algorithm behaves in practice, we conducted a series of experiments. Focal questions here included: i) how does \crit{} perform in comparison to existing stopping rules, ii) how do these rules respond to different types of problems, and iii) what is the impact of model mismatch.

Experiments were performed by first running BO with conservatively chosen budgets $T \in \N$. We then stepped through each saved run with different stopping rules to establish stopping times and terminal performance. This paradigm ensured fair comparisons and reduced compute overheads. We performed a hundred independent BO runs for all problems other than hyperparameter tuning for convolutional neural networks (CNNs) on MNIST \cite{deng2012mnist}, where only fifty runs were carried out. Despite the general notation of the paper, all problems were defined as minimization tasks. Additional details and results can be found in \cref{sec:experiments_cont,sec:extended_results}, respectively; and, code is available online at \url{https://github.com/j-wilson/trieste_stopping}.


Each BO run was tasked with finding an $\rtol$-optimal point with probability at least $1 - \ptol = 95\%$. On the Rosenbrock-4 fine-tuning problem, we used a regret bound $\rtol = 10^{-4}$. For CNNs, we aimed to be within $\rtol = 0.5\%$ of the best test error (i.e., misclassification rate) seen across all runs, namely $0.62\%$. Likewise, when fitting XGBoost classifiers \cite{chen2016xgboost} for income prediction \cite{misc_adult_2}, we sought to be within $1\%$ of the best found test error of $12.89\%$. For all other problems, we set $\rtol = 0.1$.

For \crit{}, we divided $\ptol$ evenly between $\ptolE$ and $\ptolM$. Since experiments were carried out using preexisting BO runs that each began with five random trials and ended at times $T$, we employed a constant schedule $\ptolE^{t} = \frac{1}{T - 5}\ptolE$ for risk tolerances at steps $t \in \N_0$. Parameter schedules for \cref{alg:prb_mc} are discussed in \cref{sec:method_testing}.

As a practical concession, we limited each run of \cref{alg:prb_mc} to a thousand draws of $f_t$ and used the resulting estimate to decide whether to stop---even if the corresponding confidence interval was not narrow enough to afford guarantees. Results under this setup were consistent with preliminary experiments in which \cref{alg:prb_mc} was run using a fifteen minute time limit.
Finally, when optimizing draws from GP priors in six dimensions with noise $\noise^2=10^{-2}$, we evaluated \crit{} once every five steps to expedite these experiments.

\begin{table*}[t]
\centering
\begin{tabularx}{\textwidth}{cccCCCCCc}
\toprule
Problem & $D$ & $T$ & \textbf{Oracle}$^{\dagger}$ & \textbf{Budget}$^{\dagger}$ & \textbf{Acq} & $\boldsymbol{\Delta}$\textbf{CB} & $\boldsymbol{\Delta}$\textbf{ES} & \textbf{PRB} (ours) \LargeTBstrut \\
\midrule
\textbf{GP$^{{\dagger}}$}  $10^{-6}$ & 2 & 64 & $10\, (100)$ & $17\, (96)$ & $28\, (100)$ & $\textcolor{blue}{\mathbf{16\, (96)}}$ & $22\, (99)$ & $17\, (97)$ \TBstrut \\
\textbf{GP$^{{\dagger}}$}  $10^{-2}$ & 2 & 128 & $11\, (100)$ & $22\, (96)$ & $78\, (100)$ & $128\, (100)$ & $54\, (100)$ & $\textcolor{blue}{\mathbf{23\, (99)}}$ \TBstrut \\
\textbf{GP$^{{\dagger}}$}  $10^{-6}$ & 4 & 128 & $27\, (100)$ & $64\, (95)$ & $90\, (100)$ & $\textcolor{blue}{\mathbf{51\, (97)}}$ & $93\, (100)$ & $64\, (99)$ \TBstrut \\
\textbf{GP$^{{\dagger}}$}  $10^{-2}$ & 4 & 256 & $30\, (100)$ & $94\, (95)$ & $106\, (98)$ & $256\, (100)$ & $144\, (97)$ & $\textcolor{blue}{\mathbf{86\, (96)}}$ \TBstrut \\
\textbf{GP$^{{\dagger}}$}  $10^{-6}$ & 6 & 256 & $40\, (99)$ & $124\, (95)$ & $142\, (98)$ & $150\, (98)$ & $256\, (99)$ & $\textcolor{blue}{\mathbf{134\, (98)}}$ \TBstrut \\
\textbf{GP$^{{\dagger}}$}  $10^{-2}$ & 6 & 512 & $65\, (100)$ & $227\, (96)$ & $\textcolor{blue}{\mathbf{181\, (96)}}$ & $512\, (100)$ & $278\, (99)$ & $235\, (100)$ \TBstrut \\
\textbf{GP}  $10^{-6}$ & 4 & 128 & $35\, (100)$ & $79\, (95)$ & $\textcolor{blue}{\mathbf{92\, (100)}}$ & $41\, (66)$ & $77\, (94)$ & $61\, (88)$ \TBstrut \\
\textbf{GP}  $10^{-2}$ & 4 & 256 & $51\, (100)$ & $157\, (95)$ & $\textcolor{blue}{\mathbf{128\, (97)}}$ & $256\, (100)$ & $160\, (96)$ & $100\, (92)$ \TBstrut \\
\textbf{Branin}  & 2 & 128 & $19\, (100)$ & $25\, (95)$ & $64\, (100)$ & $36\, (100)$ & $38\, (100)$ & $\textcolor{blue}{\mathbf{33\, (99)}}$ \TBstrut \\
\textbf{Hartmann}  & 3 & 64 & $14\, (100)$ & $22\, (96)$ & $26\, (100)$ & $18\, (90)$ & $21\, (97)$ & $\textcolor{blue}{\mathbf{19\, (100)}}$ \TBstrut \\
\textbf{Hartmann}  & 6 & 64 & $36\, (67)$ & $256\, (67)$ & $40\, (67)$ & $38\, (67)$ & $62\, (67)$ & $40\, (64)$ \TBstrut \\
\textbf{Rosenbrock}  & 4 & 96 & $34\, (100)$ & $46\, (95)$ & $95\, (100)$ & $88\, (100)$ & $98\, (100)$ & $\textcolor{blue}{\mathbf{84\, (100)}}$ \TBstrut \\
\textbf{CNN}  & 4 & 256 & $5\, (100)$ & $11\, (96)$ & $64\, (100)$ & $64\, (100)$ & $64\, (100)$ & $\textcolor{blue}{\mathbf{17\, (100)}}$ \TBstrut \\
\textbf{XGBoost}  & 3 & 128 & $4\, (100)$ & $8\, (97)$ & $128\, (100)$ & $90\, (100)$ & $51\, (100)$ & $\textcolor{blue}{\mathbf{28\, (99)}}$ \TBstrut \\
\bottomrule
\end{tabularx}

\caption{
    Median stopping times and success rates when seeking $(\rtol, \ptol)$-optimal points on $\c{X} = [0, 1]^D$ 
    given an upper limit of $T \in \N$ function evaluations.
    For GP objectives, number beside each name specify noise levels $\noise^2$. Superscripts $\smash{^\dagger}$ indicate 
    that model or stopping rule parameters were given by an oracle.
    For each problem, non-oracle methods that returned $\rtol$-optimal points at least $1 - \ptol$ percent of the time 
    using the fewest function evaluations are shown in $\textbf{\textcolor{blue}{blue}}$.
}
\label{tbl:results_main}
\end{table*}

\subsection{Baselines}
\label{sec:experiments_baselines}
We tested several baselines, some of which were granted access to information that would usually be unavailable (indicated by a dagger $\dagger$). We summarize these as follows\footnote{Note that these descriptions do not account for the presence of a link function (see \cref{sec:link_function}).}:
\begin{enumerate}[label=\textbf{B\arabic*.},ref=B\arabic*, itemsep=2pt]
\item 
    \label{baseline:oracle}
    Oracle$^{\dagger}$: stops once an $\rtol$-optimal point has been evaluated.
\item 
    \label{baseline:budget_oracle}
    Budget$^{\dagger}$: stops after a fixed number of trials chosen by an oracle for each problem.
\item 
    \label{baseline:acq}
    Acq \cite{jones2001taxonomy, nguyen2017regret}: stops when the acquisition value of the next query is negligible.
\item 
    \label{baseline:delta_cb}
    $\Delta$CB \cite{makarova2022automatic}: stops once the gap between confidence bounds is less-equal to $C > 0$, i.e.
    \begin{align*}
    &\max_{\v{x} \in \c{X}} \operatorname{UCB}_t(\v{x}) 
        - \max_{\v{x}' \in \m{X}_t}\operatorname{LCB}_t(\v{x}') \le C
    &
    & \operatorname{[U/L]CB}_t(\v{x}) = \mu_t(\v{x}) \pm \sqrt{\beta_t k_t(\v{x}, \v{x})}
    \end{align*}
\item 
    \label{baseline:delta_sup}
    $\Delta$ES
    \cite{ishibashi2023stopping}: stops when an upper bound on $\abs{\E(f_{t}^*) - \E(f_{t-1}^*)}$ drops below a level.
\end{enumerate}

\BlackRef{baseline:oracle}
is the optimal stopping rule, but requires perfect information for $f$. Likewise, \BlackRef{baseline:budget_oracle} is the optimal fixed budget for each problem. These budgets were defined post-hoc as the minimum number of trials such that at least $95\%$ percent of runs returned $\rtol$-optimal points (where possible).

The remaining methods are all model-based and stop when target quantities are sufficiently small. For the chosen acquisition function (see \cref{sec:acquisition_function}), \BlackRef{baseline:acq} can be interpreted as the expected improvement in solution quality given an additional trial, i.e. $\E_{y_{t+1}}\sbr{\max_{\v{x} \in \m{X}_{t+1}} \mu_{t+1}(\v{x}) - \mu_{t+1}(\bestx_t)}$. Unfortunately, neither this quantity nor the change in the expected supremum used by \BlackRef{baseline:delta_sup} lend themselves to interpretation in terms of $(\rtol,\ptol)$-optimality. \BlackRef{baseline:delta_cb} does admit such an interpretation for appropriate choice of constant $\beta_t$ \cite{srinivas2009gaussian}; however, these constants are often difficult to obtain in practice however, so we followed \cite{makarova2022automatic} by defining $\beta_t = \frac{2}{5} \log\del{D t^{2} \pi^2 / 6\ptol}$.

To combat these issues, we gave baseline methods a competitive advantage by retroactively assigning cutoff values to ensure they achieved the desired success rate when optimizing draws from the model (denoted GP$^\dagger$). Specifically, cutoff values for \BlackRef{baseline:acq}--\BlackRef{baseline:delta_sup} were obtained by dividing regret bounds $\rtol$ by the smallest powers of two for which this condition held---explicitly: $2^{15}$, $2^{3}$, and $2^{4}$ (respectively). Note that, in the absence of this fine tuning, these methods either proved unreliable or failed to stop within the allotted time depending on whether thresholds were too large or too small. For completion, additional results using $\rtol$ as the cutoff value for \BlackRef{baseline:delta_cb} and \BlackRef{baseline:delta_sup} are presented in \cref{sec:extended_results}.

The main results of this section are shown in \cref{tbl:results_main} and key findings are discussed below.

\subsection{Results with true models}
\label{sec:experiments_synthetic}

When optimizing functions drawn from known GP priors, denoted GP$^{\dagger}$, the proposed stopping rule performed exactly as advertised and consistently returned $(\rtol, \ptol)$-optimal solutions. Moreover, \crit{} often requiring the fewest function evaluations. This result is not surprising when comparing with methods like \BlackRef{baseline:delta_cb} because an unbiased estimate to $\P(r_t(\v{x}) \le \rtol)$ should exceed a level faster than a corresponding lower bound. In many cases, \crit{} achieved a higher success rate than the fixed budget oracle using a comparable or smaller number of trials. These gains occur because model-based stopping is able to exploit patterns in the data collected by individual runs.

Elsewhere, we observe that \BlackRef{baseline:delta_cb} struggled to terminate  when faced with moderate noise levels $\noise^2 = 10^{-2}$. This pathology likely emerges because, similar to alternative estimators discussed in \cref{sec:method_sampling}, the method does not fully account for dependencies between $\bestf_t$ and $f_t(\v{x})$. As an extreme example, \BlackRef{baseline:delta_cb} may fail to terminate when a point $\v{x} \in \m{X}_t$ simultaneously maximizes upper and lower confidence bounds, despite the fact that $r_t(\v{x} \given \v{x}_t^* = \v{x}) = 0$.

Not surprisingly, BO runs that took longer to query an $\rtol$-optimal point took longer to stop. However, the correlation between these terms paled in comparison to that of stopping times and $\alpha$-quantiles of regrets incurred by uniform random points (approximately, $0.35$ vs. $-0.75$). Said differently, PRB stopped faster when $f^*$ was an outlier. This pattern suggests that the one-step optimal strategy from \cref{sec:acquisition_function} is better at finding optimal solutions than verifying them. Future works may therefore wish to pursue stopping-aware approaches along the lines of \citet{mcleod2018optimization} or \citet{cai2021lenient}.

\subsection{Results with maximum a posteriori models}
\label{sec:experiments_real}
In the real world, the high-level assumptions that govern how the model behaves (i.e., its hyperparameters) are tuned online as additional data is collected using Type-II maximum likelihood. We are therefore interested in seeing how discrepancies between the model and reality influence stopping behavior.

Results here were similar to the synthetic setting, albeit with some blemishes. Interestingly, the most glaring example of the risks posed by model mismatch occurred on the popular Hartmann-6 test function. Here, $33\%$ of BO runs overestimated the objective function's smoothness and converged to a local minimum of $-3.20$ rather than the global minimum of $-3.32$. It is worth noting, however, that if $\rtol = 0.1$ had been slightly larger, all stopping rules would have succeeded in at least $95\%$ of cases (see \cref{sec:extended_results}). Along similar lines, models occasionally underestimated the kernel variance when optimizing draws from GP priors and stopped prematurely.

These results also indicate that both hyperparameter tuning problems (CNN and XGBoost) were fairly easy and this may have masked potential failure modes. The fixed budget oracle's performance demonstrates that there was still room for model-based stopping rules to fail, but we nevertheless recommend that these results be taken with a grain of salt.

In additional experiments, we indeed found that it was easy to construct cases where poor model fits led to poor stopping behavior. This vulnerability was large due to our choice of hyperpriors (see \cref{sec:model_specification}), which were purposefully broad and uninformative. Overall, we argue that these results are both highly encouraging and also highlight the importance of uncertainty calibration. Potential remedies for this issue are discussed below.

Based on these findings, we suggest that model-based stopping be used with more conservative priors that, e.g., favor smaller lengthscales and larger variances. Alternatively, calibration issues may be alleviated by marginalizing over hyperparameters \cite{simpson2021marginalised} or utilizing more expressive models. These options help reduce the risk of overly confident models leading to premature stopping. Along the same lines, we recommend using a large fixed budget as an auxiliary stopping rule to avoid cases where poor model fits cause the algorithm to converge very slowly (see \cref{sec:practical_boSchedule} for discussion).

\section{Conclusion} 
\label{sec:conclusion}
To the best of our knowledge, results presented here are among the first of their kind for Bayesian optimization. We have given a practical algorithm for verifying whether a set of stopping conditions holds with high probability under the model. For the proposed stopping rule, we have further shown that the algorithm correctly terminates under mild technical conditions. If data is generated according to the model, we can therefore guarantee that BO is likely to return a satisfactory solution.

The methods we have shared are largely generic. Echoing the introduction, if you can simulate it then you can use it for stopping. While this approach is not without limitations, we believe that it will ultimately allow others to design stopping rules as they see fit. To the extent that it does, model-based stopping may one day become as common place as model-based optimization.

\bibliography{references}

\begin{thebibliography}{52}
\providecommand{\natexlab}[1]{#1}
\providecommand{\url}[1]{\texttt{#1}}
\expandafter\ifx\csname urlstyle\endcsname\relax
  \providecommand{\doi}[1]{doi: #1}\else
  \providecommand{\doi}{doi: \begingroup \urlstyle{rm}\Url}\fi

\bibitem[Abadi et~al.(2015)Abadi, Agarwal, Barham, Brevdo, Chen, Citro,
  Corrado, Davis, Dean, Devin, Ghemawat, Goodfellow, Harp, Irving, Isard, Jia,
  Jozefowicz, Kaiser, Kudlur, Levenberg, Man\'{e}, Monga, Moore, Murray, Olah,
  Schuster, Shlens, Steiner, Sutskever, Talwar, Tucker, Vanhoucke, Vasudevan,
  Vi\'{e}gas, Vinyals, Warden, Wattenberg, Wicke, Yu, and
  Zheng]{tensorflow2015-whitepaper}
Abadi, M., Agarwal, A., Barham, P., Brevdo, E., Chen, Z., Citro, C., Corrado,
  G.~S., Davis, A., Dean, J., Devin, M., Ghemawat, S., Goodfellow, I., Harp,
  A., Irving, G., Isard, M., Jia, Y., Jozefowicz, R., Kaiser, L., Kudlur, M.,
  Levenberg, J., Man\'{e}, D., Monga, R., Moore, S., Murray, D., Olah, C.,
  Schuster, M., Shlens, J., Steiner, B., Sutskever, I., Talwar, K., Tucker, P.,
  Vanhoucke, V., Vasudevan, V., Vi\'{e}gas, F., Vinyals, O., Warden, P.,
  Wattenberg, M., Wicke, M., Yu, Y., and Zheng, X.
\newblock {TensorFlow}: Large-scale machine learning on heterogeneous systems,
  2015.
\newblock URL \url{https://www.tensorflow.org/}.
\newblock Software available from tensorflow.org.

\bibitem[Adler \& Taylor(2009)Adler and Taylor]{adler2009random}
Adler, R.~J. and Taylor, J.~E.
\newblock \emph{Random fields and geometry}.
\newblock Springer Science \& Business Media, 2009.

\bibitem[Agarap(2018)]{agarap2018deep}
Agarap, A.~F.
\newblock Deep learning using rectified linear units (relu).
\newblock \emph{arXiv preprint arXiv:1803.08375}, 2018.

\bibitem[Audibert et~al.(2007)Audibert, Munos, and
  Szepesv{\'a}ri]{audibert2007tuning}
Audibert, J.-Y., Munos, R., and Szepesv{\'a}ri, C.
\newblock Tuning bandit algorithms in stochastic environments.
\newblock In \emph{International conference on algorithmic learning theory},
  pp.\  150--165. Springer, 2007.

\bibitem[Balandat et~al.(2020)Balandat, Karrer, Jiang, Daulton, Letham, Wilson,
  and Bakshy]{balandat2020botorch}
Balandat, M., Karrer, B., Jiang, D.~R., Daulton, S., Letham, B., Wilson, A.~G.,
  and Bakshy, E.
\newblock {BoTorch: A Framework for Efficient Monte-Carlo Bayesian
  Optimization}.
\newblock In \emph{Advances in Neural Information Processing Systems 33}, 2020.

\bibitem[Bardenet et~al.(2014)Bardenet, Doucet, and
  Holmes]{bardenet2014towards}
Bardenet, R., Doucet, A., and Holmes, C.
\newblock Towards scaling up {M}arkov chain {M}onte {C}arlo: an adaptive
  subsampling approach.
\newblock In \emph{International conference on machine learning}, pp.\
  405--413. PMLR, 2014.

\bibitem[Becker \& Kohavi(1996)Becker and Kohavi]{misc_adult_2}
Becker, B. and Kohavi, R.
\newblock {Adult}.
\newblock UCI Machine Learning Repository, 1996.
\newblock {DOI}: https://doi.org/10.24432/C5XW20.

\bibitem[Borell(1975)]{borell1975brunn}
Borell, C.
\newblock The {B}runn-{M}inkowski inequality in {G}auss space.
\newblock \emph{Inventiones mathematicae}, 30\penalty0 (2):\penalty0 207--216,
  1975.

\bibitem[Brown et~al.(2001)Brown, Cai, and DasGupta]{brown2001interval}
Brown, L.~D., Cai, T.~T., and DasGupta, A.
\newblock Interval estimation for a binomial proportion.
\newblock \emph{Statistical science}, 16\penalty0 (2):\penalty0 101--133, 2001.

\bibitem[Byrd et~al.(1995)Byrd, Lu, Nocedal, and Zhu]{byrd1995limited}
Byrd, R.~H., Lu, P., Nocedal, J., and Zhu, C.
\newblock A limited memory algorithm for bound constrained optimization.
\newblock \emph{SIAM Journal on scientific computing}, 16\penalty0
  (5):\penalty0 1190--1208, 1995.

\bibitem[Cai et~al.(2021)Cai, Gomes, and Scarlett]{cai2021lenient}
Cai, X., Gomes, S., and Scarlett, J.
\newblock Lenient regret and good-action identification in {G}aussian process
  bandits.
\newblock In \emph{International Conference on Machine Learning}, pp.\
  1183--1192. PMLR, 2021.

\bibitem[Chen \& Guestrin(2016)Chen and Guestrin]{chen2016xgboost}
Chen, T. and Guestrin, C.
\newblock {XGB}oost: A scalable tree boosting system.
\newblock In \emph{Proceedings of the 22nd acm sigkdd international conference
  on knowledge discovery and data mining}, pp.\  785--794, 2016.

\bibitem[Clopper \& Pearson(1934)Clopper and Pearson]{clopper1934use}
Clopper, C.~J. and Pearson, E.~S.
\newblock The use of confidence or fiducial limits illustrated in the case of
  the binomial.
\newblock \emph{Biometrika}, 26\penalty0 (4):\penalty0 404--413, 1934.

\bibitem[Deng(2012)]{deng2012mnist}
Deng, L.
\newblock The {MNIST} database of handwritten digit images for machine learning
  research [best of the web].
\newblock \emph{IEEE signal processing magazine}, 29\penalty0 (6):\penalty0
  141--142, 2012.

\bibitem[Frazier et~al.(2009)Frazier, Powell, and
  Dayanik]{frazier2009knowledge}
Frazier, P., Powell, W., and Dayanik, S.
\newblock The knowledge-gradient policy for correlated normal beliefs.
\newblock \emph{INFORMS journal on Computing}, 21\penalty0 (4):\penalty0
  599--613, 2009.

\bibitem[Garnett(2023)]{garnett2023bayesian}
Garnett, R.
\newblock \emph{Bayesian optimization}.
\newblock Cambridge University Press, 2023.

\bibitem[Griffiths \& Hern{\'a}ndez-Lobato(2020)Griffiths and
  Hern{\'a}ndez-Lobato]{griffiths2020constrained}
Griffiths, R.-R. and Hern{\'a}ndez-Lobato, J.~M.
\newblock Constrained {B}ayesian optimization for automatic chemical design
  using variational autoencoders.
\newblock \emph{Chemical science}, 11\penalty0 (2):\penalty0 577--586, 2020.

\bibitem[Gr{\"u}new{\"a}lder et~al.(2010)Gr{\"u}new{\"a}lder, Audibert, Opper,
  and Shawe-Taylor]{grunewalder2010regret}
Gr{\"u}new{\"a}lder, S., Audibert, J.-Y., Opper, M., and Shawe-Taylor, J.
\newblock Regret bounds for {G}aussian process bandit problems.
\newblock In \emph{Proceedings of the thirteenth international conference on
  artificial intelligence and statistics}, pp.\  273--280. JMLR Workshop and
  Conference Proceedings, 2010.

\bibitem[Hansen(2016)]{hansen2016cma}
Hansen, N.
\newblock The {CMA} evolution strategy: A tutorial.
\newblock \emph{arXiv preprint arXiv:1604.00772}, 2016.

\bibitem[Hennig \& Schuler(2012)Hennig and Schuler]{hennig2012entropy}
Hennig, P. and Schuler, C.~J.
\newblock Entropy search for information-efficient global optimization.
\newblock \emph{Journal of Machine Learning Research}, 13\penalty0 (6), 2012.

\bibitem[Hern{\'a}ndez-Lobato et~al.(2014)Hern{\'a}ndez-Lobato, Hoffman, and
  Ghahramani]{hernandez2014predictive}
Hern{\'a}ndez-Lobato, J.~M., Hoffman, M.~W., and Ghahramani, Z.
\newblock Predictive entropy search for efficient global optimization of
  black-box functions.
\newblock \emph{Advances in neural information processing systems}, 27, 2014.

\bibitem[Ishibashi et~al.(2023)Ishibashi, Karasuyama, Takeuchi, and
  Hino]{ishibashi2023stopping}
Ishibashi, H., Karasuyama, M., Takeuchi, I., and Hino, H.
\newblock A stopping criterion for {B}ayesian optimization by the gap of
  expected minimum simple regrets.
\newblock In \emph{International Conference on Artificial Intelligence and
  Statistics}, pp.\  6463--6497. PMLR, 2023.

\bibitem[Jones(2001)]{jones2001taxonomy}
Jones, D.~R.
\newblock A taxonomy of global optimization methods based on response surfaces.
\newblock \emph{Journal of global optimization}, 21:\penalty0 345--383, 2001.

\bibitem[Kingma \& Ba(2014)Kingma and Ba]{kingma2014adam}
Kingma, D.~P. and Ba, J.
\newblock Adam: A method for stochastic optimization.
\newblock \emph{arXiv preprint arXiv:1412.6980}, 2014.

\bibitem[Kushner(1962)]{kushner1962versatile}
Kushner, H.~J.
\newblock A versatile stochastic model of a function of unknown and time
  varying form.
\newblock \emph{Journal of Mathematical Analysis and Applications}, 5\penalty0
  (1):\penalty0 150--167, 1962.

\bibitem[Lederer et~al.(2019{\natexlab{a}})Lederer, Umlauft, and
  Hirche]{lederer2019posterior}
Lederer, A., Umlauft, J., and Hirche, S.
\newblock Posterior variance analysis of {G}aussian processes with application
  to average learning curves.
\newblock \emph{arXiv preprint arXiv:1906.01404}, 2019{\natexlab{a}}.

\bibitem[Lederer et~al.(2019{\natexlab{b}})Lederer, Umlauft, and
  Hirche]{lederer2019uniform}
Lederer, A., Umlauft, J., and Hirche, S.
\newblock Uniform error bounds for {G}aussian process regression with
  application to safe control.
\newblock \emph{Advances in Neural Information Processing Systems}, 32,
  2019{\natexlab{b}}.

\bibitem[Liu \& Nocedal(1989)Liu and Nocedal]{liu1989limited}
Liu, D.~C. and Nocedal, J.
\newblock On the limited memory bfgs method for large scale optimization.
\newblock \emph{Mathematical programming}, 45\penalty0 (1):\penalty0 503--528,
  1989.

\bibitem[Makarova et~al.(2022)Makarova, Shen, Perrone, Klein, Faddoul, Krause,
  Seeger, and Archambeau]{makarova2022automatic}
Makarova, A., Shen, H., Perrone, V., Klein, A., Faddoul, J.~B., Krause, A.,
  Seeger, M., and Archambeau, C.
\newblock Automatic termination for hyperparameter optimization.
\newblock In \emph{International Conference on Automated Machine Learning},
  pp.\  7--1. PMLR, 2022.

\bibitem[Massart(2007)]{massart2007concentration}
Massart, P.
\newblock \emph{Concentration inequalities and model selection: Ecole d'Et{\'e}
  de Probabilit{\'e}s de Saint-Flour XXXIII-2003}.
\newblock Springer, 2007.

\bibitem[Matthews et~al.(2017)Matthews, {van der Wilk}, Nickson, Fujii,
  {Boukouvalas}, {Le{\'o}n-Villagr{\'a}}, Ghahramani, and Hensman]{GPflow2017}
Matthews, A. G. d.~G., {van der Wilk}, M., Nickson, T., Fujii, K.,
  {Boukouvalas}, A., {Le{\'o}n-Villagr{\'a}}, P., Ghahramani, Z., and Hensman,
  J.
\newblock {{GP}flow: A {G}aussian process library using {T}ensor{F}low}.
\newblock \emph{Journal of Machine Learning Research}, 18\penalty0
  (40):\penalty0 1--6, apr 2017.

\bibitem[McLeod et~al.(2018)McLeod, Roberts, and
  Osborne]{mcleod2018optimization}
McLeod, M., Roberts, S., and Osborne, M.~A.
\newblock Optimization, fast and slow: optimally switching between local and
  {B}ayesian optimization.
\newblock In \emph{International Conference on Machine Learning}, pp.\
  3443--3452. PMLR, 2018.

\bibitem[Mnih et~al.(2008)Mnih, Szepesv{\'a}ri, and
  Audibert]{mnih2008empirical}
Mnih, V., Szepesv{\'a}ri, C., and Audibert, J.-Y.
\newblock Empirical {B}ernstein stopping.
\newblock In \emph{Proceedings of the 25th international conference on Machine
  learning}, pp.\  672--679, 2008.

\bibitem[Mo{\v{c}}kus(1975)]{movckus1975bayesian}
Mo{\v{c}}kus, J.
\newblock On {B}ayesian methods for seeking the extremum.
\newblock In \emph{Optimization Techniques IFIP Technical Conference:
  Novosibirsk, July 1--7, 1974}, pp.\  400--404. Springer, 1975.

\bibitem[Nguyen et~al.(2017)Nguyen, Gupta, Rana, Li, and
  Venkatesh]{nguyen2017regret}
Nguyen, V., Gupta, S., Rana, S., Li, C., and Venkatesh, S.
\newblock Regret for expected improvement over the best-observed value and
  stopping condition.
\newblock In \emph{Asian conference on machine learning}, pp.\  279--294. PMLR,
  2017.

\bibitem[Picheny et~al.(2023)Picheny, Berkeley, Moss, Stojic, Granta, Ober,
  Artemev, Ghani, Goodall, Paleyes, Vakili, Pascual-Diaz, Markou, Qing, Loka,
  and Couckuyt]{trieste2023}
Picheny, V., Berkeley, J., Moss, H.~B., Stojic, H., Granta, U., Ober, S.~W.,
  Artemev, A., Ghani, K., Goodall, A., Paleyes, A., Vakili, S., Pascual-Diaz,
  S., Markou, S., Qing, J., Loka, N. R. B.~S., and Couckuyt, I.
\newblock Trieste: Efficiently exploring the depths of black-box functions with
  {T}ensor{F}low, 2023.
\newblock URL \url{https://arxiv.org/abs/2302.08436}.

\bibitem[Rahimi \& Recht(2007)Rahimi and Recht]{rahimi2007random}
Rahimi, A. and Recht, B.
\newblock Random features for large-scale kernel machines.
\newblock \emph{Advances in neural information processing systems}, 20, 2007.

\bibitem[{\v{S}}altenis(1971)]{saltenis1971method}
{\v{S}}altenis, V.~R.
\newblock On a method of multi-extremal optimization.
\newblock \emph{Automatics and Computers (Avtomatika i Vychislitelnayya
  Tekchnika)}, 3:\penalty0 33--38, 1971.

\bibitem[Scott et~al.(2011)Scott, Frazier, and Powell]{scott2011correlated}
Scott, W., Frazier, P., and Powell, W.
\newblock The correlated knowledge gradient for simulation optimization of
  continuous parameters using {G}aussian process regression.
\newblock \emph{SIAM Journal on Optimization}, 21\penalty0 (3):\penalty0
  996--1026, 2011.

\bibitem[Simon(1956)]{simon1956rational}
Simon, H.~A.
\newblock Rational choice and the structure of the environment.
\newblock \emph{Psychological review}, 63\penalty0 (2):\penalty0 129, 1956.

\bibitem[Simpson et~al.(2021)Simpson, Lalchand, and
  Rasmussen]{simpson2021marginalised}
Simpson, F., Lalchand, V., and Rasmussen, C.~E.
\newblock Marginalised {G}aussian processes with nested sampling.
\newblock \emph{Advances in neural information processing systems},
  34:\penalty0 13613--13625, 2021.

\bibitem[Snoek et~al.(2012)Snoek, Larochelle, and Adams]{snoek2012practical}
Snoek, J., Larochelle, H., and Adams, R.~P.
\newblock Practical {B}ayesian optimization of machine learning algorithms.
\newblock \emph{Advances in neural information processing systems}, 25, 2012.

\bibitem[Srinivas et~al.(2009)Srinivas, Krause, Kakade, and
  Seeger]{srinivas2009gaussian}
Srinivas, N., Krause, A., Kakade, S.~M., and Seeger, M.
\newblock {G}aussian process optimization in the bandit setting: No regret and
  experimental design.
\newblock \emph{arXiv preprint arXiv:0912.3995}, 2009.

\bibitem[Sutton \& Barto(2018)Sutton and Barto]{sutton2018reinforcement}
Sutton, R.~S. and Barto, A.~G.
\newblock \emph{Reinforcement learning: An introduction}.
\newblock MIT press, 2018.

\bibitem[T{\"o}rn \& {\v{Z}}ilinskas(1989)T{\"o}rn and
  {\v{Z}}ilinskas]{torn1989global}
T{\"o}rn, A. and {\v{Z}}ilinskas, A.
\newblock \emph{Global optimization}, volume 350.
\newblock Springer, 1989.

\bibitem[Tsirelson et~al.(1976)Tsirelson, Ibragimov, and
  Sudakov]{tsirelson1976norms}
Tsirelson, B., Ibragimov, I., and Sudakov, V.
\newblock Norms of {G}aussian sample functions.
\newblock In \emph{Proceedings of the Third Japan—USSR Symposium on
  Probability Theory}, volume 550, pp.\  20--41. Tashkent, 1976.

\bibitem[Vazquez \& Bect(2010)Vazquez and Bect]{vazquez2010convergence}
Vazquez, E. and Bect, J.
\newblock Convergence properties of the expected improvement algorithm with
  fixed mean and covariance functions.
\newblock \emph{Journal of Statistical Planning and inference}, 140\penalty0
  (11):\penalty0 3088--3095, 2010.

\bibitem[von K{\"u}gelgen et~al.(2019)von K{\"u}gelgen, Rubenstein,
  Sch{\"o}lkopf, and Weller]{von2019optimal}
von K{\"u}gelgen, J., Rubenstein, P.~K., Sch{\"o}lkopf, B., and Weller, A.
\newblock Optimal experimental design via {B}ayesian optimization: active
  causal structure learning for {G}aussian process networks.
\newblock \emph{arXiv preprint arXiv:1910.03962}, 2019.

\bibitem[Wang \& Jegelka(2017)Wang and Jegelka]{wang2017max}
Wang, Z. and Jegelka, S.
\newblock Max-value entropy search for efficient {B}ayesian optimization.
\newblock In \emph{International Conference on Machine Learning}, pp.\
  3627--3635. PMLR, 2017.

\bibitem[Wilson et~al.(2018)Wilson, Hutter, and
  Deisenroth]{wilson2018maximizing}
Wilson, J., Hutter, F., and Deisenroth, M.
\newblock Maximizing acquisition functions for {B}ayesian optimization.
\newblock \emph{Advances in neural information processing systems}, 31, 2018.

\bibitem[Wilson et~al.(2020)Wilson, Borovitskiy, Terenin, Mostowsky, and
  Deisenroth]{wilson2020efficiently}
Wilson, J., Borovitskiy, V., Terenin, A., Mostowsky, P., and Deisenroth, M.
\newblock Efficiently sampling functions from {G}aussian process posteriors.
\newblock In \emph{International Conference on Machine Learning}, pp.\
  10292--10302. PMLR, 2020.

\bibitem[Wilson et~al.(2021)Wilson, Borovitskiy, Terenin, Mostowsky, and
  Deisenroth]{wilson2021pathwise}
Wilson, J.~T., Borovitskiy, V., Terenin, A., Mostowsky, P., and Deisenroth,
  M.~P.
\newblock Pathwise conditioning of {G}aussian processes.
\newblock \emph{The Journal of Machine Learning Research}, 22\penalty0
  (1):\penalty0 4741--4787, 2021.

\end{thebibliography}
\bibliographystyle{icml2024}

\newpage
\appendix

\section{Practical recommendations}
\label{sec:practical}

This section aims to fill in some of the gaps left by \cref{sec:method} by providing further details for various subproblems and design choices encountered in practice.

\subsection{How to construct confidence intervals}
\label{sec:practical_interval}

\begin{figure*}[t]
\centering
\includegraphics[width=0.9\textwidth]{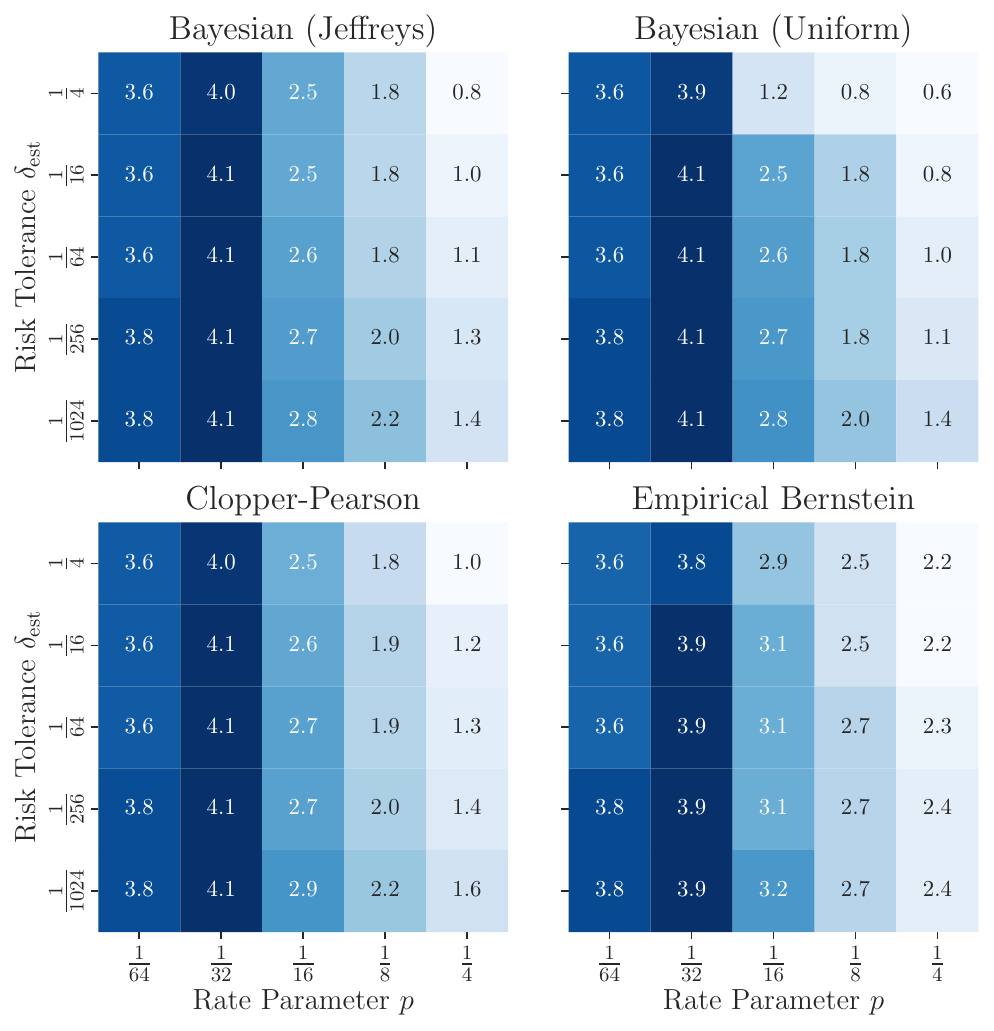}
\vspace{-2.5ex}
\caption{Median number of samples (show in $\log_{10}$) used by \cref{alg:prb_mc} to decide if the expectation of $Z \sim \operatorname{Bernoulli}(p)$ exceeds $\lambda = 2.5\%$ using different types of intervals with nominal coverage probability $1 - \ptolE$. The number of samples drawn is seen decrease in both $\ptolE$ and $\abs{p - \lambda}$.}
\label{fig:sampleCounts_comparison}
\end{figure*}

As discussed in the text, given a random variable $Z \in \R$, there are different ways of generating intervals $\c{I}_n \subseteq \R$ that contain the true parameter $\E[Z]$ with (nominal) coverage probability $1 - \ptolE$.

For Bernoulli random variables, Clopper-Pearson intervals \eqref{eqn:clopper_pearson} are a classic approach to this problem. This method is said to be "exact" because (instead of relying on the central limit theorem) it uses the fact that $X = \sum_{i=1}^{n} Z_i$ follows a Binomial distribution, where $Z_i$ is the $i$-th independent copy of $Z$. This method is also conservative: its (true) coverage probability is greater than or equal to $1 - \ptolE$.

Alternatively, one can take a Bayesian approach by placing a prior on success rate $p \in [0, 1]$ of the Binomial random variable $X \sim \operatorname{Bin}(n, p)$. If this prior is chosen to be a Beta distribution $p \sim \operatorname{Beta}(\alpha, \beta)$, then the posterior is conjugate and we have $p \given X \sim \operatorname{Beta}(\alpha + X, \beta + n - X)$. Sensible choices include Jeffreys prior $\operatorname{Beta}(\nicefrac{1}{2}, \nicefrac{1}{2})$ and the uniform prior $\operatorname{Beta}(1, 1)$.

The equal-tailed, Bayesian credible interval is obtained by taking the $\nicefrac{\ptolE}{2}$ and $1 - \nicefrac{\ptolE}{2}$ quantiles of $p \given X$. Unlike those of Clopper-Pearson, these intervals are not inherently conservative. Indeed, Clopper-Pearson intervals contain Jeffreys intervals \cite{brown2001interval}. This property is sometimes desirable, since it may mean that fewer samples are required to make a decision. In Bayesian optimization, however, one typically assumes that evaluating $f$ is far more expensive than simulating it. We therefore opted to use the more conservative choice.

Lastly, it should be said that bounds on estimation errors can also be obtained when $\RV$ is not Bernoulli. For example, previous works \cite{audibert2007tuning,mnih2008empirical,bardenet2014towards} proposed to use an empirical Bernstein bound to generate confidence intervals for random variables $\RV \in [a, b]$, defined here as
\begin{align}
\label{eqn:empirical_bernstein_bound}
\abs{\E[\RV] - \sampleMean_n} 
&\le 
\Delta_n
= S_{n} \sqrt{\frac{2 \log(3/\ptolE)}{n}} + \frac{3 (b - a) \log(3/\ptolE)}{n},
\end{align}
where $\sampleMean_n = \frac{1}{n}\sum_{i=1}^{n} \rv_i$ and $\sampleStd_{n}^2 = \frac{1}{n}\sum_{i=1}^{n}\del{\sampleMean_{n} - \rv_i}^2$ denote the empirical mean and variance. While conservative, the resulting intervals $\sampleMean_n \pm \Delta_n$ decay much faster than, e.g., their Hoeffding-inequality-based counterparts when $S_n$ is much smaller than $b - a$.

\cref{fig:sampleCounts_comparison} illustrates how each of the methods discussed above perform in the context of \cref{alg:prb_mc}.

\subsection{How to choose where to evaluate the stopping rule}

Per \cref{alg:bayesopt}, \cref{alg:prb_mc} may be evaluated in parallel on a set of candidates $\m{C} \subseteq \c{X}$. This confers certain advantages, such as the ability to share draws of $f_t$ and, hence, $f_t^*$ between points $\v{x} \in \m{C}$. However, we must divide risk tolerance $\ptolE^t$ by cardinality $\abs{\m{C}}$ to retain the union bound. Hence, \cref{alg:prb_mc} may be slow when $\m{C}$ is large. We should therefore chose $\m{C}$ with care.

If solutions must belong to the set of previously evaluate points, $\m{X}_t$, then we suggest to define
\[
\m{C} = \cbr{\v{x} \in \m{X}_t: \P\del{f_t(\bestx_t) - f_t(\v{x}) \le \rtol} \ge 1 - \ptolM},
\]
since the excluded points can safely be ignored. Empirically, we found that this heuristic usually eliminates all but a few points.

If solutions may be chosen freely on $\c{X}$, we instead recommend that $\m{C}$ be constructed using one of the alternative estimators from \cref{sec:method_sampling} (all of which are differentiable). In particular, we recommend using gradient-based methods to maximize the average of 
\begin{align}
\P\del{r_t(\v{x}) \le \rtol \given f_t(\v{x}) \le f_t^*,  f_t(\v{x}_t^*) = f_t^*}
&=
    \Phi\del{\frac{f_t^* - \mu_{t+1}(\v{x})}{k_{t+1}(\v{x}, \v{x})^{\nicefrac{1}{2}}}}^{-1}
    \Phi\del{\frac{\mu_{t+1}(\v{x}) - f^*_t - \rtol}{k_{t+1}(\v{x}, \v{x})^{\nicefrac{1}{2}}}},
\end{align}
over multiple draws of $f_t^*$ and $\v{x}_t^* \in \argmax_{\v{x} \in \c{X}} f_t(\v{x})$, where $\mu_{t+1}$ and $k_{t+1}$ are the posterior mean and variance of $f_t$ given an additional observation $f(\v{x}_t^*) = f_t^*$ and $\Phi$ denotes the standard normal cumulative distribution function. The resulting set of points can then be tested using \cref{alg:prb_mc}.

\subsection{How to schedule risk tolerances $\ptolE^t$}
\label{sec:practical_boSchedule}
Where possible, we recommend using a (conservatively chosen) budget $T \in \N$ for BO. This mean we suggest using PRB together with a fixed budget. Doing so not only ensures that the algorithm stops in a reasonable amount of time, but allows one to use a constant schedule
$
\ptolE^t = \frac{\ptolE}{T - T_0},
$
where $T_0 \in \N$ denotes the starting time. Note that this is how experiments from \cref{sec:experiments} were run. 

If no such budget is available, then we recommend adopting a strategy similar to \cref{alg:prb_mc} by only evaluating the stopping rule at certain steps $t$. For example, one may employ a geometric sequence of potential stopping times $(t_i)$, analogous to sample sizes $(n_j)$, and define
\begin{align}
&\ptolE^{t_i} = t_i^{-\alpha}\frac{(\alpha - 1)}{\alpha} \ptolE
&
&\alpha > 1,
\end{align}
like $d_t^j$ from \cref{sec:method_testing}. This practice ensures that $\ptolE^t$ does not decay too quickly. 

\newpage
\section{Technical proofs}
\label{sec:proofs}

The main results of this section are as follows. 
\begin{enumerate}[label=\roman*., leftmargin=*]
\item 
    \cref{prop:space_filling} shows that a family of acquisition functions produce dense sequences $(\v{x}_t)$.
\item 
    \cref{lem:variance_contraction} proves that variances vanish as $\m{X}_t$ becomes increasingly dense in $\c{X}$. 
\item 
    \cref{lem:expected_supremum} bounds the expected supremum of $f \sim \c{GP}(0, k)$ in terms of its maximum variance.
\item 
    \cref{thm:convergence} and \cref{cor:convergence_inSample} show that the \crit{} stopping criterion almost surely converges.
\item 
    \cref{thm:stopping_correctness} proves that BO with the \crit{} rule terminates and returns an $(\rtol, \ptol)$-optimal solution. 
\end{enumerate}

Many of the findings presented here and discussed previously borrow heavily from earlier works. Where appropriate, we attribute credit at the beginning of each proof. 
\vspace{12pt}

\begin{definition}
\label{def:no_empty_ball}
Kernel $k \hspace{-1pt}:\hspace{-1pt} \c{X} \hspace{-1pt}\times\hspace{-1pt} \c{X} \hspace{-1pt}\to\hspace{-1pt} \R$ has the no-empty-ball property \cite{vazquez2010convergence} if, for any sequence $(\v{x}_t)$, the posterior variance $\Var\sbr{f(\v{x}) \given f(\m{X}_t)}$ at a point $\v{x} \in \c{X}$ goes to zero as $t \to \infty$ if and only if $\v{x}$ is an adherent point of $\cbr{\v{x}_t: t \ge 0}$.
\end{definition}
\vspace{2pt}

\begin{proposition}
\label{prop:space_filling}
Let $f \sim \c{GP}(0, k)$ be a prior over functions on a compact space $\c{X} \in \R^D$ and $V_t: \c{X} \to \R$ be a continuous acquisition function for $f_t$. If $k$ is a continuous kernel that admits the no-empty-ball property and
\begin{align}
&k_t(\v{x}, \v{x}) > k_t(\v{x}', \v{x}') = 0 \implies V_{t}(\v{x}) >  V_{t}(\v{x}')
&
&\forall t \in \N \t{ and } \forall \v{x}, \v{x}' \in \c{X},
\end{align}
then the sequence $(\v{x}_t)$ of points $\v{x}_t \in \argmax_{\v{x} \in \c{X}} V_t(\v{x})$ is dense in $\c{X}$.
\end{proposition}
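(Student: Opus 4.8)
The plan is to argue by contradiction, following the template of \citet{vazquez2010convergence}. Suppose the sequence $(\v{x}_t)$ of maximizers is not dense. Then $\c{A} = \overline{\cbr{\v{x}_t : t \ge 1}}$ is a proper closed subset of $\c{X}$, so there is a point $\v{z} \in \c{X} \setminus \c{A}$ and a radius $\rho > 0$ with the ball $\c{B} = \cbr{\v{x} \in \c{X} : \norm{\v{x} - \v{z}} < \rho}$ disjoint from $\c{A}$. Every point of $\c{B}$ is then a non-adherent point of the query sequence, so by the no-empty-ball property $\lim_{t\to\infty} k_t(\v{x}, \v{x}) > 0$ for each $\v{x} \in \c{B}$ (the limit exists because posterior variances are nonincreasing in $t$).

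The first substantive step is to upgrade this pointwise statement to a uniform bound: there is $\smax_0 > 0$ with $k_t(\v{x},\v{x}) \ge \smax_0^2$ for all $\v{x}$ in the smaller ball $\c{B}' = \cbr{\v{x} : \norm{\v{x}-\v{z}} < \rho/2}$ and all $t$. I would get this by noting that $k_\infty(\v{x},\v{x}) := \lim_t k_t(\v{x},\v{x})$ is the posterior variance of $f(\v{x})$ given the whole (infinite) design; writing it in the RKHS of $k$ as $k(\v{x},\v{x}) - \norm{\Pi k(\v{x},\cdot)}^2$ for a fixed orthogonal projection $\Pi$ shows $k_\infty$ is continuous (since $\v{x} \mapsto k(\v{x},\cdot)$ is norm-continuous when $k$ is continuous), hence attains a positive minimum on the compact set $\overline{\c{B}'}$; as $k_t \ge k_\infty$ on the diagonal, the same bound holds for every $k_t$. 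An equivalent route is the equicontinuity estimate $\abs{k_t(\v{x},\v{x}) - k_t(\v{x}',\v{x}')} \le C\sqrt{k(\v{x},\v{x}) - 2k(\v{x},\v{x}') + k(\v{x}',\v{x}')}$, uniform in $t$, which I will reuse below (this is also the ingredient behind \cref{lem:variance_contraction}).

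The second step is to show the variance at the queried points vanishes, $k_t(\v{x}_t,\v{x}_t) \to 0$. Given any subsequence, compactness of $\c{X}$ yields a further subsequence $\v{x}_{t_j} \to \v{x}^\star$; the limit $\v{x}^\star$ is adherent to $\cbr{\v{x}_t}$, so $k_t(\v{x}^\star,\v{x}^\star) \to 0$ by the no-empty-ball property, and the uniform equicontinuity estimate transfers this to $k_{t_j}(\v{x}_{t_j},\v{x}_{t_j}) \to 0$; since every subsequence has such a sub-subsequence, the full sequence converges to $0$. Now the hypothesis enters: because $\overline{\c{B}'}$ consists of points of variance at least $\smax_0^2 > 0$ while (in the directly-observed regime, for $t$ large) there exist points of zero variance, the assumption $k_t(\v{x},\v{x}) > k_t(\v{x}',\v{x}')=0 \Rightarrow V_t(\v{x}) > V_t(\v{x}')$ forces $\max_{\c{X}} V_t = V_t(\v{x}_t) \ge \sup_{\overline{\c{B}'}} V_t$, with the supremum attained at some $\v{y}_t \in \overline{\c{B}'}$ having variance $\ge \smax_0^2$. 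Comparing $\v{y}_t$ (fixed positive variance) with $\v{x}_t$ (variance $\to 0$) through continuity of $V_t$ near the no-information set should give $V_t(\v{y}_t) > V_t(\v{x}_t)$ for $t$ large, contradicting maximality of $\v{x}_t$.

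The hard part is exactly this last comparison: continuity of each $V_t$ in isolation does not license a limit over $t$, so concluding that $V_t(\v{x}_t)$ eventually drops below $\sup_{\overline{\c{B}'}} V_t$ requires that the acquisition family be regular near $\cbr{\v{x} : k_t(\v{x},\v{x}) = 0}$ uniformly in $t$---equivalently, that $V_t(\v{x}_t) \to 0$. For the concrete members this holds for structural reasons: for Knowledge Gradient $V_t(\v{x}_t)$ is the one-step gain $\E\sbr{\max_{\v{x}\in\c{X}}\mu_{t+1}(\v{x}) - \max_{\v{x}\in\c{X}}\mu_t(\v{x})}$, whose increments are summable because $\max_{\v{x}\in\c{X}}\mu_t(\v{x})$ is a bounded submartingale, and for Entropy Search it is the per-step information gain about the arg-max, whose total is bounded by the prior entropy. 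I would either invoke this property directly or, mimicking \citet{vazquez2010convergence}, note that $\mu_t$ and the incumbent value stay uniformly bounded so the acquisition's shape factor on $\overline{\c{B}'}$ is bounded below by a positive constant while its value at $\v{x}_t$ is squeezed to zero; I expect the final write-up to isolate this as the one piece of structure beyond continuity and no-empty-ball that the argument actually uses.
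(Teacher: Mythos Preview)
Your contradiction argument is in the same Vazquez--Bect family as the paper's, but the paper organizes things differently and more tersely. It does not work from a non-adherent ball; instead, after normalizing WLOG so that $V_t \ge 0$ with $V_t(\v{x}) = 0 \Leftrightarrow k_t(\v{x},\v{x}) = 0$, it picks an accumulation point $\v{z}$ of $(\v{x}_t)$, takes two subsequences $(\v{x}_{a_t}), (\v{x}_{b_t}) \to \v{z}$, and bounds the posterior variance along one via the canonical pseudo-metric to the other, $\Var[f(\v{x}_{\alpha_t}) \mid f(\m{X}_t)] \le d_k(\v{x}_{\alpha_t}, \v{x}_{\beta_t})^2 \to 0$. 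From this it asserts $V_t(\v{x}_t) \to 0$, hence by maximality $\max_{\v{x}} k_t(\v{x},\v{x}) \to 0$, and the no-empty-ball property delivers density directly. So the paper's chain is ``variance at queries vanishes $\Rightarrow V_t(\v{x}_t) \to 0 \Rightarrow$ max variance vanishes $\Rightarrow$ dense,'' whereas yours is ``not dense $\Rightarrow$ a region of uniformly positive variance $\Rightarrow$ contradiction with $V_t(\v{x}_t)$ small.'' Your uniform lower bound on $\overline{\c{B}'}$ via continuity of $k_\infty$ has no counterpart in the paper; conversely, the paper's two-subsequence canonical-distance trick replaces your equicontinuity route to $k_t(\v{x}_t,\v{x}_t) \to 0$.

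The uniformity issue you isolate in your last paragraph---that per-$t$ continuity of $V_t$ does not by itself let you pass from $k_t(\v{x}_t,\v{x}_t) \to 0$ to $V_t(\v{x}_t) \to 0$---is precisely the step the paper does \emph{not} argue from the stated hypotheses: it simply cites \citet[Proposition~12]{vazquez2010convergence} at that point and moves on. Your proposal to close it via acquisition-specific summability (KG increments telescoping into a bounded submartingale; ES information gains summing to a bounded entropy) is more explicit than what the paper writes and is, in effect, the content hiding behind that citation for the examples named in the text. So your route is sound and arguably more transparent about the extra structure actually being used; the paper's is shorter because it offloads this step to the reference.
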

\begin{proof}
Follows immediately from \cite{vazquez2010convergence}. Without loss of generality, suppose $V_t(\v{x})$ is non-negative and equals zero if and only if $k_t(\v{x}, \v{x}) = 0$. Let $(\v{x}_{a_t})$ and $(\v{x}_{b_t})$ be subsequence of $(\v{x}_t)$ that converge to an accumulation point $\v{z} \in \c{X}$ and write $\alpha_t = \max \cbr{a_i: a_i \le t}$ and  $\beta_t = \max \cbr{b_i: b_i \le t}$. Then,
\begin{align}
\begin{split}
\label{eqn:posteriorVariance_lessEq_canonicalDistance}
\Var[f(\v{x}_{\alpha_t}) \given f(\m{X}_{t})]
& \le 
    \Var[f(\v{x}_{\alpha_t}) \given f(\v{x}_{\beta_t})]
\\ & \le 
k(\v{x}_{\alpha_t}, \v{x}_{\alpha_t}) + k(\v{x}_{\beta_t}, \v{x}_{\beta_t}) - 2 k(\v{x}_{\alpha_t}, \v{x}_{\beta_t}).
\end{split}
\end{align}
Since $(\v{x}_{a_t})$ and $(\v{x}_{b_t})$ both converge to $\v{z}$ and $k$ is assumed continuous, \eqref{eqn:posteriorVariance_lessEq_canonicalDistance} goes to zero as $t \to \infty$. Consequently, $V_{a_t}(\v{x}_{a_t})$ and, therefore, $V_t(\v{x}_t)$ must also vanish as $t \to \infty$ \cite[Proposition 12]{vazquez2010convergence}. By definition of $\v{x}_t$, it follows that $\lim_{t \to \infty} \max_{\v{x} \in \c{X}} k_t(\v{x}, \v{x}) = 0$. The no-empty-ball property now gives the result.
\end{proof}

\begin{proposition}
\label{prop:cover_restriction}
Let $\c{X} \subseteq \R^D$ be convex and suppose that $\m{X} \subseteq \c{X}$ generates an $\varepsilon$-cover of $\c{X}$. For every $\v{x} \in \c{X}$ and $\rho \ge \varepsilon$, the intersection of the set $\m{X}$ and the ball 
$
B(\v{x}, \rho) = \cbr{\v{x}' \in \c{X}: \norm{\v{x} - \v{x}'}_{\infty} \le \rho}
$
generates a $2\varepsilon$-cover of $B(\v{x}, \rho)$.
\end{proposition}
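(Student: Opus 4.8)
The plan is to fix an arbitrary $\v{y} \in B(\v{x}, \rho)$ and exhibit a point of $\m{X} \cap B(\v{x}, \rho)$ lying within $\infty$-distance $2\varepsilon$ of it. The naive attempt---invoking the $\varepsilon$-cover property directly at $\v{y}$---can fail, since the cover point nearest to $\v{y}$ may lie just outside $B(\v{x},\rho)$ when $\v{y}$ is near the boundary of the ball. The remedy is to first retract $\v{y}$ a controlled amount toward the center $\v{x}$, using convexity of $\c{X}$ to remain in the domain.

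Concretely, I would set $\v{z} = (1 - \tfrac{\varepsilon}{\rho})\v{y} + \tfrac{\varepsilon}{\rho}\v{x}$. This is well defined because $\tfrac{\varepsilon}{\rho} \in [0,1]$ by the hypothesis $\rho \ge \varepsilon$, and $\v{z} \in \c{X}$ because $\c{X}$ is convex. From $\norm{\v{x} - \v{y}}_\infty \le \rho$ two elementary bounds follow: $\norm{\v{z} - \v{y}}_\infty = \tfrac{\varepsilon}{\rho}\norm{\v{x} - \v{y}}_\infty \le \varepsilon$, and $\norm{\v{z} - \v{x}}_\infty = (1 - \tfrac{\varepsilon}{\rho})\norm{\v{y} - \v{x}}_\infty \le \rho - \varepsilon$, so $\v{z}$ lies well inside the ball.

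Next I would apply the $\varepsilon$-cover hypothesis at $\v{z} \in \c{X}$ to obtain $\v{x}' \in \m{X}$ with $\norm{\v{z} - \v{x}'}_\infty \le \varepsilon$. The triangle inequality then gives $\norm{\v{x}' - \v{x}}_\infty \le \norm{\v{x}' - \v{z}}_\infty + \norm{\v{z} - \v{x}}_\infty \le \varepsilon + (\rho - \varepsilon) = \rho$, so $\v{x}' \in \m{X} \cap B(\v{x}, \rho)$; and $\norm{\v{y} - \v{x}'}_\infty \le \norm{\v{y} - \v{z}}_\infty + \norm{\v{z} - \v{x}'}_\infty \le 2\varepsilon$. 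Since $\v{y}$ was arbitrary, $\m{X} \cap B(\v{x},\rho)$ is a $2\varepsilon$-cover of $B(\v{x},\rho)$.

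There is no genuine obstacle here beyond pinning down the retraction amount: it must be large enough to drag the cover point back inside $B(\v{x},\rho)$ (hence moving $\v{z}$ at least $\varepsilon$ toward the center, in $\infty$-norm) yet small enough that $\v{y}$ itself moves by at most $\varepsilon$. The scalar $\tfrac{\varepsilon}{\rho}$ threads this needle precisely because $\norm{\v{x}-\v{y}}_\infty \le \rho$, and convexity of $\c{X}$ is exactly what keeps $\v{z}$ admissible for the cover property. The degenerate cases ($\v{y} = \v{x}$, or $\v{y}$ already within $\rho - \varepsilon$ of $\v{x}$) are subsumed by the same computation.
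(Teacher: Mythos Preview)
Your proof is correct and follows essentially the same approach as the paper: retract the target point toward the center $\v{x}$ into the inner ball of radius $\rho-\varepsilon$, apply the $\varepsilon$-cover there, and conclude by two triangle inequalities. The only difference is cosmetic---you give the retracted point explicitly as the convex combination $\v{z}=(1-\tfrac{\varepsilon}{\rho})\v{y}+\tfrac{\varepsilon}{\rho}\v{x}$, whereas the paper merely asserts the existence of such a point in $B(\v{x},\rho-\varepsilon)$ from convexity.
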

\begin{proof}
Consider the ball $B(\v{x}, r)$ with radius $r = \rho - \varepsilon$. Since $\c{X}$ is convex, for every point $\v{a} \in B(\v{x}, \rho)$ there exists a $\v{b} \in B(\v{x}, r)$ such that $\norm{\v{a} - \v{b}}_{\infty} \le \varepsilon$. Moreover, because $\m{X}$ generates an $\varepsilon$-cover of $\c{X}$, for every point $\v{b} \in B(\v{x}, r)$ there exists a $\v{c} \in \m{X}$ so that $\norm{\v{b} - \v{c}}_{\infty} \le \varepsilon$, which implies that $\v{c} \in B(\v{x}, \rho)$. It follows by the triangle inequality that for every point $\v{a} \in B(\v{x}, \rho)$ there exists a pair of points $\v{b}, \v{c} \in B(\v{x}, r) \times \sbr{B(\v{x}, \rho) \cap \m{X}}$ such that
\[
\norm{\v{a} - \v{c}}_{\infty} 
\le 
    \norm{\v{a} - \v{b}}_{\infty} 
    + 
    \norm{\v{b} - \v{c}}_{\infty}
\le 
    \varepsilon + \varepsilon
= 
    2 \varepsilon,
\]
which completes the proof. 
\end{proof}

\begin{restatable}{lemma}{lemVarianceContraction}
\label{lem:variance_contraction}
Under assumptions \BlackRef{ass:domain_compact} and \BlackRef{ass:kernel_lipschitz}, if $y(\.) \sim \c{N}\del{f(\.), \noise^2}$ is observed on a set of points $\m{X} \subseteq \c{X}$ that generates an $\varepsilon$-cover of $\c{X}$, $0 \le \varepsilon \le \min\cbr{1, \nicefrac{k(\v{x}, \v{x})}{\lip}},$ 
then
\begin{align}
\Var\sbr{f(\v{x}) \given y(\m{X})}
\le
    \kappa_{\varepsilon}(\v{x}),
\end{align}
where
\[
\kappa_{\varepsilon}(\v{x}) = \frac{
    \sbr{4 \lip \rho(\varepsilon) k(\v{x}, \v{x}) - \lip^2 \rho(\varepsilon)^2}\eta(\varepsilon) + \noise^2 k(\v{x}, \v{x})
}{
    \sbr{k(\v{x}, \v{x}) + 2 \lip \rho(\varepsilon)}\eta(\varepsilon) + \noise^2
}  \nonumber
\]
is given in terms of 
$
\eta(\varepsilon) = \max\cbr{1, \nicefrac{\rho(\varepsilon)}{4 \varepsilon}}^{D}
$ 
and 
$\rho(\varepsilon) = \varepsilon^{\varepsilon}$ for any $0 < \varepsilon < 1$.
\end{restatable}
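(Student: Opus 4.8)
The plan is to combine two elementary facts: conditioning a Gaussian on a coarser statistic can only enlarge its conditional variance, and \cref{prop:cover_restriction} converts a global $\varepsilon$-cover into a $2\varepsilon$-cover of any ball, which forces enough observation sites to lie near $\v{x}$. Throughout I write $\bar{k} = k(\v{x},\v{x})$, $\rho = \rho(\varepsilon)$, $\eta = \eta(\varepsilon)$, and $B = B(\v{x},\rho)$, and I use that $\rho \ge \varepsilon$ (so \cref{prop:cover_restriction} applies) and $\rho \le 1$.

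First I would reduce to a single, averaged observation. Let $\m{X}_{\rho} = \m{X} \cap B$ and $m = \abs{\m{X}_{\rho}}$; since $\m{X}$ is an $\varepsilon$-cover and $\varepsilon \le \rho$, we have $m \ge 1$. Put $\overline{y} = \tfrac{1}{m}\sum_{\v{x}' \in \m{X}_{\rho}} y(\v{x}')$ and $\overline{f} = \tfrac{1}{m}\sum_{\v{x}' \in \m{X}_{\rho}} f(\v{x}')$, so that $\overline{y} = \overline{f} + \nu$ with $\nu \sim \c{N}(0,\noise^{2}/m)$ independent of $f$. Since $\overline{y}$ is a measurable function of $y(\m{X})$, conditioning on $y(\m{X})$ can only shrink variances, and the scalar Gaussian conditioning identity gives
\[
\Var\sbr{f(\v{x}) \given y(\m{X})} \le \Var\sbr{f(\v{x}) \given \overline{y}} = \bar{k} - \frac{\del{\E\sbr{f(\v{x})\,\overline{f}}}^{2}}{\Var\sbr{\overline{f}} + \noise^{2}/m}.
\]

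Next I would count observations and invoke Lipschitz continuity. By \cref{prop:cover_restriction} (using convexity of $\c{X}$ from \BlackRef{ass:domain_compact}), $\m{X}_{\rho}$ is a $2\varepsilon$-cover of $B$, so $m$ is at least the $2\varepsilon$-covering number of $B$ in $\norm{\cdot}_{\infty}$; since $B$ has volume at least $\rho^{D}$ even when it is truncated by $\partial\c{X}$, a volume count yields $m \ge \rho^{D}/(4\varepsilon)^{D}$ and hence $m \ge \eta$. By \BlackRef{ass:kernel_lipschitz}, each $\v{x}' \in \m{X}_{\rho}$ satisfies $k(\v{x},\v{x}') \ge \bar{k} - \lip\rho$ and $k(\v{x}',\v{x}') \le k(\v{x},\v{x}') + \lip\rho \le \bar{k} + 2\lip\rho$. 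Averaging the first bound, $\E\sbr{f(\v{x})\,\overline{f}} = \tfrac{1}{m}\sum_{\v{x}'} k(\v{x},\v{x}') \ge \bar{k} - \lip\rho \ge 0$, while the triangle inequality in $L^{2}$ gives $\Var\sbr{\overline{f}} \le \del{\tfrac{1}{m}\sum_{\v{x}'} k(\v{x}',\v{x}')^{1/2}}^{2} \le \bar{k} + 2\lip\rho$. Substituting these into the identity above and using $\noise^{2}/m \le \noise^{2}/\eta$ gives
\[
\Var\sbr{f(\v{x}) \given y(\m{X})} \le \bar{k} - \frac{\del{\bar{k} - \lip\rho}^{2}}{\bar{k} + 2\lip\rho + \noise^{2}/\eta} = \frac{\del{4\lip\rho\,\bar{k} - \lip^{2}\rho^{2}}\eta + \noise^{2}\bar{k}}{\del{\bar{k} + 2\lip\rho}\eta + \noise^{2}} = \kappa_{\varepsilon}(\v{x}),
\]
the middle equality being a one-line rearrangement.

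I expect this to be bookkeeping rather than ideas. The delicate points are: tracking the constants $4\lip\rho$, $2\lip\rho$, $\lip^{2}\rho^{2}$ so they reproduce $\kappa_{\varepsilon}$ exactly; making $m \ge \eta$ airtight for a box that may be clipped by $\partial\c{X}$; and handling the degenerate regime in which $\bar{k} - \lip\rho < 0$, where $\bar{k} - \lip\rho$ fails to be a valid lower bound for $\E\sbr{f(\v{x})\,\overline{f}}$ (and the displayed bound can even go negative). There one must check that the hypothesis on $\varepsilon$ together with the chosen $\rho(\varepsilon)$ already forces $\lip\rho \le \bar{k}$, or else restrict to the subrange of $\varepsilon$ where this holds and cover the rest using $\Var\sbr{f(\v{x}) \given y(\m{X})} \le \bar{k}$ together with monotonicity of $\kappa_{\varepsilon}$ in the observation count; this is the one spot where I would expect to spend real effort.
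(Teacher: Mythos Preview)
Your argument is correct and arrives at the same bound, but the route differs from the paper's in one respect worth noting. The paper does not derive the intermediate bound
\[
\Var\sbr{f(\v{x}) \given y(\m{B}_{\rho}(\v{x}))}
\le
\frac{(4\lip\rho\,\bar{k} - \lip^{2}\rho^{2})\,m + \noise^{2}\bar{k}}{(\bar{k} + 2\lip\rho)\,m + \noise^{2}},
\qquad m = \abs{\m{B}_{\rho}(\v{x})},
\]
from scratch; it quotes it as Theorem~3.1 of Lederer et al.\ and then argues monotonicity in $\rho$ and in $m$ to justify replacing $m$ by $\eta(\varepsilon)$ and $\rho$ by $\rho(\varepsilon)$. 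You instead reprove this inequality inline by conditioning on the scalar average $\overline{y}$, bounding the relevant covariances with \BlackRef{ass:kernel_lipschitz}, and simplifying. The covering-number step (using \cref{prop:cover_restriction} to show $\m{X}_{\rho}$ is a $2\varepsilon$-cover of $B(\v{x},\rho)$ and then a volume/corner-box count to get $m \ge \eta$) is the same in both proofs. What your approach buys is self-containment: the reader does not need the external reference, and the role of the averaging trick is made explicit. What the paper's route buys is brevity and a clean separation into ``known bound'' plus ``parameter substitution.'' The concern you flag about needing $\lip\rho(\varepsilon) \le \bar{k}$ (so that $\bar{k} - \lip\rho \ge 0$) is real, and the paper handles it in exactly the way you anticipate: it invokes the Lederer bound only on the range $0 \le \rho \le \bar{k}/\lip$ and relies on the stated hypothesis on $\varepsilon$ together with the choice of $\rho(\varepsilon)$, without spelling out the endpoint cases in more detail than you do.
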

\begin{proof}
This result extends \citet[Theorem 3.1]{lederer2019posterior}, who showed that, for all $0 \le \rho \le \nicefrac{k(\v{x}, \v{x})}{\lip}$,
\begin{align}
\label{eqn:thm_varianceContraction_lederer}
\Var\sbr{f(\v{x}) \given y\del{\m{B}_{\rho}(\v{x})}}
\le
    \frac{
        \del{4 \lip \rho k(\v{x}, \v{x}) - \lip^2 \rho^2}\abs{\m{B}_{\rho}(\v{x})} + \noise^2 k(\v{x}, \v{x})
    }{
        \del{k(\v{x}, \v{x}) + 2 \lip \rho}\abs{\m{B}_{\rho}(\v{x})} + \noise^2
    }
,
\end{align}
where $\abs{\m{B}_{\rho}(\v{x})}$ is the cardinality of the set $\m{B}_{\rho}(\v{x}) = B(\v{x}, \rho) \cap \m{X}$.
We would like to convert this upper bound into a function of $0 \le \varepsilon \le 1$. To this end, begin by noticing that the bound \eqref{eqn:thm_varianceContraction_lederer} increases monotonically on $0 \le \rho \le \nicefrac{k(\v{x}, \v{x})}{\lip}$ and decreases monotonically on $n = \abs{\m{B}_{\rho}(\v{x})} \in \N_{0}$. Substituting $\rho(\varepsilon)$ for $\rho$ and $\eta(\varepsilon)$ for $n$ therefore yields a valid bound so long as $\rho \le \rho(\varepsilon) \le \nicefrac{k(\v{x}, \v{x})}{\lip}$ and $0 \le \eta(\varepsilon) \le n$. For clarity, note that $\rho(\varepsilon)$ defines the radius of a ball around $\v{x}$ and $\eta(\varepsilon)$ denotes the minimum possible number of elements from $\m{X}$ that lie within this ball.

Starting with the latter, lower bounds on the cardinality of $\m{B}_{\rho}(\v{x})$ may be obtained from the fact that $\m{X}$ is assumed to generate an $\varepsilon$-cover of $\c{X}$. By \cref{prop:cover_restriction}, it follows that $\m{B}_{\rho}(\v{x})$ generates a $2\varepsilon$-cover of $B(\v{x}, \rho)$. Accordingly, $\abs{\m{B}_{\rho}(\v{x})}$ must be greater-equal to the minimum number of points required to construct such a cover. Under the $\norm{\.}_{\infty}$ norm, the $\varepsilon$-covering number of a ball 
\[
B(\v{x}, \rho)
= 
    \prod_{d=1}^{D} \sbr{\max(x_d - \rho, 0), \min(x_d + \rho, 1)}
\]
is given by
\[
M\del[1]{B(\v{x}, \rho), \norm{\.}_{\infty}, \varepsilon}
=
    \prod_{d=1}^{D} 
    \bigg\lceil
    \frac{\min(x_d + \rho, 1) - \max(0, x_d - \rho)}{2\varepsilon}
    \bigg\rceil.
\]
This number is minimized when $B(\., \rho)$ is placed in a corner, such as $B(\v{0}, \rho) = [0, \rho]^{D}$. Choosing
\[
\eta(\varepsilon) 
=
    \max\cbr{1, \del{\frac{\rho(\varepsilon)}{4 \varepsilon}}^{D}}
\le 
    \bigg\lceil \frac{\rho(\varepsilon)}{4 \varepsilon} \bigg\rceil^{D}
\]
therefore ensures that $\eta(\varepsilon)$ lower bounds the cardinality of every $\m{B}_{\rho}(\.)$. Note that there are two factors of two at play here: one accounts for the fact that $\m{B}_{\rho}(\.)$ is only guaranteed to provide a $2\varepsilon$-cover of $B(\., \rho)$, and the other accounts for the fact that the corner balls are up to $2^D$ times smaller than other balls with the same radius.

Turning our attention to the choice of function $\rho(\varepsilon)$, some desiderata come into focus. First, we require $\rho(\varepsilon) \ge \varepsilon$ so that every $\m{B}_{\rho}(\.)$ is nonempty. Second, we desire $\lim_{\varepsilon \to 0^{+}} \rho(\varepsilon) = 0$ because the resulting posterior variance bound will increase monotonically in $\rho(\varepsilon)$. Lastly, we want the ratio of $\rho(\varepsilon)$ to $\varepsilon$ to diverge to infinity as $\varepsilon$ approaches zero from above so that $\lim_{\varepsilon \to 0^{+}} \eta(\varepsilon) = \infty$. Based on these criteria, a convenient choice when $\c{X} = [0, 1]^D$ is 
\begin{align}
&\rho(\varepsilon) = \varepsilon^{\alpha} 
&
&0 < \alpha < 1.
\end{align}

In summary, the claim follows by expressing $\rho$ as a function of $\varepsilon$ and using it to lower bound $\abs{\m{B}_{\rho}(\.)}$ with $\eta(\varepsilon)$:
\[
\Var\sbr{f(\v{x}) \given y(\m{X})}
\le
    \frac{
        \del{4 \lip \rho(\varepsilon) k(\v{x}, \v{x}) - \lip^2 \rho(\varepsilon)^2 }\eta(\varepsilon) + \noise^2 k(\v{x}, \v{x})
    }{
        \del{k(\v{x}, \v{x}) + 2 \lip \rho(\varepsilon)}\eta(\varepsilon) + \noise^2
    }
.
\]
\end{proof}

\begin{proposition}
\label{prop:dudley_integral_bound}
For any choice of constants $a > 0$, $b \ge 0$, $c \ge 0$,
\[
\int_{0}^{c} \sqrt{\log(1 + b \varepsilon^{-\nicefrac{1}{a}})} d \varepsilon 
\le 
c \sqrt{a^{-1} + \log\del{1 + bc^{-\nicefrac{1}{a}}}}
.
\]
\end{proposition}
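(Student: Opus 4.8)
The plan is to collapse the integral to an elementary one using Jensen's inequality and then bound the logarithmic integrand by a sum in which $\varepsilon$ and $c$ are decoupled. First, since $t \mapsto \sqrt{t}$ is concave, treating $\tfrac{1}{c}\,d\varepsilon$ as a probability measure on $[0,c]$ and applying Jensen's inequality gives
\[
\int_{0}^{c} \sqrt{\log(1 + b \varepsilon^{-1/a})}\, d\varepsilon
\le
c\sqrt{\frac{1}{c}\int_{0}^{c} \log(1 + b \varepsilon^{-1/a})\, d\varepsilon}.
\]
So it suffices to prove $\int_{0}^{c} \log(1 + b \varepsilon^{-1/a})\, d\varepsilon \le c\bigl(a^{-1} + \log(1 + b c^{-1/a})\bigr)$.

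The key step is a pointwise bound valid for every $\varepsilon \in (0, c]$: since $(c/\varepsilon)^{1/a} \ge 1$ when $0 < \varepsilon \le c$, we have
\[
1 + b \varepsilon^{-1/a} \le (c/\varepsilon)^{1/a} + b\varepsilon^{-1/a} = (c/\varepsilon)^{1/a}\bigl(1 + b c^{-1/a}\bigr),
\]
and taking logarithms yields $\log(1 + b\varepsilon^{-1/a}) \le \tfrac{1}{a}\log(c/\varepsilon) + \log(1 + b c^{-1/a})$. Integrating this over $[0,c]$ and using the elementary identity $\int_{0}^{c} \log(c/\varepsilon)\, d\varepsilon = c$ (substitute $u = \varepsilon/c$ and note $\int_0^1 -\log u\, du = 1$) gives $\int_{0}^{c}\log(1 + b\varepsilon^{-1/a})\, d\varepsilon \le \tfrac{c}{a} + c\log(1 + bc^{-1/a})$. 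Dividing by $c$ and substituting into the Jensen bound produces exactly $c\sqrt{a^{-1} + \log(1 + bc^{-1/a})}$, as claimed. The degenerate cases $b = 0$ and $c = 0$ are trivial and covered by the same chain of inequalities.

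To the extent there is a difficulty in so short an argument, it lies entirely in spotting the factorization $1 + b\varepsilon^{-1/a} \le (c/\varepsilon)^{1/a}(1 + bc^{-1/a})$ that separates the $\varepsilon$-dependence from the $c$-dependence inside the logarithm; once this is in hand, the remaining work is routine integration. I would also note in passing that the original integrand is integrable near $0$ — it grows like $\sqrt{a^{-1}\log(1/\varepsilon)}$ as $\varepsilon \to 0^{+}$ — so all the integrals above are finite and the Jensen step is legitimate.
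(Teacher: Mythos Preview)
Your proof is correct. The two ingredients you use are exactly those in the paper's own proof, just arranged differently: your factorization $1 + b\varepsilon^{-1/a} \le (c/\varepsilon)^{1/a}(1 + bc^{-1/a}) = (c^{1/a}+b)\varepsilon^{-1/a}$ is identical to the paper's bound $1 + b\varepsilon^{-1/a} \le \xi^{1/a} b\,\varepsilon^{-1/a}$ with $\xi = (1 + c^{1/a}b^{-1})^a$, and your Jensen step is the same inequality as the paper's Cauchy--Schwarz applied with one factor equal to $1$. The only real difference is order of operations: the paper first applies the pointwise bound inside the square root, performs a change of variables $u = \varepsilon/(\xi b^a)$, and then uses Cauchy--Schwarz on $\int_0^{g(c)}\sqrt{-\log u}\,du$; you instead pull out the square root first via Jensen and then bound the logarithm, which avoids the substitution entirely and also sidesteps the implicit $b>0$ assumption in the paper's definition of $\xi$. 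Your route is a touch more economical, but the mathematical content is the same.
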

\begin{proof}
This proof ammends \citet[Appendix A]{grunewalder2010regret}. Let $\xi = (1 + \sqrt[a]{c}b^{-1})^{a}$ so that
\begin{align}
\int_{0}^{c} 
    \sqrt{\log\del{1 + b \varepsilon^{-\nicefrac{1}{a}}}}
    d \varepsilon
\le
    \int_{0}^{c}
        \sqrt{\log\del{\xi^{\nicefrac{1}{a}} b \varepsilon^{-\nicefrac{1}{a}}}}
        d \varepsilon
.
\end{align}
Next, define auxiliary functions
\begin{align}
&f(u) = \sqrt{\log\del{u^{-\nicefrac{1}{a}}}}
&
&g(\varepsilon) = \frac{\varepsilon}{\xi b^a}
\end{align}
such that
$
f(g(\varepsilon)) = \sqrt{\log\del{\xi^{\nicefrac{1}{a}} b \varepsilon^{-\nicefrac{1}{a}}}}
$ and use them to integrate by substitution as
\begin{align}
\int_{0}^{c}
    \sqrt{\log\del{\xi^{\nicefrac{1}{a}} b \varepsilon^{-\nicefrac{1}{a}}}}
    d \varepsilon
=
    \xi b^a
    \int_{0}^{g(c)}
    \sqrt{\log\del{u^{-\nicefrac{1}{a}}}}
    du
= 
    \frac{\xi b^a}{\sqrt{a}}
    \int_{0}^{g(c)}
    \sqrt{-\log(u)}
    du
.
\end{align}
The Cauchy-Schwarz inequality now gives
\begin{align}
\int_{0}^{g(c)} \sqrt{-\log(u)} du
\le 
    \del{\int_{0}^{g(c)} du}^{\nicefrac{1}{2}}
    \del{-\int_{0}^{g(c)} \log(u) du}^{\nicefrac{1}{2}}
=
    \frac{c}{\xi b^a} \sqrt{1 - \log\del{\frac{c}{\xi b^a}}}
.
\end{align}
Hence, the claim follows
\begin{align}
\int_{0}^{c} 
    \sqrt{\log\del{1 + b \varepsilon^{-\nicefrac{1}{a}}}}
    d \varepsilon
\le
    c \sqrt{\frac{1 + \log\del{\xi b^a c^{-1}}}{a}}
=
    c \sqrt{a^{-1} + \log\del{1 + bc^{-\nicefrac{1}{a}}}}
.
\end{align}
\end{proof}

\begin{remark}
For comparison with \citet{grunewalder2010regret}, if $b^a = 2c$ then \cref{prop:dudley_integral_bound} gives
\begin{align}
\xi = \del{1 + 2^{-\nicefrac{1}{a}}}^a \le 2^a
\implies
c \sqrt{\frac{1 + \log\del{\xi b^a c^{-1}}}{a}}
=
  c \sqrt{\frac{1 + \log\del{2\xi}}{a}} 
\le
    c \sqrt{\frac{\log\del{e2^{a + 1}}}{a}}
,
\end{align}
which matches their reported result.
\end{remark}

\begin{lemma}
\label{lem:expected_supremum}
Let $f \sim \c{GP}(0, k)$ be a Gaussian process with an $\lip$-Lipschitz continuous covariance function $k: \c{X}^2 \to \R$ on $\c{X} = [0, r]^D$ having maximum variance $\kmax = \max_{\v{x} \in \c{X}} k(\v{x}, \v{x})$. Then,
\[
\E\sbr{\sup_{\v{x} \in \c{X}} f(\v{x})}
\le 
    12 \smax \sqrt{2D + D\log\del{1 + 4 \lip r \smax^{-2}}}
.
\]
\end{lemma}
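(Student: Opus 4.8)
The plan is to bound the expected supremum via Dudley's entropy integral, controlling the metric entropy of $\c{X}$ under the canonical (pseudo-)metric $d(\v{x}, \v{x}') = \sqrt{\E[(f(\v{x}) - f(\v{x}'))^2]}$ induced by the GP. First I would recall the standard chaining bound: for a centered Gaussian process $f$ on a metric space $(\c{X}, d)$,
\[
\E\sbr{\sup_{\v{x} \in \c{X}} f(\v{x})}
\le 12 \int_{0}^{\operatorname{diam}(\c{X})/2} \sqrt{\log M(\c{X}, d, u)}\, du,
\]
where $M(\c{X}, d, u)$ is the $u$-covering number and $\operatorname{diam}(\c{X})$ is the diameter in $d$ (the constant $12$ is a common form of Dudley's bound; a tighter constant would also work, but $12$ is what matches the stated result).

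Next I would translate the $d$-metric entropy into a Euclidean one using the Lipschitz assumption on $k$. Since $k$ is $\lip$-Lipschitz in the sense of \BlackRef{ass:kernel_lipschitz}, a short computation gives $d(\v{x}, \v{x}')^2 = k(\v{x},\v{x}) + k(\v{x}',\v{x}') - 2k(\v{x},\v{x}') \le 2\lip\norm{\v{x}-\v{x}'}_\infty$, so $\norm{\v{x}-\v{x}'}_\infty \le (2\lip)^{-1} u^2$ implies $d(\v{x},\v{x}') \le u$. Hence the $d$-covering number at scale $u$ is at most the $\norm{\.}_\infty$-covering number of $[0,r]^D$ at scale $u^2/(2\lip)$, which is $\lceil \lip r / u^2 \rceil^D \le (1 + \lip r/u^2)^D$. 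Also $\operatorname{diam}(\c{X}) \le 2\smax$ since every variance is at most $\kmax = \smax^2$, so $d(\v{x},\v{x}') \le \sqrt{2\kmax} = \sqrt{2}\,\smax$; I would use the crude upper limit $\smax$ for the integral's upper endpoint (replacing $\operatorname{diam}/2$ by $\smax$, valid since $\sqrt{2}\smax/2 < \smax$ and the integrand is nonnegative). Substituting,
\[
\E\sbr{\sup_{\v{x}\in\c{X}} f(\v{x})}
\le 12 \int_{0}^{\smax} \sqrt{D \log\del{1 + \lip r\, u^{-2}}}\, du
= 12\sqrt{D}\int_{0}^{\smax}\sqrt{\log\del{1 + \lip r\, u^{-2}}}\, du.
\]

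Finally I would apply \cref{prop:dudley_integral_bound} with $a = 2$, $b = \lip r$, $c = \smax$, which yields $\int_0^{\smax}\sqrt{\log(1 + \lip r\, u^{-2})}\, du \le \smax\sqrt{1/2 + \log(1 + \lip r\,\smax^{-2})}$. Multiplying through by $12\sqrt{D}$ gives
\[
\E\sbr{\sup_{\v{x}\in\c{X}} f(\v{x})}
\le 12\smax\sqrt{D}\sqrt{\tfrac{1}{2} + \log\del{1 + \lip r\,\smax^{-2}}}
= 12\smax\sqrt{\tfrac{D}{2} + D\log\del{1 + \lip r\,\smax^{-2}}},
\]
which is slightly stronger than the stated bound (the claimed form has $2D$ and $4\lip r$ in place of $D/2$ and $\lip r$, so it follows a fortiori — the looser constants presumably absorb the difference between $\norm{\.}_\infty$-balls and corner effects, or a different normalization of Dudley's inequality). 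The main obstacle I anticipate is getting the covering-number bookkeeping exactly right — in particular the factor relating $d$ to $\norm{\.}_\infty$ (whether one picks up a $2\lip$ or $4\lip$) and the precise constant in the chaining inequality; these only affect constants, so the structure of the argument is robust, but matching the paper's exact numbers requires care about which version of Dudley's bound is invoked and how the ceiling in the covering number is handled.
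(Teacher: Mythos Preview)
Your approach is essentially the paper's: Dudley/Massart chaining, comparison of the canonical pseudo-metric $d_k$ to $\norm{\cdot}_\infty$ via the Lipschitz assumption, and then \cref{prop:dudley_integral_bound} to evaluate the entropy integral. The paper invokes Massart's form with \emph{packing} numbers and upper limit $\smax$, then passes to $\varepsilon/2$-covering numbers; this extra halving is what produces the $4\lip r$ inside the logarithm. Your direct use of covering numbers is fine and explains why you arrive at $\lip r$ instead.

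However, your application of \cref{prop:dudley_integral_bound} is miscued. The proposition bounds $\int_0^c \sqrt{\log(1 + b\,\varepsilon^{-1/a})}\,d\varepsilon$, so matching $u^{-2}$ in your integrand requires $1/a = 2$, i.e.\ $a = \tfrac{1}{2}$, not $a = 2$. With $a = \tfrac{1}{2}$ the right-hand side carries $a^{-1} = 2$, giving
\[
\int_0^{\smax}\sqrt{\log(1 + \lip r\,u^{-2})}\,du \le \smax\sqrt{2 + \log(1 + \lip r\,\smax^{-2})},
\]
and hence $12\smax\sqrt{2D + D\log(1 + \lip r\,\smax^{-2})}$ --- not the $\sqrt{D/2 + \cdots}$ you wrote. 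So your ``slightly stronger'' bound is an artifact of swapping $a$ and $1/a$; once corrected, your result matches the paper's structure (and is indeed marginally tighter in the $\lip r$ vs.\ $4\lip r$ constant, for the packing-vs-covering reason above). A secondary quibble: your diameter estimate $d(\v{x},\v{x}') \le \sqrt{2}\,\smax$ tacitly assumes $k \ge 0$; in general only $d \le 2\smax$ holds, but this still gives $\operatorname{diam}/2 \le \smax$, so the upper limit of the integral is unaffected.
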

\begin{proof}
This proof paraphrases parts of \citet[Section 4.3]{grunewalder2010regret}. 

\citet[Theorem 3.18]{massart2007concentration} proved that the expected supremum of $f$ is upper bounded by
\[
\label{eqn:proofConvergence_massart}
\E\sbr{\sup_{\v{x} \in \c{X}} f(\v{x})}
\le 
    12 \int_{0}^{\smax} \sqrt{\log N(\c{X}, d_k, \varepsilon)} d \varepsilon
,
\]
where $N(\c{X}, d_k, \varepsilon)$ is defined as the $\varepsilon$-packing number---i.e. the largest number of points that can be "packed" inside of $\c{X}$ without any two points being within $\varepsilon$ of one another---under the canonical pseudo-metric\footnote{While $d_k$ has most of the properties of a proper metric, $d_k(\v{x}, \v{x}') = 0$ need not always imply $\v{x} = \v{x}'$
\cite{adler2009random}.
}
\[
d_{k}(\v{x}, \v{x}') = \E\sbr{(f(\v{x}) - f(\v{x}'))^2}^{\nicefrac{1}{2}} = \sqrt{k(\v{x}, \v{x}) - 2k(\v{x}, \v{x}') + k(\v{x}', \v{x}')}.
\]
We may use \eqref{eqn:proofConvergence_massart} by upper bounding the right-hand side with a known quantity. We will bound the $\epsilon$-packing number $N(\c{X}, d_k, \varepsilon)$, translate this bound from the $d_k$ pseudo-metric to the infinity norm, and then integrate the result.

The first step follows immediately from the fact that the $\varepsilon$-packing number is smaller than the $\tfrac{\varepsilon}{2}$-covering number---defined as the minimum number of balls $B(\., \tfrac{\varepsilon}{2})$ required to cover $\c{X}$. The second is accomplished by using Lipschitz continuity of $k$ to show that the squared pseudo-metric $d_k(\., \.)^2$ is $2\lip$-Lipschitz: for all $\v{x}, \v{x}' \in \c{X}$,
\begin{align}
d_{k}(\v{x}, \v{x}')^2 
= 
    \sbr[1]{k(\v{x}, \v{x}) - k(\v{x}, \v{x}')} + \sbr[1]{k(\v{x}', \v{x}') - k(\v{x}', \v{x})} 
\le 
    2 \lip \norm{\v{x} - \v{x}'}_{\infty}
.
\end{align}
It follows that, for any set $\m{X} \subseteq \c{X}$, 
\[
\max_{\v{x} \in \c{X}} \min_{\v{x}' \in \m{X}} \norm{\v{x} - \v{x}'}_{\infty} \le C
\implies
\max_{\v{x} \in \c{X}} \min_{\v{x}' \in \m{X}} d_k(\v{x}, \v{x}') \le \sqrt{2\lip C}. 
\]
An $\nicefrac{\varepsilon^2}{8 \lip}$-cover under the infinity norm therefore guarantees an $\nicefrac{\varepsilon}{2}$-cover under $d_k$. The former may be constructed from a grid of uniformly spaced points with elements at intervals of $\nicefrac{\varepsilon^2}{4 \lip}$. This grid will consist of $\big \lceil 4 \lip r \varepsilon^{-2} \big \rceil^{D}$ points assuming $\c{X} = [0, r]^D$, meaning that
\[
N(\c{X}, d_k, \varepsilon) 
< \del{1 + 4 \lip r \varepsilon^{-2}}^{D}.
\]
To complete the proof, use \cref{prop:dudley_integral_bound} with $a = \tfrac{1}{2}$, $b = 4 \lip r$, and $c = \smax$ to show that
\[
\E\sbr{\sup_{\v{x} \in \c{X}} f(\v{x})}
\le 
    12 \sqrt{D} \int_{0}^{\smax} \sqrt{\log \del{1 + 4 \lip r \varepsilon^{-2}}} d \varepsilon
\le
    12 \smax \sqrt{2D + D\log\del{1 + 4 \lip r \smax^{-2}}}
.
\]
\end{proof}

\thmConvergence*
\begin{proof}
Consider the centered process
\[
g_t(\.) = \sbr[1]{f_t(\.) - \mu_t(\.)} - \sbr[1]{f_t(\bestx_t) - \mu_t(\bestx_t)},
\]
with covariance
\begin{align}
c_t(\v{x}, \v{x}') 
& = 
    k_{t}(\v{x}, \v{x}') 
    - k_{t}(\v{x}, \bestx_t)
    - k_{t}(\bestx_t, \v{x}')
    + k_{t}(\bestx_t, \bestx_t)
.
\end{align}
The term $\mu_t(\bestx_t) - \mu_t(\.)$ is nonnegative by construction such that
\begin{align}
&\best{g_t} = 
    \sup_{\v{x} \in \c{X}} g_t(\v{x}) 
    \ge 
    \bestf_t - f_t(\bestx_t)
&
&\bestf_t = \sup_{\v{x} \in \c{X}} f_t(\v{x})
\end{align}
and, therefore,
\[
\P\del{\best{g_t} \ge \rtol} \ge \P\del{\bestf_t - f_t(\bestx_t) \ge \rtol}.
\]

We would now like to use the Borell-TIS inequality \cite{borell1975brunn,tsirelson1976norms} to show that: if $\rtol > \E\del{\best{g_t}}$, then
\begin{align}
\label{eqn:thmConvergence_borellTIS}
\P\del{g^*_t \ge \rtol} 
\le 
    \exp\del{-\frac{1}{2} \sbr{\frac{\rtol - \E\del{g^*_t}}{2\smax_t}}^2}
,
\end{align}
where $\smax_t = \max_{\v{x} \in \c{X}} \sqrt{k_t(\v{x}, \v{x})}$ and $2\smax_t$ appears in the denominator (rather than $\smax_t$) because $\max_{\v{x} \in \c{X}} c_t(\v{x}, \v{x}) \le 4 \kmax_t$. Since \eqref{eqn:thmConvergence_borellTIS} is an increasing, continuous function of both $\smax_t \ge 0$ and $0 \le \E\del{g^*_t} < \rtol$, the claim will hold if these quantities vanish as the (global) fill distance $h_t = \max_{\v{x} \in \c{X}} \min_{1 \le i \le t} \norm{\v{x} - \v{x}_i}_{\infty}$ goes to zero.

The former result is an immediate consequence of \cref{lem:variance_contraction}. Regarding the latter, $f_t(\bestx_t) - \mu_t(\bestx_t)$ is a centered random variable. It follows by linearity of expectation that
\[
\E\del{\best{g_t}}
=
    \E\sbr{\sup_{\v{x} \in \c{X}} f_t(\v{x}) - \mu_t(\v{x})}.
\]
Next, denote the canonical pseudo-metric at time $t$ by
\[
d_{k_t}(\v{x}, \v{x}') 
= 
    \E\sbr{(f_t(\v{x}) - f_t(\v{x}'))^2}^{\nicefrac{1}{2}}
= 
    \sqrt{k_t(\v{x}, \v{x}) - 2k_t(\v{x}, \v{x}') + k_t(\v{x}', \v{x}')}
.
\]
This pseudo-metric is non-increasing in $t$. To see this, let $\beta = k(\v{x}_{t+1}, \v{x}) - k(\v{x}_{t+1}, \v{x}')$ and write
\begin{align}
\begin{split}
d_{k_{t+1}}(\v{x}, \v{x}')^2
&=
    k_{t+1}(\v{x}, \v{x}) - 2k_{t+1}(\v{x}, \v{x}') + k_{t+1}(\v{x}', \v{x}')
\\ &=
    \underbracket[0.5pt]{k_t(\v{x}, \v{x}) - 2k_t(\v{x}, \v{x}') + k_t(\v{x}', \v{x}')}_{d_{k_t}(\v{x}, \v{x}')^2}
    - \underbracket[0.5pt]{\beta^2 \sbr{k_t(\v{x}_{t+1}, \v{x}_{t+1}) + \noise^2}^{-1}}_{\ge 0}
.
\end{split}
\end{align}
As $t$ increases, points therefore become closer together under the $d_{k_t}$ pseudo-metric. For this reason, the posterior $\varepsilon$-packing number $N(\c{X}, d_{k_t}, \varepsilon)$ is less-equal to the prior $\varepsilon$-packing number $N(\c{X}, d_{k}, \varepsilon)$. By \cref{lem:expected_supremum}, we now have
\begin{align}
\label{eqn:thmConvergence_expectedSupremum}
\begin{split}
\E\del{\best{g_t}} 
& \le 
    \int_{0}^{\smax_t} \sqrt{\log N(\c{X}, d_{k_t}, \varepsilon)} d \varepsilon
\\ &\le 
    \int_{0}^{\smax_t} \sqrt{\log N(\c{X}, d_{k}, \varepsilon)} d \varepsilon
\\ &\le
    12 \smax_t \sqrt{2D + D\log\del{1 + 4 \lip \smax_t^{-2}}}.
\end{split}
\end{align}

From here, note that \eqref{eqn:thmConvergence_expectedSupremum} is an increasing, continuous function of $\smax_t$ that vanishes as $\smax_t \to \infty$. By \cref{lem:variance_contraction}, the same is true of $\smax_t$ as a function of $h_t$. As a result, \eqref{eqn:thmConvergence_borellTIS} becomes arbitrarily small as $h_t \to 0$ and there exists a constant $h_* > 0$ such that this upper bound is less-equal to $\ptol$ whenever $h_t \le h_*$. Finally, since $(\v{x}_t)$ is almost surely dense in $\c{X}$, there almost surely exists a time $T \in \N_0$ such that
\[
    t \ge T  
\implies
    h_t \le h_*
\implies
    \P\del{\best{g_t} > \rtol} \le \ptol
\implies
    \P\del{\best{f_t} - f_t(\bestx_{t}) > \rtol} \le \ptol
.
\]
\end{proof}

\begin{restatable}{corollary}{corConvergenceInSample}
\label{cor:convergence_inSample}
Suppose assumptions \BlackRef{ass:domain_compact}--\BlackRef{ass:queries_dense} hold and that there exists a constant $\rtol' > 0$ so that
\begin{align}
\label{eqn:convergence_meanInSample}
\lim_{t \to \infty} 
    \sbr{\max_{\v{x} \in \c{X}} \mu_t\del{\v{x}} - \mu_t(\bestx_t)} \le \rtol'
\end{align}
with probability one, where $\bestx_t \in \argmax_{\v{x} \in \m{X}_t} \mu_{t}(\v{x})$.
Then, for every $\rtol > \rtol'$ and $\ptol \in (0, 1]$, there almost surely exists a time $T \in \N$ such that, for all $t \ge T$,
\begin{align}
&\P\sbr{\sup_{\v{x} \in \c{X}} f_t(\v{x}) - f_t(\bestx_t) \le \rtol} \ge 1 -\ptol
.
\end{align}
\end{restatable}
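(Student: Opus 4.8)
The plan is to reduce this in-sample statement to \cref{thm:convergence} by absorbing the suboptimality of the in-sample maximizer $\bestx_t \in \argmax_{\v{x}\in\m{X}_t}\mu_t(\v{x})$ into the regret budget, with the extra slack accounted for by hypothesis \eqref{eqn:convergence_meanInSample}. Write $\Delta_t = \max_{\v{x}\in\c{X}}\mu_t(\v{x}) - \mu_t(\bestx_t) \ge 0$; the hypothesis is precisely that $\limsup_{t\to\infty}\Delta_t \le \rtol'$ almost surely. Note $\bestx_t$ is trivially attained since $\m{X}_t$ is finite and $\mu_t$ is continuous.

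First I would recycle the centered process from the proof of \cref{thm:convergence}, namely $g_t(\.) = [f_t(\.)-\mu_t(\.)] - [f_t(\bestx_t) - \mu_t(\bestx_t)]$, but now built around the in-sample $\bestx_t$. The one place \cref{thm:convergence} used the global maximizer was to assert $\mu_t(\bestx_t) - \mu_t(\.) \ge 0$, which now fails; instead I would bound, for each $\v{x}\in\c{X}$, $f_t(\v{x}) - f_t(\bestx_t) = g_t(\v{x}) + [\mu_t(\v{x}) - \mu_t(\bestx_t)] \le g_t(\v{x}) + \Delta_t$, and take suprema over $\c{X}$ to get $\best{f_t} - f_t(\bestx_t) \le \best{g_t} + \Delta_t$. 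Since $f_t(\bestx_t) - \mu_t(\bestx_t)$ is a centered scalar, $\E[\best{g_t}] = \E[\sup_{\v{x}\in\c{X}}(f_t(\v{x})-\mu_t(\v{x}))]$, and since $\max_{\v{x}}\Var[g_t(\v{x})] \le 4\max_{\v{x}}k_t(\v{x},\v{x})$ by Cauchy--Schwarz, neither the expected-supremum estimate \eqref{eqn:thmConvergence_expectedSupremum} nor the Borell--TIS estimate \eqref{eqn:thmConvergence_borellTIS} established in that proof relied on $\bestx_t$ being a global maximizer of $\mu_t$, so both transfer verbatim to the present $g_t$.

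Next I would fix $\rtol > \rtol'$ and $\ptol \in (0,1]$, set $\xi = \tfrac{1}{2}(\rtol - \rtol') > 0$, and split the budget. By \eqref{eqn:convergence_meanInSample}, almost surely there is a random time $T_1$ with $\Delta_t \le \rtol - \xi$ for all $t \ge T_1$, so $\P(\best{f_t} - f_t(\bestx_t) > \rtol) \le \P(\best{g_t} > \xi)$ for $t \ge T_1$. By \cref{lem:variance_contraction} together with the density assumption \BlackRef{ass:queries_dense}, the maximum posterior standard deviation $\smax_t$ tends to zero almost surely; feeding this into \eqref{eqn:thmConvergence_expectedSupremum} gives $\E[\best{g_t}] \to 0$, and then \eqref{eqn:thmConvergence_borellTIS} with $\rtol$ replaced by $\xi$ yields $\P(\best{g_t} > \xi) \le \exp\!\big(-\tfrac{1}{2}[(\xi - \E[\best{g_t}])/2\smax_t]^2\big) \to 0$ almost surely. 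Hence almost surely there is a time $T_2$ beyond which this probability is at most $\ptol$, and taking $T = \max\{T_1,T_2\}$ on the intersection of the two probability-one events completes the argument.

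I expect the only genuinely new work---and thus the main obstacle---to be the bookkeeping in the last paragraph: choosing how to apportion $\rtol$ between the mean gap $\Delta_t$ (handled by the new hypothesis) and the Gaussian fluctuation $\best{g_t}$ (handled by variance contraction plus Borell--TIS), and observing that because \eqref{eqn:convergence_meanInSample} only controls a $\limsup$, the reduction to $\P(\best{g_t} > \xi)$ is valid only past a random time $T_1$---which is exactly why the statement requires the strict inequality $\rtol > \rtol'$. Everything downstream is a direct appeal to estimates already proven for \cref{thm:convergence}.
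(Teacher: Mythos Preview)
Your proposal is correct and follows the same route as the paper: center the process around the in-sample $\bestx_t$, bound $f_t^* - f_t(\bestx_t) \le g_t^* + \Delta_t$, control $\Delta_t$ via hypothesis \eqref{eqn:convergence_meanInSample}, and control $\P(g_t^* > \cdot)$ by re-using the Borell--TIS and variance-contraction estimates from the proof of \cref{thm:convergence}. The only difference is bookkeeping: the paper allocates $\rtol'$ to the mean gap and $\rtol-\rtol'$ to the fluctuation, whereas you split the slack $\rtol-\rtol'$ in half via $\xi$; your version is in fact slightly more careful, since the hypothesis only gives $\Delta_t \le \rtol' + \eta$ eventually (not $\Delta_t \le \rtol'$), and your choice of $\xi$ absorbs that $\eta$.
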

\begin{proof} Per \eqref{eqn:convergence_meanInSample}, there almost surely exists an $S \in \N$ such that $t \ge S \implies \max_{\v{x} \in \c{X}} \mu_t\del{\v{x}} - \mu_t(\bestx_t) \le \rtol'$. \cref{thm:convergence} therefore implies there almost surely exists a $T \ge S$ so that
\[
t \ge T \implies \P\sbr[1]{\bestf_t - f_t(\bestx_t) \ge \rtol - \rtol'} \le \ptol,
\]
which completes the proof.
\end{proof}

The assumption that the posterior mean approaches its maximum on $(\v{x}_t)$ protects against adversarial cases where---no matter how densely we observe $f$---there is always an $\v{x} \in \c{X} \setminus \m{X}_t$ so that $\mu_t\del{\v{x}}$ exceeds $\mu_t\del{\bestx_t}$ by at least $\rtol$. Note that \eqref{eqn:convergence_meanInSample} becomes a necessary condition when $\ptol < \frac{1}{2}$. Nevertheless, it is unclear how to ensure this condition without making stronger assumptions for $f$ and $(\v{x}_t)$. One can use \BlackRef{ass:kernel_lipschitz} and the Cauchy-Schwarz inequality to show that the posterior mean is Lipschitz continuous \cite{lederer2019uniform}; but, its Lipschitz constant may continue to grow as $t \to \infty$, so \eqref{eqn:convergence_meanInSample} may not hold.

\section{Experiment details}
\label{sec:experiments_cont}

Experiments were run using a combination of GPFlow \cite{GPflow2017} and Trieste \cite{trieste2023}. Runtimes reported in \cref{fig:empirical_bernstein_overview} were measured on an Apple M1 Pro Chip using an off-the-shelf build of TensorFlow \cite{tensorflow2015-whitepaper}.

\subsection{Model specification}
\label{sec:model_specification}
We employed Gaussian process priors $f \sim \c{GP}(\mu, k)$ with constant mean functions $\mu(\.) = c$ and Mat\'{e}rn-$\nicefrac{5}{2}$ covariance functions equipped with ARD lengthscales. 

\paragraph{True} When optimizing functions drawn from GP priors, we set the prior mean to zero and used unit variance kernels with lengthscales $\ell_i = \frac{1}{4}\sqrt{D}$. Noise variances are reported alongside results. 

\paragraph{MAP} When optimizing black-box functions, we employed broad and uninformative hyperpriors. Let $[\c{X}]_i = [a_i, b_i]$ be the range of the $i$-th design variable, $q_t: [0, 1] \to \R$ be the empirical quantile function of $y$ at time $t$, and $\nu_t = \overline{\Var}\sbr{\v{y}_{t-1}}$ be the empirical variance of observations $\v{y}_{t-1} = \cbr{y(\v{x}_1), \ldots, y(\v{x}_{t-1})}$. Our hyperpriors are then as follows:

\begin{center}
\begin{tabular}{ccll}
\hline \Tstrut
\textbf{Name} & \textbf{Distribution} & \multicolumn{2}{c}{\textbf{Parameters}}
\TBstrut 
\\ \midrule
Constant Mean & $\operatorname{Uniform}(a, b)$ & $a=q_t(0.05)$ & $b=q_t(0.95)$
\TBstrut \\
Log Kernel Variance & $\operatorname{Uniform}\del{a, b}$ & $a=\log(10^{-1} \nu_t)$ & $b=\log(10 \nu_t)$
\TBstrut \\ 
Log Noise Variance & $\operatorname{Uniform}\del{a, b}$ & $a=\log(10^{-9} \nu_t)$ & $b=\log(10 \nu_t)$
\TBstrut \\ 
$i$-th Lengthscale & $\operatorname{LogNormal}\del{\mu, \sigma}$ & $\mu=\frac{1}{2}(b_i - a_i)$ & $\sigma=1$
\TBstrut \\
\end{tabular}
\end{center}

Note that we directly parameterize certain hyperparamters in log-space and that, e.g., $\log(\theta) \sim \operatorname{Uniform}(a, b)$ is not the same as $\theta \sim \operatorname{LogUniform}(e^a, e^b)$.

\subsection{Link function}
\label{sec:link_function}
When modeling classification rates for MNIST and Adult, we used a logit (i.e. inverse sigmoid) link function,
\begin{align}
\label{eqn:logit}
&g(y) = \log\del{\frac{y}{1 - y}}
&
&g^{-1}(x) = \frac{1}{1 + e^{-x}}
,
\end{align}
in order so that $g^{-1} \circ f: \c{X} \to [0, 1]$. When evaluating stopping rules, we handled this link functions by pulling draws of, e.g., $f_t(\v{x})$ backward through $g$ and using te resulting values to estimate expectations and probabilities. This approach was used for all but $\Delta$CB \cite{makarova2022automatic}, where we instead computed $g^{-1} \circ \operatorname{UCB}_t$ and $g^{-1} \circ \operatorname{LCB}_t$.

\subsection{Acquisition function}
\label{sec:acquisition_function}
In our experiments, we defined the set of feasible solutions at time $t \in \N$ as the set of previously evaluated points $\m{X}_t$. Under these circumstances, one can show that the optimal one-step policy is given by an "in-sample" version of the Knowledge Gradient strategy \cite{movckus1975bayesian,frazier2009knowledge}. Let
\begin{align}
&\mu_{t+1}\del{\.; \v{x}, z}
= 
    \mu_{t}(\.) 
    + 
    \frac{k_{t}\del{\., \v{x}} z}{\sqrt{k_{t}(\v{x}, \v{x}) + \noise^2}}
&
&\sigma_{t+1}\del{\.; \v{x}} = \sigma_{t}(\.) - \frac{k_t(\., \v{x})^2}{k_{t}(\v{x}, \v{x}) + \noise^2}
\end{align}
be the posterior mean and variance of $f$ at time $t+1$ if we observe $y_{t+1} = \mu_t(\v{x}) + z\sqrt{k_{t}(\v{x}, \v{x}) + \noise^2}$, where $z \sim \c{N}(0, 1)$.  Further, at times $t$ and $t+1$ define 
\begin{align}
\label{eqn:inv_link_expecation}
&\nu_{t}(\v{x}) = \E[(g^{-1} \circ f_{t})(\v{x})] = \E[g^{-1}(\xi)]
&
&\xi \sim \c{N}(\mu_{t}(\v{x}), \sigma^2_{t}(\v{x}))
\end{align}
as the corresponding expected value when accounting for a link function. If no link function is given, then $\nu_{t}(\.) = \mu_{t}(\.)$. Then, the aforementioned acquisition function is given by
\begin{align}
\label{eqn:kg_insample}
\operatorname{ISKG}_t(\v{x}) 
&= 
    \E_{z}\sbr[2]{\max \nu_{t+1}\del{\m{X}_t \cup \cbr{\v{x}}; \v{x}, z}}
    - \max \nu_{t}(\m{X}_t)
&
&z \sim \c{N}(0, 1)
.
\end{align}
$\operatorname{ISKG}$ is identical to the Expected Improvement function when $\noise^2 = 0$ \cite{scott2011correlated}, but avoids pathologies (such as re-evaluating previously observed points) when $\noise^2 > 0$.

In practice, we estimated \eqref{eqn:kg_insample} with Gauss-Hermite quadrature and maximized it using multi-start gradient ascent \cite{wilson2018maximizing,balandat2020botorch}. Likewise, we either evaluated  \eqref{eqn:inv_link_expecation} analytically or via quadrature. Starting positions we obtained by running CMA-ES \cite{hansen2016cma} several times to partial convergence. The best point from each run was then combined with a large number of random points and the top $16$ points were fine-tuned using L-BFGS-B \cite{byrd1995limited}. 

\subsection{Convolutional neural networks}

\begin{wrapfigure}{R}{0.5\textwidth}
\vspace{-14mm} 
\begin{center}
\begin{tabular}{ccll}
\hline \Tstrut
\textbf{Name} & \textbf{Low} & \textbf{High}
\TBstrut 
\\ \midrule
Num. filters & 1 & 64 \TBstrut \\
Num. epochs & 1 & 25 \TBstrut \\ 
Log learning rate & $\log\del{10^{-5}}$ & 0 \TBstrut \\ 
Dropout rate & 0 & 1 \TBstrut \\
\end{tabular}
\end{center}
\vspace{-5mm} 
\end{wrapfigure}

When training convolutional neural networks (CNNs) on MNIST \cite{deng2012mnist}, we used a simple architecture consisting of two convolutional layers with $3 \times 3$ filters and $\operatorname{ReLU}$ activation functions \cite{agarap2018deep} followed by max pooling layers with a pool-size of 2. The output of the final pooling layer was flattened and subjected to dropout before being passed to a dense classification layer consisting of ten neurons. Each model was trained using Adam \cite{kingma2014adam}, with batches of size $64$. The search space for this problem is depicted on the right. Integer valued parameters were handled by rounding to the nearest value. To obtain a reliable estimate of the minimum achievable misclassification rate, the same random seed was used for each training run.

\subsection{XGBoost classifiers}

\begin{wrapfigure}{R}{0.5\textwidth}
\vspace{-14mm} 
\begin{center}
\begin{tabular}{ccll}
\hline \Tstrut
\textbf{Name} & \textbf{Low} & \textbf{High}
\TBstrut 
\\ \midrule
Max. tree depth & 1 & 10 \TBstrut \\
Log num. estimators & 0 & $\log(10^3)$ \TBstrut \\ 
Log learning rate & $\log\del{10^{-3}}$ & 0 \TBstrut \\ 
\end{tabular}
\end{center}
\vspace{-4mm} 
\end{wrapfigure}

We used an off-the-shelf implementation of XGBoost \cite{chen2016xgboost} for the the adult income classification problem \cite{misc_adult_2}. The search space was three-dimensional and is shown on the right. Integer valued parameters were handled by rounding to the nearest value. To obtain a reliable estimate of the minimum achievable misclassification rate, the same random seed was used to when generating train-test splits and for each training run.

\newpage
\section{Extended results}
\label{sec:extended_results}

\subsection{Results without adjusted cutoff values}
\begin{table*}[h]
\centering
\begin{tabularx}{\textwidth}{cccCCCCCc}
\toprule
Problem & $D$ & $T$ & \textbf{Oracle}$^{\dagger}$ & \textbf{Budget}$^{\dagger}$ & \textbf{Acq} & $\boldsymbol{\Delta}$\textbf{CB} & $\boldsymbol{\Delta}$\textbf{ES} & \textbf{PRB} \LargeTBstrut \\
\midrule
\textbf{GP$^{{\dagger}}$}  $10^{-6}$ & 2 & 64 & $10\, (100)$ & $17\, (96)$ & $28\, (100)$ & $12\, (89)$ & $14\, (94)$ & $\textcolor{blue}{\mathbf{17\, (97)}}$ \TBstrut \\
\textbf{GP$^{{\dagger}}$}  $10^{-2}$ & 2 & 128 & $11\, (100)$ & $22\, (96)$ & $78\, (100)$ & $82\, (100)$ & $18\, (91)$ & $\textcolor{blue}{\mathbf{23\, (99)}}$ \TBstrut \\
\textbf{GP$^{{\dagger}}$}  $10^{-6}$ & 4 & 128 & $27\, (100)$ & $64\, (95)$ & $90\, (100)$ & $23\, (66)$ & $28\, (74)$ & $\textcolor{blue}{\mathbf{64\, (99)}}$ \TBstrut \\
\textbf{GP$^{{\dagger}}$}  $10^{-2}$ & 4 & 256 & $30\, (100)$ & $94\, (95)$ & $106\, (98)$ & $256\, (100)$ & $36\, (65)$ & $\textcolor{blue}{\mathbf{86\, (96)}}$ \TBstrut \\
\textbf{GP$^{{\dagger}}$}  $10^{-6}$ & 6 & 256 & $40\, (99)$ & $124\, (95)$ & $142\, (98)$ & $31\, (50)$ & $46\, (65)$ & $\textcolor{blue}{\mathbf{134\, (98)}}$ \TBstrut \\
\textbf{GP$^{{\dagger}}$}  $10^{-2}$ & 6 & 512 & $65\, (100)$ & $227\, (96)$ & $\textcolor{blue}{\mathbf{181\, (96)}}$ & $512\, (100)$ & $45\, (34)$ & $235\, (100)$ \TBstrut \\
\textbf{GP}  $10^{-6}$ & 4 & 128 & $35\, (100)$ & $79\, (95)$ & $\textcolor{blue}{\mathbf{92\, (100)}}$ & $18\, (30)$ & $22\, (41)$ & $61\, (88)$ \TBstrut \\
\textbf{GP}  $10^{-2}$ & 4 & 256 & $51\, (100)$ & $157\, (95)$ & $\textcolor{blue}{\mathbf{128\, (97)}}$ & $224\, (80)$ & $27\, (22)$ & $100\, (92)$ \TBstrut \\
\textbf{Branin}  & 2 & 128 & $19\, (100)$ & $25\, (95)$ & $64\, (100)$ & $\textcolor{blue}{\mathbf{31\, (99)}}$ & $32\, (100)$ & $33\, (99)$ \TBstrut \\
\textbf{Hartmann}  & 3 & 64 & $14\, (100)$ & $22\, (96)$ & $26\, (100)$ & $15\, (83)$ & $17\, (84)$ & $\textcolor{blue}{\mathbf{19\, (100)}}$ \TBstrut \\
\textbf{Hartmann}  & 6 & 64 & $36\, (67)$ & $256\, (67)$ & $40\, (67)$ & $26\, (46)$ & $30\, (56)$ & $40\, (64)$ \TBstrut \\
\textbf{Rosenbrock}  & 4 & 96 & $34\, (100)$ & $46\, (95)$ & $95\, (100)$ & $\textcolor{blue}{\mathbf{68\, (99)}}$ & $71\, (100)$ & $84\, (100)$ \TBstrut \\
\textbf{CNN}  & 4 & 256 & $5\, (100)$ & $11\, (96)$ & $64\, (100)$ & $8\, (92)$ & $14\, (94)$ & $\textcolor{blue}{\mathbf{17\, (100)}}$ \TBstrut \\
\textbf{XGBoost}  & 3 & 128 & $4\, (100)$ & $8\, (97)$ & $128\, (100)$ & $\textcolor{blue}{\mathbf{16\, (97)}}$ & $19\, (99)$ & $28\, (99)$ \TBstrut \\
\bottomrule
\end{tabularx}

\caption{
    Same as \cref{tbl:results_main}, but where $\rtol$ is used as the cutoff value for $\Delta$CB and $\Delta$ES.
}
\label{tbl:results_main_naive}
\end{table*}

\subsection{Detailed results}
This section provides an in-depth breakdown of experiments presented in the body. Results for each problem are presented in the order they appeared in \cref{tbl:results_main,tbl:results_main}. 

In each of the following table, we report statistics for each of the following metrics:
\begin{enumerate}
    \item Succeeded: whether or not an $\rtol$-optimal point was returned.
    \item Terminated: whether or not the stopping rule kicked in prior to reaching upper limit $T$.
    \item Stopping Time: the number of function evaluations requested.
    \item Regret: latent values $\sup_{\v{x} \in \c{X}} f(\v{x}) - f(\bestx_t)$ where $\bestx_t \in \argmax_{\v{x} \in \m{X}_t} \mu_{t}(\v{x})$ is a maximizer of the posterior mean at the time of stopping; reported in $\log_{10}$ scale.
    \item Excess Regret: latent values $\sup_{\v{x} \in \c{X}} f(\v{x}) - f(\bestx_t) + \rtol$ for runs where regrets exceeded $\rtol$; reported in $\log_{10}$ scale.
\end{enumerate}

For the final three metrics, medians and interquartile ranges are shown alongside the mean. Similar to the preceding section, results for $\Delta$CB and $\Delta$ES are reported using $\rtol$ as the cutoff values to give a better picture of how these methods perform in the absence of post-hoc calibration.

\begin{table*}[h!] \centering
\begin{tabularx}{\textwidth}{ccCCCCCC}
\toprule
Metric  & Stat. & \textbf{Oracle}$^{\dagger}$ & \textbf{Budget}$^{\dagger}$ & \textbf{Acq} & $\boldsymbol{\Delta}$\textbf{CB} & $\boldsymbol{\Delta}$\textbf{ES} & \textbf{PRB} \\
\midrule
\textbf{Succeeded} & mean & 1.00 & 0.96 & 1.00 & 0.89 & 0.94 & 0.97 \\
\textbf{Terminated} & mean & 1.00 & 1.00 & 1.00 & 1.00 & 1.00 & 1.00 \\
\cline{1-8}
\multirow[c]{4}{*}{\textbf{Stopping Time}} & mean & 10.52 & 17.00 & 30.14 & 13.02 & 15.01 & 17.81 \Tstrut \\
 & 25\% & 8.00 & 17.00 & 24.00 & 10.00 & 12.00 & 14.00 \\
 & 50\% & 9.00 & 17.00 & 28.50 & 12.00 & 14.00 & 17.00 \\
 & 75\% & 12.00 & 17.00 & 36.00 & 15.00 & 17.00 & 21.00 \\
\cline{1-8}
\multirow[c]{4}{*}{\textbf{Regret}} & mean & -5.71 & -3.59 & -4.78 & -2.49 & -3.07 & -3.47 \Tstrut \\
 & 25\% & -6.07 & -4.33 & -4.99 & -2.69 & -3.47 & -3.92 \\
 & 50\% & -5.36 & -3.20 & -4.36 & -2.04 & -2.57 & -2.99 \\
 & 75\% & -4.82 & -2.30 & -4.00 & -1.44 & -2.00 & -2.36 \\
\cline{1-8}
\multirow[c]{4}{*}{\textbf{Excess Regret}} & mean & -- & -1.26 & -- & -0.69 & -0.93 & -1.15 \Tstrut \\
 & 25\% & -- & -1.32 & -- & -1.00 & -1.10 & -1.30 \\
 & 50\% & -- & -0.98 & -- & -0.86 & -0.98 & -1.03 \\
 & 75\% & -- & -0.92 & -- & -0.21 & -0.87 & -0.94 \\
\bottomrule
\end{tabularx}
\refstepcounter{table}\label{tbl:results_extended_known_GP2_noise6} \caption{Results on GP$^{{\dagger}}$ in $D=2$ dimensions with noise variance $\gamma^2=10^{-6}$.} \end{table*}

\begin{table*}[h!] \centering
\begin{tabularx}{\textwidth}{ccCCCCCC}
\toprule
Metric  & Stat. & \textbf{Oracle}$^{\dagger}$ & \textbf{Budget}$^{\dagger}$ & \textbf{Acq} & $\boldsymbol{\Delta}$\textbf{CB} & $\boldsymbol{\Delta}$\textbf{ES} & \textbf{PRB} \\
\midrule
\textbf{Succeeded} & mean & 1.00 & 0.96 & 1.00 & 1.00 & 0.91 & 0.99 \\
\textbf{Terminated} & mean & 1.00 & 1.00 & 0.76 & 0.94 & 1.00 & 1.00 \\
\cline{1-8}
\multirow[c]{4}{*}{\textbf{Stopping Time}} & mean & 10.35 & 22.00 & 77.62 & 84.80 & 18.36 & 27.06 \Tstrut \\
 & 25\% & 8.00 & 22.00 & 43.75 & 70.00 & 14.75 & 17.00 \\
 & 50\% & 10.00 & 22.00 & 78.50 & 82.00 & 18.00 & 23.00 \\
 & 75\% & 12.00 & 22.00 & 126.25 & 99.00 & 21.00 & 32.00 \\
\cline{1-8}
\multirow[c]{4}{*}{\textbf{Regret}} & mean & -4.48 & -2.73 & -3.48 & -3.34 & -2.51 & -2.66 \Tstrut \\
 & 25\% & -4.55 & -2.63 & -3.56 & -3.61 & -2.40 & -2.65 \\
 & 50\% & -3.97 & -1.88 & -2.88 & -2.87 & -1.77 & -1.91 \\
 & 75\% & -3.28 & -1.46 & -2.31 & -2.30 & -1.32 & -1.59 \\
\cline{1-8}
\multirow[c]{4}{*}{\textbf{Excess Regret}} & mean & -- & -1.59 & -- & -- & -1.54 & -0.11 \Tstrut \\
 & 25\% & -- & -1.84 & -- & -- & -1.60 & -0.11 \\
 & 50\% & -- & -1.66 & -- & -- & -1.21 & -0.11 \\
 & 75\% & -- & -1.41 & -- & -- & -1.02 & -0.11 \\
\bottomrule
\end{tabularx}
\refstepcounter{table}\label{tbl:results_extended_known_GP2_noise2} \caption{Results on GP$^{{\dagger}}$ in $D=2$ dimensions with noise variance $\gamma^2=10^{-2}$.} \end{table*}

\begin{table*}[h!] \centering
\begin{tabularx}{\textwidth}{ccCCCCCC}
\toprule
Metric  & Stat. & \textbf{Oracle}$^{\dagger}$ & \textbf{Budget}$^{\dagger}$ & \textbf{Acq} & $\boldsymbol{\Delta}$\textbf{CB} & $\boldsymbol{\Delta}$\textbf{ES} & \textbf{PRB} \\
\midrule
\textbf{Succeeded} & mean & 1.00 & 0.95 & 1.00 & 0.66 & 0.74 & 0.99 \\
\textbf{Terminated} & mean & 1.00 & 1.00 & 0.96 & 1.00 & 1.00 & 0.99 \\
\cline{1-8}
\multirow[c]{4}{*}{\textbf{Stopping Time}} & mean & 28.48 & 64.00 & 86.36 & 25.61 & 30.36 & 63.68 \Tstrut \\
 & 25\% & 17.75 & 64.00 & 69.75 & 18.00 & 22.75 & 46.00 \\
 & 50\% & 26.00 & 64.00 & 90.00 & 23.00 & 28.00 & 64.00 \\
 & 75\% & 34.00 & 64.00 & 104.00 & 30.25 & 37.00 & 78.00 \\
\cline{1-8}
\multirow[c]{4}{*}{\textbf{Regret}} & mean & -3.86 & -3.06 & -3.47 & -1.53 & -1.89 & -3.15 \Tstrut \\
 & 25\% & -4.24 & -3.56 & -3.77 & -1.98 & -2.50 & -3.49 \\
 & 50\% & -3.73 & -3.11 & -3.41 & -1.49 & -1.84 & -3.10 \\
 & 75\% & -3.33 & -2.67 & -3.00 & -0.81 & -0.90 & -2.70 \\
\cline{1-8}
\multirow[c]{4}{*}{\textbf{Excess Regret}} & mean & -- & -1.32 & -- & -0.81 & -0.87 & -2.67 \Tstrut \\
 & 25\% & -- & -1.70 & -- & -1.26 & -1.25 & -2.67 \\
 & 50\% & -- & -1.18 & -- & -0.82 & -0.85 & -2.67 \\
 & 75\% & -- & -0.62 & -- & -0.47 & -0.53 & -2.67 \\
\bottomrule
\end{tabularx}
\refstepcounter{table}\label{tbl:results_extended_known_GP4_noise6} \caption{Results on GP$^{{\dagger}}$ in $D=4$ dimensions with noise variance $\gamma^2=10^{-6}$.} \end{table*}

\begin{table*}[h!] \centering
\begin{tabularx}{\textwidth}{ccCCCCCC}
\toprule
Metric  & Stat. & \textbf{Oracle}$^{\dagger}$ & \textbf{Budget}$^{\dagger}$ & \textbf{Acq} & $\boldsymbol{\Delta}$\textbf{CB} & $\boldsymbol{\Delta}$\textbf{ES} & \textbf{PRB} \\
\midrule
\textbf{Succeeded} & mean & 1.00 & 0.95 & 0.98 & 1.00 & 0.65 & 0.96 \\
\textbf{Terminated} & mean & 1.00 & 1.00 & 1.00 & 0.39 & 1.00 & 1.00 \\
\cline{1-8}
\multirow[c]{4}{*}{\textbf{Stopping Time}} & mean & 35.48 & 94.00 & 111.38 & 235.76 & 38.16 & 90.32 \Tstrut \\
 & 25\% & 19.00 & 94.00 & 87.75 & 223.75 & 26.00 & 67.00 \\
 & 50\% & 28.50 & 94.00 & 106.50 & 256.00 & 36.00 & 86.50 \\
 & 75\% & 42.00 & 94.00 & 138.25 & 256.00 & 46.00 & 113.25 \\
\cline{1-8}
\multirow[c]{4}{*}{\textbf{Regret}} & mean & -2.62 & -2.05 & -2.09 & -2.23 & -1.34 & -2.01 \Tstrut \\
 & 25\% & -2.66 & -2.35 & -2.35 & -2.46 & -1.66 & -2.21 \\
 & 50\% & -2.36 & -1.82 & -1.87 & -1.99 & -1.25 & -1.81 \\
 & 75\% & -1.92 & -1.52 & -1.62 & -1.73 & -0.87 & -1.50 \\
\cline{1-8}
\multirow[c]{4}{*}{\textbf{Excess Regret}} & mean & -- & -1.50 & -1.72 & -- & -1.08 & -1.57 \Tstrut \\
 & 25\% & -- & -1.63 & -1.91 & -- & -1.50 & -1.92 \\
 & 50\% & -- & -1.45 & -1.72 & -- & -0.95 & -1.49 \\
 & 75\% & -- & -1.30 & -1.54 & -- & -0.73 & -1.14 \\
\bottomrule
\end{tabularx}
\refstepcounter{table}\label{tbl:results_extended_known_GP4_noise2} \caption{Results on GP$^{{\dagger}}$ in $D=4$ dimensions with noise variance $\gamma^2=10^{-2}$.} \end{table*}

\begin{table*}[h!] \centering
\begin{tabularx}{\textwidth}{ccCCCCCC}
\toprule
Metric  & Stat. & \textbf{Oracle}$^{\dagger}$ & \textbf{Budget}$^{\dagger}$ & \textbf{Acq} & $\boldsymbol{\Delta}$\textbf{CB} & $\boldsymbol{\Delta}$\textbf{ES} & \textbf{PRB} \\
\midrule
\textbf{Succeeded} & mean & 0.99 & 0.95 & 0.98 & 0.50 & 0.65 & 0.98 \\
\textbf{Terminated} & mean & 0.99 & 1.00 & 0.92 & 1.00 & 1.00 & 0.96 \\
\cline{1-8}
\multirow[c]{4}{*}{\textbf{Stopping Time}} & mean & 55.33 & 124.00 & 150.91 & 35.82 & 48.86 & 141.24 \Tstrut \\
 & 25\% & 25.00 & 124.00 & 112.25 & 24.00 & 29.75 & 97.00 \\
 & 50\% & 39.50 & 124.00 & 142.50 & 31.00 & 45.50 & 133.50 \\
 & 75\% & 77.25 & 124.00 & 191.00 & 41.25 & 63.25 & 178.00 \\
\cline{1-8}
\multirow[c]{4}{*}{\textbf{Regret}} & mean & -3.21 & -2.60 & -2.85 & -1.11 & -1.52 & -2.79 \Tstrut \\
 & 25\% & -3.71 & -3.04 & -3.17 & -1.66 & -2.13 & -3.15 \\
 & 50\% & -3.15 & -2.69 & -2.78 & -1.03 & -1.67 & -2.77 \\
 & 75\% & -2.69 & -2.26 & -2.50 & -0.50 & -0.74 & -2.47 \\
\cline{1-8}
\multirow[c]{4}{*}{\textbf{Excess Regret}} & mean & -2.74 & -1.23 & -2.21 & -0.75 & -0.83 & -2.21 \Tstrut \\
 & 25\% & -2.74 & -1.54 & -2.47 & -1.04 & -1.11 & -2.47 \\
 & 50\% & -2.74 & -1.16 & -2.21 & -0.65 & -0.76 & -2.21 \\
 & 75\% & -2.74 & -0.84 & -1.94 & -0.23 & -0.30 & -1.94 \\
\bottomrule
\end{tabularx}
\refstepcounter{table}\label{tbl:results_extended_known_GP6_noise6} \caption{Results on GP$^{{\dagger}}$ in $D=6$ dimensions with noise variance $\gamma^2=10^{-6}$.} \end{table*}

\begin{table*}[h!] \centering
\begin{tabularx}{\textwidth}{ccCCCCCC}
\toprule
Metric  & Stat. & \textbf{Oracle}$^{\dagger}$ & \textbf{Budget}$^{\dagger}$ & \textbf{Acq} & $\boldsymbol{\Delta}$\textbf{CB} & $\boldsymbol{\Delta}$\textbf{ES} & \textbf{PRB} \\
\midrule
\textbf{Succeeded} & mean & 1.00 & 0.96 & 0.96 & 1.00 & 0.34 & 1.00 \\
\textbf{Terminated} & mean & 1.00 & 1.00 & 1.00 & 0.29 & 1.00 & 1.00 \\
\cline{1-8}
\multirow[c]{4}{*}{\textbf{Stopping Time}} & mean & 77.15 & 227.00 & 183.30 & 482.08 & 51.67 & 231.73 \Tstrut \\
 & 25\% & 33.00 & 227.00 & 138.00 & 499.00 & 30.50 & 170.00 \\
 & 50\% & 64.00 & 227.00 & 181.00 & 512.00 & 45.00 & 235.00 \\
 & 75\% & 96.50 & 227.00 & 219.00 & 512.00 & 63.50 & 295.00 \\
\cline{1-8}
\multirow[c]{4}{*}{\textbf{Regret}} & mean & -2.16 & -1.71 & -1.66 & -1.98 & -0.80 & -1.77 \Tstrut \\
 & 25\% & -2.41 & -1.97 & -1.85 & -2.25 & -1.19 & -2.02 \\
 & 50\% & -2.00 & -1.70 & -1.62 & -1.91 & -0.64 & -1.76 \\
 & 75\% & -1.72 & -1.37 & -1.35 & -1.60 & -0.38 & -1.53 \\
\cline{1-8}
\multirow[c]{4}{*}{\textbf{Excess Regret}} & mean & -- & -1.51 & -1.98 & -- & -0.72 & -- \Tstrut \\
 & 25\% & -- & -1.73 & -2.69 & -- & -0.88 & -- \\
 & 50\% & -- & -1.60 & -2.13 & -- & -0.56 & -- \\
 & 75\% & -- & -1.39 & -1.42 & -- & -0.37 & -- \\
\bottomrule
\end{tabularx}
\refstepcounter{table}\label{tbl:results_extended_known_GP6_noise2} \caption{Results on GP$^{{\dagger}}$ in $D=6$ dimensions with noise variance $\gamma^2=10^{-2}$.} \end{table*}

\begin{table*}[h!] \centering
\begin{tabularx}{\textwidth}{ccCCCCCC}
\toprule
Metric  & Stat. & \textbf{Oracle}$^{\dagger}$ & \textbf{Budget}$^{\dagger}$ & \textbf{Acq} & $\boldsymbol{\Delta}$\textbf{CB} & $\boldsymbol{\Delta}$\textbf{ES} & \textbf{PRB} \\
\midrule
\textbf{Succeeded} & mean & 1.00 & 0.95 & 1.00 & 0.30 & 0.41 & 0.88 \\
\textbf{Terminated} & mean & 1.00 & 1.00 & 0.92 & 1.00 & 1.00 & 0.98 \\
\cline{1-8}
\multirow[c]{4}{*}{\textbf{Stopping Time}} & mean & 38.75 & 79.00 & 87.94 & 19.41 & 24.25 & 62.57 \Tstrut \\
 & 25\% & 21.50 & 79.00 & 70.75 & 15.75 & 18.00 & 48.25 \\
 & 50\% & 34.00 & 79.00 & 91.50 & 18.00 & 22.00 & 61.00 \\
 & 75\% & 50.50 & 79.00 & 102.00 & 22.00 & 29.00 & 75.25 \\
\cline{1-8}
\multirow[c]{4}{*}{\textbf{Regret}} & mean & -3.80 & -3.08 & -3.41 & -0.70 & -1.01 & -2.70 \Tstrut \\
 & 25\% & -4.17 & -3.59 & -3.70 & -1.25 & -1.79 & -3.27 \\
 & 50\% & -3.66 & -3.13 & -3.27 & -0.34 & -0.59 & -2.85 \\
 & 75\% & -3.15 & -2.58 & -2.95 & -0.08 & -0.17 & -2.24 \\
\cline{1-8}
\multirow[c]{4}{*}{\textbf{Excess Regret}} & mean & -- & -1.34 & -- & -0.37 & -0.43 & -0.87 \Tstrut \\
 & 25\% & -- & -1.70 & -- & -0.53 & -0.59 & -1.41 \\
 & 50\% & -- & -1.47 & -- & -0.25 & -0.26 & -0.66 \\
 & 75\% & -- & -0.88 & -- & -0.04 & -0.07 & -0.49 \\
\bottomrule
\end{tabularx}
\refstepcounter{table}\label{tbl:results_extended_GP4_noise6} \caption{Results on GP in $D=4$ dimensions with noise variance $\gamma^2=10^{-6}$.} \end{table*}

\begin{table*}[h!] \centering
\begin{tabularx}{\textwidth}{ccCCCCCC}
\toprule
Metric  & Stat. & \textbf{Oracle}$^{\dagger}$ & \textbf{Budget}$^{\dagger}$ & \textbf{Acq} & $\boldsymbol{\Delta}$\textbf{CB} & $\boldsymbol{\Delta}$\textbf{ES} & \textbf{PRB} \\
\midrule
\textbf{Succeeded} & mean & 1.00 & 0.95 & 0.97 & 0.80 & 0.22 & 0.92 \\
\textbf{Terminated} & mean & 1.00 & 1.00 & 0.99 & 0.60 & 1.00 & 0.99 \\
\cline{1-8}
\multirow[c]{4}{*}{\textbf{Stopping Time}} & mean & 58.53 & 157.00 & 130.53 & 179.00 & 28.60 & 100.40 \Tstrut \\
 & 25\% & 29.75 & 157.00 & 99.50 & 122.25 & 20.00 & 73.75 \\
 & 50\% & 50.00 & 157.00 & 128.50 & 224.00 & 27.00 & 100.50 \\
 & 75\% & 74.75 & 157.00 & 154.50 & 256.00 & 35.00 & 129.50 \\
\cline{1-8}
\multirow[c]{4}{*}{\textbf{Regret}} & mean & -2.59 & -2.03 & -2.05 & -1.78 & -0.52 & -1.82 \Tstrut \\
 & 25\% & -2.67 & -2.24 & -2.26 & -2.29 & -0.90 & -2.07 \\
 & 50\% & -2.32 & -1.79 & -1.79 & -1.76 & -0.33 & -1.69 \\
 & 75\% & -1.93 & -1.51 & -1.50 & -1.20 & -0.09 & -1.39 \\
\cline{1-8}
\multirow[c]{4}{*}{\textbf{Excess Regret}} & mean & -- & -1.65 & -2.09 & -0.48 & -0.45 & -1.62 \Tstrut \\
 & 25\% & -- & -2.83 & -2.40 & -0.84 & -0.74 & -1.78 \\
 & 50\% & -- & -1.39 & -1.76 & -0.31 & -0.27 & -1.55 \\
 & 75\% & -- & -0.63 & -1.62 & -0.01 & -0.06 & -1.31 \\
\bottomrule
\end{tabularx}
\refstepcounter{table}\label{tbl:results_extended_GP4_noise2} \caption{Results on GP in $D=4$ dimensions with noise variance $\gamma^2=10^{-2}$.} \end{table*}

\begin{table*}[h!] \centering
\begin{tabularx}{\textwidth}{ccCCCCCC}
\toprule
Metric  & Stat. & \textbf{Oracle}$^{\dagger}$ & \textbf{Budget}$^{\dagger}$ & \textbf{Acq} & $\boldsymbol{\Delta}$\textbf{CB} & $\boldsymbol{\Delta}$\textbf{ES} & \textbf{PRB} \\
\midrule
\textbf{Succeeded} & mean & 1.00 & 0.95 & 1.00 & 0.99 & 1.00 & 0.99 \\
\textbf{Terminated} & mean & 1.00 & 1.00 & 0.00 & 1.00 & 1.00 & 1.00 \\
\cline{1-8}
\multirow[c]{4}{*}{\textbf{Stopping Time}} & mean & 17.40 & 25.00 & 64.00 & 29.39 & 31.27 & 32.17 \Tstrut \\
 & 25\% & 13.00 & 25.00 & 64.00 & 27.00 & 28.00 & 31.00 \\
 & 50\% & 18.00 & 25.00 & 64.00 & 31.00 & 32.00 & 33.00 \\
 & 75\% & 21.25 & 25.00 & 64.00 & 32.00 & 34.00 & 35.00 \\
\cline{1-8}
\multirow[c]{4}{*}{\textbf{Regret}} & mean & -6.27 & -2.14 & -5.93 & -2.79 & -3.00 & -3.10 \Tstrut \\
 & 25\% & -6.41 & -2.79 & -6.23 & -3.08 & -3.22 & -3.44 \\
 & 50\% & -6.34 & -1.97 & -5.99 & -2.75 & -2.95 & -3.07 \\
 & 75\% & -6.17 & -1.45 & -5.69 & -2.36 & -2.61 & -2.72 \\
\cline{1-8}
\multirow[c]{4}{*}{\textbf{Excess Regret}} & mean & -- & -1.49 & -- & 0.16 & -- & 0.16 \Tstrut \\
 & 25\% & -- & -1.62 & -- & 0.16 & -- & 0.16 \\
 & 50\% & -- & -1.54 & -- & 0.16 & -- & 0.16 \\
 & 75\% & -- & -1.15 & -- & 0.16 & -- & 0.16 \\
\bottomrule
\end{tabularx}
\refstepcounter{table}\label{tbl:results_extended_Branin|2} \caption{Results on Branin in $D=2$ dimensions.} \end{table*}

\begin{table*}[h!] \centering
\begin{tabularx}{\textwidth}{ccCCCCCC}
\toprule
Metric  & Stat. & \textbf{Oracle}$^{\dagger}$ & \textbf{Budget}$^{\dagger}$ & \textbf{Acq} & $\boldsymbol{\Delta}$\textbf{CB} & $\boldsymbol{\Delta}$\textbf{ES} & \textbf{PRB} \\
\midrule
\textbf{Succeeded} & mean & 1.00 & 0.96 & 1.00 & 0.83 & 0.84 & 1.00 \\
\textbf{Terminated} & mean & 1.00 & 1.00 & 1.00 & 1.00 & 1.00 & 1.00 \\
\cline{1-8}
\multirow[c]{4}{*}{\textbf{Stopping Time}} & mean & 13.89 & 22.00 & 29.00 & 15.06 & 16.93 & 21.34 \Tstrut \\
 & 25\% & 11.00 & 22.00 & 24.00 & 14.00 & 16.00 & 17.00 \\
 & 50\% & 13.00 & 22.00 & 26.00 & 15.00 & 17.00 & 19.00 \\
 & 75\% & 15.25 & 22.00 & 30.00 & 16.00 & 18.00 & 21.00 \\
\cline{1-8}
\multirow[c]{4}{*}{\textbf{Regret}} & mean & -6.59 & -3.59 & -4.71 & -1.78 & -2.53 & -3.39 \Tstrut \\
 & 25\% & -6.66 & -4.39 & -5.06 & -2.20 & -3.32 & -3.81 \\
 & 50\% & -6.64 & -3.81 & -4.63 & -1.98 & -2.77 & -3.41 \\
 & 75\% & -6.58 & -3.32 & -4.23 & -1.61 & -2.38 & -2.90 \\
\cline{1-8}
\multirow[c]{4}{*}{\textbf{Excess Regret}} & mean & -- & -0.61 & -- & -0.26 & -0.08 & -- \Tstrut \\
 & 25\% & -- & -0.80 & -- & -0.17 & -0.17 & -- \\
 & 50\% & -- & -0.37 & -- & -0.17 & -0.17 & -- \\
 & 75\% & -- & -0.18 & -- & -0.17 & -0.17 & -- \\
\bottomrule
\end{tabularx}
\refstepcounter{table}\label{tbl:results_extended_Hartmann|3} \caption{Results on Hartmann in $D=3$ dimensions.} \end{table*}

\begin{table*}[h!] \centering
\begin{tabularx}{\textwidth}{ccCCCCCC}
\toprule
Metric  & Stat. & \textbf{Oracle}$^{\dagger}$ & \textbf{Budget}$^{\dagger}$ & \textbf{Acq} & $\boldsymbol{\Delta}$\textbf{CB} & $\boldsymbol{\Delta}$\textbf{ES} & \textbf{PRB} \\
\midrule
\textbf{Succeeded} & mean & 0.67 & 0.67 & 0.67 & 0.46 & 0.56 & 0.64 \\
\textbf{Terminated} & mean & 0.67 & 0.00 & 1.00 & 1.00 & 1.00 & 1.00 \\
\cline{1-8}
\multirow[c]{4}{*}{\textbf{Stopping Time}} & mean & 104.85 & 256.00 & 41.30 & 23.15 & 29.43 & 40.52 \Tstrut \\
 & 25\% & 25.75 & 256.00 & 36.75 & 18.00 & 27.00 & 37.00 \\
 & 50\% & 35.00 & 256.00 & 39.50 & 26.00 & 30.00 & 40.00 \\
 & 75\% & 256.00 & 256.00 & 42.00 & 31.00 & 35.00 & 43.00 \\
\cline{1-8}
\multirow[c]{4}{*}{\textbf{Regret}} & mean & -4.12 & -4.12 & -2.44 & -0.91 & -1.56 & -2.30 \Tstrut \\
 & 25\% & -5.70 & -5.70 & -3.31 & -1.75 & -2.48 & -3.29 \\
 & 50\% & -5.69 & -5.69 & -2.88 & -0.85 & -1.98 & -2.82 \\
 & 75\% & -0.92 & -0.92 & -0.92 & 0.03 & -0.84 & -0.92 \\
\cline{1-8}
\multirow[c]{4}{*}{\textbf{Excess Regret}} & mean & -1.72 & -1.72 & -1.70 & -0.36 & -0.76 & -1.50 \Tstrut \\
 & 25\% & -1.72 & -1.72 & -1.71 & -1.09 & -1.52 & -1.71 \\
 & 50\% & -1.72 & -1.72 & -1.71 & -0.59 & -1.13 & -1.70 \\
 & 75\% & -1.72 & -1.72 & -1.69 & 0.47 & 0.42 & -1.67 \\
\bottomrule
\end{tabularx}
\refstepcounter{table}\label{tbl:results_extended_Hartmann6} \caption{Results on Hartmann in $D=6$ dimensions.} \end{table*}

\begin{table*}[h!] \centering
\begin{tabularx}{\textwidth}{ccCCCCCC}
\toprule
Metric  & Stat. & \textbf{Oracle}$^{\dagger}$ & \textbf{Budget}$^{\dagger}$ & \textbf{Acq} & $\boldsymbol{\Delta}$\textbf{CB} & $\boldsymbol{\Delta}$\textbf{ES} & \textbf{PRB} \\
\midrule
\textbf{Succeeded} & mean & 1.00 & 0.95 & 1.00 & 0.99 & 1.00 & 1.00 \\
\textbf{Terminated} & mean & 1.00 & 1.00 & 1.00 & 1.00 & 1.00 & 1.00 \\
\cline{1-8}
\multirow[c]{4}{*}{\textbf{Stopping Time}} & mean & 32.78 & 46.00 & 95.28 & 67.05 & 71.66 & 84.17 \Tstrut \\
 & 25\% & 26.00 & 46.00 & 92.00 & 66.00 & 69.00 & 81.00 \\
 & 50\% & 33.00 & 46.00 & 95.00 & 68.00 & 71.00 & 84.00 \\
 & 75\% & 41.00 & 46.00 & 99.00 & 71.00 & 74.00 & 88.00 \\
\cline{1-8}
\multirow[c]{4}{*}{\textbf{Regret}} & mean & -9.00 & -4.93 & -9.00 & -8.38 & -8.85 & -9.00 \Tstrut \\
 & 25\% & -9.00 & -5.00 & -9.00 & -9.00 & -9.00 & -9.00 \\
 & 50\% & -9.00 & -4.60 & -9.00 & -9.00 & -9.00 & -9.00 \\
 & 75\% & -9.00 & -4.28 & -9.00 & -9.00 & -9.00 & -9.00 \\
\cline{1-8}
\multirow[c]{4}{*}{\textbf{Excess Regret}} & mean & -- & -5.28 & -- & -3.27 & -- & -- \Tstrut \\
 & 25\% & -- & -5.35 & -- & -3.27 & -- & -- \\
 & 50\% & -- & -5.23 & -- & -3.27 & -- & -- \\
 & 75\% & -- & -4.60 & -- & -3.27 & -- & -- \\
\bottomrule
\end{tabularx}
\refstepcounter{table}\label{tbl:results_extended_Rosenbrock|4} \caption{Results on Rosenbrock in $D=4$ dimensions.} \end{table*}

\begin{table*}[h!] \centering
\begin{tabularx}{\textwidth}{ccCCCCCC}
\toprule
Metric  & Stat. & \textbf{Oracle}$^{\dagger}$ & \textbf{Budget}$^{\dagger}$ & \textbf{Acq} & $\boldsymbol{\Delta}$\textbf{CB} & $\boldsymbol{\Delta}$\textbf{ES} & \textbf{PRB} \\
\midrule
\textbf{Succeeded} & mean & 1.00 & 0.96 & 1.00 & 0.92 & 0.94 & 1.00 \\
\textbf{Terminated} & mean & 1.00 & 1.00 & 0.00 & 1.00 & 0.98 & 0.96 \\
\cline{1-8}
\multirow[c]{4}{*}{\textbf{Stopping Time}} & mean & 4.98 & 11.00 & 64.00 & 11.55 & 20.37 & 24.30 \Tstrut \\
 & 25\% & 2.00 & 11.00 & 64.00 & 6.00 & 11.00 & 10.25 \\
 & 50\% & 4.00 & 11.00 & 64.00 & 8.00 & 14.00 & 17.00 \\
 & 75\% & 7.00 & 11.00 & 64.00 & 16.00 & 26.00 & 33.25 \\
\cline{1-8}
\multirow[c]{4}{*}{\textbf{Regret}} & mean & -5.05 & -2.89 & -4.22 & -2.75 & -3.18 & -3.47 \Tstrut \\
 & 25\% & -9.00 & -2.92 & -3.52 & -2.92 & -3.15 & -3.28 \\
 & 50\% & -3.52 & -2.80 & -3.30 & -2.72 & -2.96 & -3.05 \\
 & 75\% & -3.40 & -2.59 & -3.15 & -2.55 & -2.74 & -2.83 \\
\cline{1-8}
\multirow[c]{4}{*}{\textbf{Excess Regret}} & mean & -- & -2.32 & -- & -2.68 & -2.87 & -- \Tstrut \\
 & 25\% & -- & -2.39 & -- & -2.93 & -3.34 & -- \\
 & 50\% & -- & -2.32 & -- & -2.66 & -2.68 & -- \\
 & 75\% & -- & -2.24 & -- & -2.42 & -2.31 & -- \\
\bottomrule
\end{tabularx}
\refstepcounter{table}\label{tbl:results_extended_CNN|4} \caption{Results on CNN in $D=4$ dimensions.} \end{table*}

\begin{table*}[h!] \centering
\begin{tabularx}{\textwidth}{ccCCCCCC}
\toprule
Metric  & Stat. & \textbf{Oracle}$^{\dagger}$ & \textbf{Budget}$^{\dagger}$ & \textbf{Acq} & $\boldsymbol{\Delta}$\textbf{CB} & $\boldsymbol{\Delta}$\textbf{ES} & \textbf{PRB} \\
\midrule
\textbf{Succeeded} & mean & 1.00 & 0.97 & 1.00 & 0.97 & 0.99 & 0.99 \\
\textbf{Terminated} & mean & 1.00 & 1.00 & 0.21 & 1.00 & 1.00 & 1.00 \\
\cline{1-8}
\multirow[c]{4}{*}{\textbf{Stopping Time}} & mean & 3.74 & 8.00 & 121.73 & 16.90 & 19.57 & 28.51 \Tstrut \\
 & 25\% & 2.00 & 8.00 & 128.00 & 11.00 & 13.00 & 22.50 \\
 & 50\% & 3.00 & 8.00 & 128.00 & 16.00 & 19.00 & 28.00 \\
 & 75\% & 6.00 & 8.00 & 128.00 & 21.00 & 23.75 & 34.50 \\
\cline{1-8}
\multirow[c]{4}{*}{\textbf{Regret}} & mean & -8.52 & -2.83 & -3.79 & -3.25 & -3.34 & -3.59 \Tstrut \\
 & 25\% & -9.00 & -2.82 & -3.61 & -3.21 & -3.31 & -3.31 \\
 & 50\% & -9.00 & -2.66 & -3.31 & -2.83 & -2.87 & -3.07 \\
 & 75\% & -9.00 & -2.48 & -3.01 & -2.66 & -2.62 & -2.83 \\
\cline{1-8}
\multirow[c]{4}{*}{\textbf{Excess Regret}} & mean & -- & -2.06 & -- & -2.39 & -2.64 & -2.64 \Tstrut \\
 & 25\% & -- & -2.19 & -- & -2.69 & -2.64 & -2.64 \\
 & 50\% & -- & -1.94 & -- & -2.64 & -2.64 & -2.64 \\
 & 75\% & -- & -1.86 & -- & -2.21 & -2.64 & -2.64 \\
\bottomrule
\end{tabularx}
\refstepcounter{table}\label{tbl:results_extended_XGBoost|3} \caption{Results on XGBoost in $D=3$ dimensions.} \end{table*}

\clearpage
\section*{NeurIPS Paper Checklist}
\begin{enumerate}
\item {\bf Claims}
    \item[] Question: Do the main claims made in the abstract and introduction accurately reflect the paper's contributions and scope?
    \item[] Answer: \answerYes{}
    \item[] Justification: All claims are either discussed in the body or in \cref{sec:proofs}.
    \item[] Guidelines:
    \begin{itemize}
        \item The answer NA means that the abstract and introduction do not include the claims made in the paper.
        \item The abstract and/or introduction should clearly state the claims made, including the contributions made in the paper and important assumptions and limitations. A No or NA answer to this question will not be perceived well by the reviewers. 
        \item The claims made should match theoretical and experimental results, and reflect how much the results can be expected to generalize to other settings. 
        \item It is fine to include aspirational goals as motivation as long as it is clear that these goals are not attained by the paper. 
    \end{itemize}

\item {\bf Limitations}
    \item[] Question: Does the paper discuss the limitations of the work performed by the authors?
    \item[] Answer: \answerYes{} 
    \item[] Justification: The primary limitation of the proposed is that it relies on an underlying model being well-calibrated. This issue is clearly discussed at prominent locations in the text, such as the introduction and experiments section.
    \item[] Guidelines:
    \begin{itemize}
        \item The answer NA means that the paper has no limitation while the answer No means that the paper has limitations, but those are not discussed in the paper. 
        \item The authors are encouraged to create a separate "Limitations" section in their paper.
        \item The paper should point out any strong assumptions and how robust the results are to violations of these assumptions (e.g., independence assumptions, noiseless settings, model well-specification, asymptotic approximations only holding locally). The authors should reflect on how these assumptions might be violated in practice and what the implications would be.
        \item The authors should reflect on the scope of the claims made, e.g., if the approach was only tested on a few datasets or with a few runs. In general, empirical results often depend on implicit assumptions, which should be articulated.
        \item The authors should reflect on the factors that influence the performance of the approach. For example, a facial recognition algorithm may perform poorly when image resolution is low or images are taken in low lighting. Or a speech-to-text system might not be used reliably to provide closed captions for online lectures because it fails to handle technical jargon.
        \item The authors should discuss the computational efficiency of the proposed algorithms and how they scale with dataset size.
        \item If applicable, the authors should discuss possible limitations of their approach to address problems of privacy and fairness.
        \item While the authors might fear that complete honesty about limitations might be used by reviewers as grounds for rejection, a worse outcome might be that reviewers discover limitations that aren't acknowledged in the paper. The authors should use their best judgment and recognize that individual actions in favor of transparency play an important role in developing norms that preserve the integrity of the community. Reviewers will be specifically instructed to not penalize honesty concerning limitations.
    \end{itemize}

\item {\bf Theory Assumptions and Proofs}
    \item[] Question: For each theoretical result, does the paper provide the full set of assumptions and a complete (and correct) proof?
    \item[] Answer: \answerYes{} 
    \item[] Justification: Assumptions are clearly stated throughout the paper (e.g., in \cref{sec:analysis}) and detailed proofs are provided in \cref{sec:proofs}.
    \item[] Guidelines:
    \begin{itemize}
        \item The answer NA means that the paper does not include theoretical results. 
        \item All the theorems, formulas, and proofs in the paper should be numbered and cross-referenced.
        \item All assumptions should be clearly stated or referenced in the statement of any theorems.
        \item The proofs can either appear in the main paper or the supplemental material, but if they appear in the supplemental material, the authors are encouraged to provide a short proof sketch to provide intuition. 
        \item Inversely, any informal proof provided in the core of the paper should be complemented by formal proofs provided in appendix or supplemental material.
        \item Theorems and Lemmas that the proof relies upon should be properly referenced. 
    \end{itemize}

\item {\bf Experimental Result Reproducibility}
    \item[] Question: Does the paper fully disclose all the information needed to reproduce the main experimental results of the paper to the extent that it affects the main claims and/or conclusions of the paper (regardless of whether the code and data are provided or not)?
    \item[] Answer: \answerYes{} 
    \item[] Justification: Details are provided in the text or in supplementary material and code is available online.
    \item[] Guidelines:
    \begin{itemize}
        \item The answer NA means that the paper does not include experiments.
        \item If the paper includes experiments, a No answer to this question will not be perceived well by the reviewers: Making the paper reproducible is important, regardless of whether the code and data are provided or not.
        \item If the contribution is a dataset and/or model, the authors should describe the steps taken to make their results reproducible or verifiable. 
        \item Depending on the contribution, reproducibility can be accomplished in various ways. For example, if the contribution is a novel architecture, describing the architecture fully might suffice, or if the contribution is a specific model and empirical evaluation, it may be necessary to either make it possible for others to replicate the model with the same dataset, or provide access to the model. In general. releasing code and data is often one good way to accomplish this, but reproducibility can also be provided via detailed instructions for how to replicate the results, access to a hosted model (e.g., in the case of a large language model), releasing of a model checkpoint, or other means that are appropriate to the research performed.
        \item While NeurIPS does not require releasing code, the conference does require all submissions to provide some reasonable avenue for reproducibility, which may depend on the nature of the contribution. For example
        \begin{enumerate}
            \item If the contribution is primarily a new algorithm, the paper should make it clear how to reproduce that algorithm.
            \item If the contribution is primarily a new model architecture, the paper should describe the architecture clearly and fully.
            \item If the contribution is a new model (e.g., a large language model), then there should either be a way to access this model for reproducing the results or a way to reproduce the model (e.g., with an open-source dataset or instructions for how to construct the dataset).
            \item We recognize that reproducibility may be tricky in some cases, in which case authors are welcome to describe the particular way they provide for reproducibility. In the case of closed-source models, it may be that access to the model is limited in some way (e.g., to registered users), but it should be possible for other researchers to have some path to reproducing or verifying the results.
        \end{enumerate}
    \end{itemize}

\item {\bf Open access to data and code}
    \item[] Question: Does the paper provide open access to the data and code, with sufficient instructions to faithfully reproduce the main experimental results, as described in supplemental material?
    \item[] Answer: \answerYes{} 
    \item[] Justification: 
    Code is available online at  \url{https://github.com/j-wilson/trieste_stopping}.
    \item[] Guidelines:
    \begin{itemize}
        \item The answer NA means that paper does not include experiments requiring code.
        \item Please see the NeurIPS code and data submission guidelines (\url{https://nips.cc/public/guides/CodeSubmissionPolicy}) for more details.
        \item While we encourage the release of code and data, we understand that this might not be possible, so “No” is an acceptable answer. Papers cannot be rejected simply for not including code, unless this is central to the contribution (e.g., for a new open-source benchmark).
        \item The instructions should contain the exact command and environment needed to run to reproduce the results. See the NeurIPS code and data submission guidelines (\url{https://nips.cc/public/guides/CodeSubmissionPolicy}) for more details.
        \item The authors should provide instructions on data access and preparation, including how to access the raw data, preprocessed data, intermediate data, and generated data, etc.
        \item The authors should provide scripts to reproduce all experimental results for the new proposed method and baselines. If only a subset of experiments are reproducible, they should state which ones are omitted from the script and why.
        \item At submission time, to preserve anonymity, the authors should release anonymized versions (if applicable).
        \item Providing as much information as possible in supplemental material (appended to the paper) is recommended, but including URLs to data and code is permitted.
    \end{itemize}

\item {\bf Experimental Setting/Details}
    \item[] Question: Does the paper specify all the training and test details (e.g., data splits, hyperparameters, how they were chosen, type of optimizer, etc.) necessary to understand the results?
    \item[] Answer: \answerYes{} 
    \item[] Justification: Details are provided in the text or in supplementary material.
    \item[] Guidelines:
    \begin{itemize}
        \item The answer NA means that the paper does not include experiments.
        \item The experimental setting should be presented in the core of the paper to a level of detail that is necessary to appreciate the results and make sense of them.
        \item The full details can be provided either with the code, in appendix, or as supplemental material.
    \end{itemize}

\item {\bf Experiment Statistical Significance}
    \item[] Question: Does the paper report error bars suitably and correctly defined or other appropriate information about the statistical significance of the experiments?
    \item[] Answer: \answerYes{} 
    \item[] Justification: Extended results are reported in \cref{sec:extended_results}.
    \item[] Guidelines:
    \begin{itemize}
        \item The answer NA means that the paper does not include experiments.
        \item The authors should answer "Yes" if the results are accompanied by error bars, confidence intervals, or statistical significance tests, at least for the experiments that support the main claims of the paper.
        \item The factors of variability that the error bars are capturing should be clearly stated (for example, train/test split, initialization, random drawing of some parameter, or overall run with given experimental conditions).
        \item The method for calculating the error bars should be explained (closed form formula, call to a library function, bootstrap, etc.)
        \item The assumptions made should be given (e.g., Normally distributed errors).
        \item It should be clear whether the error bar is the standard deviation or the standard error of the mean.
        \item It is OK to report 1-sigma error bars, but one should state it. The authors should preferably report a 2-sigma error bar than state that they have a 96\% CI, if the hypothesis of Normality of errors is not verified.
        \item For asymmetric distributions, the authors should be careful not to show in tables or figures symmetric error bars that would yield results that are out of range (e.g. negative error rates).
        \item If error bars are reported in tables or plots, The authors should explain in the text how they were calculated and reference the corresponding figures or tables in the text.
    \end{itemize}

\item {\bf Experiments Compute Resources}
    \item[] Question: For each experiment, does the paper provide sufficient information on the computer resources (type of compute workers, memory, time of execution) needed to reproduce the experiments?
    \item[] Answer: \answerNo{}.
    \item[] Justification: We ran thousands of experiments on mixed hardware at different points in time and did not keep track.
    \item[] Guidelines:
    \begin{itemize}
        \item The answer NA means that the paper does not include experiments.
        \item The paper should indicate the type of compute workers CPU or GPU, internal cluster, or cloud provider, including relevant memory and storage.
        \item The paper should provide the amount of compute required for each of the individual experimental runs as well as estimate the total compute. 
        \item The paper should disclose whether the full research project required more compute than the experiments reported in the paper (e.g., preliminary or failed experiments that didn't make it into the paper). 
    \end{itemize}
    
\item {\bf Code Of Ethics}
    \item[] Question: Does the research conducted in the paper conform, in every respect, with the NeurIPS Code of Ethics \url{https://neurips.cc/public/EthicsGuidelines}?
    \item[] Answer: \answerYes{} 
    \item[] Justification: This question is not particular relevant to our submission, since our focus is on making existing optimization algorithms, e.g., more cost-efficient.
    \item[] Guidelines:
    \begin{itemize}
        \item The answer NA means that the authors have not reviewed the NeurIPS Code of Ethics.
        \item If the authors answer No, they should explain the special circumstances that require a deviation from the Code of Ethics.
        \item The authors should make sure to preserve anonymity (e.g., if there is a special consideration due to laws or regulations in their jurisdiction).
    \end{itemize}

\item {\bf Broader Impacts}
    \item[] Question: Does the paper discuss both potential positive societal impacts and negative societal impacts of the work performed?
    \item[] Answer: \answerYes{} 
    \item[] Justification: This question is not particular relevant to our submission, since our focus is on making existing optimization algorithms, e.g., more cost-efficient.
    \item[] Guidelines:
    \begin{itemize}
        \item The answer NA means that there is no societal impact of the work performed.
        \item If the authors answer NA or No, they should explain why their work has no societal impact or why the paper does not address societal impact.
        \item Examples of negative societal impacts include potential malicious or unintended uses (e.g., disinformation, generating fake profiles, surveillance), fairness considerations (e.g., deployment of technologies that could make decisions that unfairly impact specific groups), privacy considerations, and security considerations.
        \item The conference expects that many papers will be foundational research and not tied to particular applications, let alone deployments. However, if there is a direct path to any negative applications, the authors should point it out. For example, it is legitimate to point out that an improvement in the quality of generative models could be used to generate deepfakes for disinformation. On the other hand, it is not needed to point out that a generic algorithm for optimizing neural networks could enable people to train models that generate Deepfakes faster.
        \item The authors should consider possible harms that could arise when the technology is being used as intended and functioning correctly, harms that could arise when the technology is being used as intended but gives incorrect results, and harms following from (intentional or unintentional) misuse of the technology.
        \item If there are negative societal impacts, the authors could also discuss possible mitigation strategies (e.g., gated release of models, providing defenses in addition to attacks, mechanisms for monitoring misuse, mechanisms to monitor how a system learns from feedback over time, improving the efficiency and accessibility of ML).
    \end{itemize}
    
\item {\bf Safeguards}
    \item[] Question: Does the paper describe safeguards that have been put in place for responsible release of data or models that have a high risk for misuse (e.g., pretrained language models, image generators, or scraped datasets)?
    \item[] Answer: \answerNA{} 
    \item[] Justification: The proposed methods do not lend themselves to this type of misuse.
    \item[] Guidelines:
    \begin{itemize}
        \item The answer NA means that the paper poses no such risks.
        \item Released models that have a high risk for misuse or dual-use should be released with necessary safeguards to allow for controlled use of the model, for example by requiring that users adhere to usage guidelines or restrictions to access the model or implementing safety filters. 
        \item Datasets that have been scraped from the Internet could pose safety risks. The authors should describe how they avoided releasing unsafe images.
        \item We recognize that providing effective safeguards is challenging, and many papers do not require this, but we encourage authors to take this into account and make a best faith effort.
    \end{itemize}

\item {\bf Licenses for existing assets}
    \item[] Question: Are the creators or original owners of assets (e.g., code, data, models), used in the paper, properly credited and are the license and terms of use explicitly mentioned and properly respected?
    \item[] Answer: \answerYes{} 
    \item[] Justification: The content of this paper was either created by the authors for use herein. Borrowed material has been cited.
    \item[] Guidelines:
    \begin{itemize}
        \item The answer NA means that the paper does not use existing assets.
        \item The authors should cite the original paper that produced the code package or dataset.
        \item The authors should state which version of the asset is used and, if possible, include a URL.
        \item The name of the license (e.g., CC-BY 4.0) should be included for each asset.
        \item For scraped data from a particular source (e.g., website), the copyright and terms of service of that source should be provided.
        \item If assets are released, the license, copyright information, and terms of use in the package should be provided. For popular datasets, \url{paperswithcode.com/datasets} has curated licenses for some datasets. Their licensing guide can help determine the license of a dataset.
        \item For existing datasets that are re-packaged, both the original license and the license of the derived asset (if it has changed) should be provided.
        \item If this information is not available online, the authors are encouraged to reach out to the asset's creators.
    \end{itemize}

\item {\bf New Assets}
    \item[] Question: Are new assets introduced in the paper well documented and is the documentation provided alongside the assets?
    \item[] Answer: \answerNA{} 
    \item[] Justification: No assets have been released at this time.
    \item[] Guidelines:
    \begin{itemize}
        \item The answer NA means that the paper does not release new assets.
        \item Researchers should communicate the details of the dataset/code/model as part of their submissions via structured templates. This includes details about training, license, limitations, etc. 
        \item The paper should discuss whether and how consent was obtained from people whose asset is used.
        \item At submission time, remember to anonymize your assets (if applicable). You can either create an anonymized URL or include an anonymized zip file.
    \end{itemize}

\item {\bf Crowdsourcing and Research with Human Subjects}
    \item[] Question: For crowdsourcing experiments and research with human subjects, does the paper include the full text of instructions given to participants and screenshots, if applicable, as well as details about compensation (if any)? 
    \item[] Answer: \answerNA{} 
    \item[] Justification: Not relevant.
    \item[] Guidelines:
    \begin{itemize}
        \item The answer NA means that the paper does not involve crowdsourcing nor research with human subjects.
        \item Including this information in the supplemental material is fine, but if the main contribution of the paper involves human subjects, then as much detail as possible should be included in the main paper. 
        \item According to the NeurIPS Code of Ethics, workers involved in data collection, curation, or other labor should be paid at least the minimum wage in the country of the data collector. 
    \end{itemize}

\item {\bf Institutional Review Board (IRB) Approvals or Equivalent for Research with Human Subjects}
    \item[] Question: Does the paper describe potential risks incurred by study participants, whether such risks were disclosed to the subjects, and whether Institutional Review Board (IRB) approvals (or an equivalent approval/review based on the requirements of your country or institution) were obtained?
    \item[] Answer: \answerNA{} 
    \item[] Justification: Not relevant.
    \item[] Guidelines:
    \begin{itemize}
        \item The answer NA means that the paper does not involve crowdsourcing nor research with human subjects.
        \item Depending on the country in which research is conducted, IRB approval (or equivalent) may be required for any human subjects research. If you obtained IRB approval, you should clearly state this in the paper. 
        \item We recognize that the procedures for this may vary significantly between institutions and locations, and we expect authors to adhere to the NeurIPS Code of Ethics and the guidelines for their institution. 
        \item For initial submissions, do not include any information that would break anonymity (if applicable), such as the institution conducting the review.
    \end{itemize}

\end{enumerate}

\end{document}